    \definecolor{plum}  {rgb}{.4,0,.4}
    \definecolor{BrickRed} {rgb}{0.6,0,0}
\newcommand{\indep}{\perp \!\!\! \perp}
\newcommand{\diff}{\mathrm{d}}
\newcommand{\e}{\mathrm{e}}
\newcommand{\E}[1] {\mathbf { E } \left[ #1 \right] }
\def\deq{:=}
\def\bd#1{\boldsymbol{#1}}
\def\bP{{\mathbf P}}
\def\Reals{\mathbb{R}}
\def\Naturals{\mathbb{N}}
\def\1{{\mathbf 1}}
\def\eps{\varepsilon}
\def\E{\mathbf{E}}
\def\Pr{\mathbf{P}}
    \def\ddefloop#1{\ifx\ddefloop#1\else\ddef{#1}\expandafter\ddefloop\fi}
    \def\ddef#1{\expandafter\def\csname b#1\endcsname{\ensuremath{\mathbb{#1}}}}
    \def\ddef#1{\expandafter\def\csname bf#1\endcsname{\ensuremath{\mathbf{#1}}}}
    \def\ddef#1{\expandafter\def\csname c#1\endcsname{\ensuremath{\mathcal{#1}}}}
    \def\ddef#1{\expandafter\def\csname s#1\endcsname{\ensuremath{\mathsf{#1}}}}
    \def\ddef#1{\expandafter\def\csname e#1\endcsname{\ensuremath{\mathscr{#1}}}}
    \def\ddef#1{\expandafter\def\csname f#1\endcsname{\ensuremath{\mathfrak{#1}}}}
	\def\Samp{S}
	\def\samp{s}
    \newtheorem{theorem}{Theorem}
    \newtheorem{lemma}{Lemma}
    \newtheorem{proposition}{Proposition}
    \newtheorem{corollary}{Corollary}
\begin{document}
	
	\title{A Unified Framework for Information-Theoretic\\
	Generalization Bounds}
	
	\author{Yifeng Chu \qquad Maxim Raginsky\\
	  \texttt{\{ychu26,maxim\}@illinois.edu}\thanks{Department of Electrical and Computer Engineering and Coordinated Science Laboratory, University of Illinois, Urbana, IL 61801, USA.} \\
	}

	\date{}
	\maketitle
	
\begin{abstract}
This paper presents a general methodology for deriving information-theoretic generalization bounds for learning algorithms. The main technical tool is a probabilistic decorrelation lemma based on a change of measure and a relaxation of Young's inequality in $L_{\psi_p}$ Orlicz spaces. Using the decorrelation lemma in combination with other techniques, such as symmetrization, couplings, and chaining in the space of probability measures, we obtain new upper bounds on the generalization error, both in expectation and in high probability, and recover as special cases many of the existing generalization bounds, including the ones based on mutual information, conditional mutual information, stochastic chaining, and PAC-Bayes inequalities. In addition, the Fernique--Talagrand upper bound on the expected supremum of a subgaussian process emerges as a special case.
	\end{abstract}
	
	\section{Introduction}
	
The generalization error of a learning algorithm is a useful proxy for evaluating the performance of the learned model on previously unseen data. Formally, it is defined as the expected (absolute) difference between the population risk and the empirical risk of the hypothesis returned by the algorithm. One of the classical methods for estimating the generalization error is via uniform
convergence of various empirical processes indexed by the hypothesis class \cite{van_der_vaart_weak_1996,Gine_Nickl_book}. For example, in the analysis of Empirical Risk Minimization, one can estimate the expected generalization error via Rademacher averages, which can be bounded from above using chaining techniques \cite{talagrand_upper_2014}.

However, the bounds based on uniform convergence are often too pessimistic and may even become
vacuous when the hypothesis space is extremely large, a typical situation with deep neural net models. For this reason, it is preferable to obtain algorithm-dependent generalization bounds that take into account the joint distribution of the training samples and of the output hypothesis. In this context, one capitalizes on the intuition that the generalization ability of a learning algorithm should be related to the amount of information the output hypothesis reveals about the training data. This idea, which has origins in the work on PAC-Bayes methods \cite{mcallester_pac-bayesian_1999,tong_zhang_information-theoretic_2006}, is the basis of the growing literature on information-theoretic generalization bounds, first proposed in \cite{russo_controlling_2016} and further devoloped in \cite{xu_information-theoretic_2016, bu_tightening_2020, hellstrom_generalization_2020, steinke_reasoning_2020,Hafez_etal_2020,Haghifam_etal_2020,esposito_generalization_2021,haghifam_2021,Harutyunan_etal_2021,zhou_individual_2022} and many other works.

In fact, it is possible to effectively combine the information-theoretic approach with the classical framework based on various measures of complexity of the hypothesis class: One can  use chaining techniques to successively approximate the hypothesis class by simpler model classes, which can then be analyzed using information-theoretic tools. This methodology, again originating in the PAC-Bayes literature \cite{audibert_combining_2007}, has been developed recently in \cite{asadi_chaining_2018,clerico_chained_2022,zhou_stochastic_2022,hodgkinson_generalization_2022}. Our goal in this work is to develop these ideas further by giving a unified framework for information-theoretic generalization bounds, from which many of the existing results emerge as special cases.

\subsection{The main idea, informally}

The main idea behind our framework is surprisingly simple. We first give an abstract description and then show how it can be particularized to various settings of interest. Let $(X_t)_{t \in T}$ be a centered (zero-mean) stochastic process defined on a probability space $(\Omega,\cA,\bP)$ and indexed by the elements of some set $T$. Let $Q$ be a \textit{Markov kernel} from $\Omega$ to $T$, i.e., a measurable mapping taking each $\omega \in \Omega$ to a probability measure $Q(\cdot|\omega)$ on $T$. Together, $\bP$ and $Q$ define a probability measure $\bP \otimes Q$ on the product space $\Omega \times T$. The mathematical object we would like to study is the expected value 
\begin{align*}
	\langle \bP \otimes Q, X \rangle \deq \int_{\Omega \times T} X_t(\omega) Q(\dif t|\omega). \bP(\dif\omega).
\end{align*}
For example, assuming that there exists a measurable map $\tau^* : \Omega \to T$, such that
\begin{align}\label{eq:taustar}
 X_{\tau^*(\omega)}(\omega) = \sup_{t \in T} X_t(\omega), \qquad \bP-\text{a.s.}
\end{align}
we can take $Q(A|\omega) \deq \1_{\{\tau^*(\omega) \in A\}}$ for all measurable subsets $A$ of $T$. Then 
\begin{align*}
	\langle \bP \otimes Q, X \rangle = \E\Big[\sup_{t \in T} X_t \Big]
\end{align*}
is the expected supremum of $X_t$, the central object of study in the theory of generic chaining, where $(T,d)$ is a metric space and increments $X_u - X_v$ are ``stochastically small'' relative to $d(u,v)$. Alternatively, consider a statistical learning problem with instance space $\cZ$, hypothesis space $\cW$, and loss function $\ell : \cW \times \cZ \to \Reals_+$. Let $P_Z$ be the (unknown) probability law of the problem instances in $\cZ$. Then we could take $\Omega = \cZ^n$, $\bP = P^{\otimes n}_Z$, $T = \cW$, and
\begin{align*}
	X_w = \frac{1}{n}\sum^n_{i=1}\big(L(w) - \ell(w,Z_i)\big), 
\end{align*}
where $L(w) \deq \E_{Z \sim P_Z}[\ell(w,Z)]$ is the \textit{population risk} of $w$. Let $Q$ be a (randomized) learning algorithm that associates to each sample $S = (Z_1,\dots,Z_n) \sim \bP$ a probability measure $Q(\cdot|S)$ on the hypothesis space $\cW$.  Then
\begin{align*}
	\langle \bP \otimes Q, X \rangle = \E\Big[\frac{1}{n}\sum^n_{i=1}\big(L(W)-\ell(W,Z_i)\big)\Big]
\end{align*}
is the expected generalization error of $Q$. In either case, we can proceed to analyze $\langle \bP \otimes Q, X \rangle$ via a combination of the following two steps:

\begin{itemize}
	\item \textbf{Decorrelation} --- We can remove the correlations encoded in $\bP \otimes Q$ by  choosing a convenient product measure $\bP \otimes \mu$ on $\Omega \times T$, so that (roughly)
	\begin{align*}
		\langle \bP \otimes Q, X \rangle \lesssim \sqrt{D(\bP \otimes Q \| \bP \otimes \mu)} + \text{Error}
	\end{align*}
	provided the process $(X_t)_{t \in T}$ is regular enough for the error term to be small. Here, we use the relative entropy (or information divergence) $D(\cdot \| \cdot)$ to illustrate the key idea with a minimum of detail; the precise description is given in Section~\ref{sec:decorrelation}. 
	\item \textbf{Chaining in the space of measures} --- Since the process $(X_t)_{t \in T}$ is centered and $\bP \otimes \mu$ is a product measure, we automatically have $\langle \bP \otimes \mu, X \rangle = 0$ even though $\langle \bP \otimes Q, X \rangle \neq 0$. We can therefore interpolate between $\bP \otimes Q$ and $\bP \otimes \mu$ along a (possibly infinite) sequence $Q_0,Q_1,\dots,Q_K$ of Markov kernels, such that $\bP \otimes Q_K = \bP \otimes Q$, $\bP \otimes Q_0 = \bP \otimes \mu$, and the differences $\langle \bP \otimes Q_k, X \rangle - \langle \bP \otimes Q_{k-1}, X \rangle$ are suitably small. Telescoping, we get
	\begin{align*}
		\langle \bP \otimes Q, X \rangle = \sum^K_{k=1} \big(\langle \bP \otimes Q_k, X \rangle - \langle \bP \otimes Q_{k-1}, X \rangle \big).
	\end{align*}
For each $k$, we then apply the decorrelation procedure to the \textit{increment process} $(X_u - X_v)_{u,v \in T}$, with $\bP$ as before and with a suitably chosen family of couplings of $Q_k(\cdot|\omega)$ and $Q_{k-1}(\cdot|\omega)$. This step can be combined effectively with other techniques, such as symmetrization.
\end{itemize}

	\section{Preliminaries}
	\label{sec:prelims}
	
\paragraph{Basic definitions.}	All measurable spaces in this paper are assumed to be standard Borel spaces. The set of all Borel probability measures on a space $\cX$ will be denoted by $\cP(\cX)$. A \textit{Markov kernel} from $(\cX,\cA)$ to $(\cY,\cB)$ is a mapping $P_{Y|X} : \cB \times \cX \to [0,1]$, such that $P_{Y|X=x}(\cdot) \deq P_{Y|X}(\cdot|x)$ is an element of $\cP(\cY)$ for every $x \in \cX$ and the map $x \mapsto P_{Y|X=x}(B)$ is measurable for every $B \in \cB$. The set of all such Markov kernels will be denoted by $\cM(\cY|\cX)$.

The product of $P_X \in \cP(\cX)$ and  $P_{Y|X} \in \cM(\cY|\cX)$ is the probability measure $P_X \otimes P_{Y|X} \in \cP(\cX \times \cY)$ defined on product sets $A \times B \subseteq \cX \times \cY$ by $(P_X \otimes P_{Y|X})(A \times B) \deq \int_A P_{Y|X=x}(B)P_X(\dif x)$ and then extended to all Borel subsets of $\cX \times \cY$ by countable additivity. This defines a joint probability law for a random element $(X,Y)$ of $\cX \times \cY$, so that $P_X$ is the marginal law of $X$, $P_{Y|X}$ is the conditional law of $Y$ given $X$, and $P_Y(\cdot) = \int_\cX P_{Y|X=x}(\cdot)P_X(\dif x)$ is the marginal law of $Y$. The product measure $P_X \otimes P_Y$, under which $X$ and $Y$ are independent, is a special case of this if we interpret $P_Y$ as a trivial Markov kernel with $P_{Y|X=x} = P_Y$ for all $x$. 

A \textit{coupling} of $\mu \in \cP(\cX)$ and $\nu \in \cP(\cY)$ is a probability measure $P \in \cP(\cX \times \cY)$, such that $P(\cdot \times \cY) = \mu(\cdot)$ and $P(\cX \times \cdot) = \nu(\cdot)$. We will denote the set of all couplings of $\mu$ and $\nu$ by $\Pi(\mu,\nu)$. Let the space $\cX \cup \cY$ be equipped with a metric $d$, and let $\cP_p$, for $p \ge 1$, denote the space of all probability measures $\rho$ on $\cX \cup \cY$, for which there exists some $z_0 \in \cX \cup \cY$ such that $\int_{\cX \cup \cY} d(z,z_0)^p \rho(\dif z) < \infty$. Then the \textit{$p$-Wasserstein distance} between $\mu \in \cP(\cX) \cap \cP_p$ and $\nu \in \cP(\cY) \cap \cP_p$ is given by
\begin{align*}
	\sW_p(\mu, \nu) \deq \inf_{\pi \in \Pi(\mu, \nu)} \Bigg(\int d(x,y)^p \pi(\dif x, \dif y)\Bigg)^{1/p}
\end{align*}
(see \cite{ambrosio,villani} for details).

\paragraph{$L^p$ and $L_{\psi_p}$ spaces.} The $L^p(\mu)$ norms of $f : \cX \to \Reals$, for $\mu \in \cP(\cX)$ and $p \ge 1$, are defined as
	\begin{align*}
		\| f \|_{L^p(\mu)} \deq  \Big(\int_\cX |f|^p \dif\mu\Big)^{1/p}
	\end{align*}
whenever the expectation on the right-hand side exists. We will often use the linear functional notation for expectations, i.e., $\langle \mu,f \rangle = \int_\cX f \dif\mu$.
	
		For $p \ge 1$, define the function $\psi_p : \Reals_+ \to \Reals_+$ by $\psi_p(x) \deq \exp(x^p) - 1$. Its inverse is given by $\psi^{-1}_p(x) = \big(\log(x+1)\big)^{1/p}$, where $\log$ will always denote natural logarithms. Some useful properties of these two functions are collected  in Appendix~\ref{app:facts} of Supplementary Material.  The function $\psi_p$ arises in the context of controlling the tail behavior of random variables (see \cite{van_der_vaart_weak_1996,boucheron_concentration_2013,Vershynin_highdim_2019} for details). The  \textit{Orlicz $\psi_p$-norm} of a real-valued random variable $X$ is defined as
		\begin{align*}
			\| X \|_{\psi_p} \deq \inf \Big\{ c > 0 : \E\Big[\psi_p\Big(\frac{|X|}{c}\Big)\Big] \le 1 \Big\},
		\end{align*}
and the tails of $X$ satisfy $\Pr[|X| \ge u] \le K e^{-Cu^p}$ for all $u \ge 0$ and some $K, C>0$ if and only if $\| X \|_{\psi_p} < \infty$. The Orlicz space $L_{\psi_p}$ is the space of all random variables $X$ with $\|X\|_{\psi_p} < \infty$. In particular, if $X$ is $\sigma$-subgaussian, i.e., $\Pr[|X| \ge u] \le 2 e^{-u^2/2\sigma^2}$ for all $u \ge 0$, then $\| X \|_{\psi_2} \le \sqrt{6}\sigma$; conversely, every $X \in L_{\psi_2}$ is $\sigma$-subgaussian with $\sigma \le c \| X \|_{\psi_2}$ for some absolute constant $c > 0$. 

\paragraph{Information-theoretic quantities.} The relative entropy (or information divergence) $D(\mu \| \nu)$ between two probability measures $\mu,\nu$ on the same space $\cX$ is defined as
\begin{align*}
	D(\mu \| \nu) \deq \Big\langle \mu, \log \frac{\dif \mu}{\dif \nu} \Big\rangle
\end{align*} 
if $\mu \ll \nu$ (i.e., $\mu$ is absolutely continuous w.r.t.\ $\nu$), and $D(\mu \| \nu) \deq + \infty$ otherwise. The following inequality will be useful (proofs of all results are in Appendix~\ref{app:proofs} of the Supplementary Material):

\begin{proposition}\label{prop:psi_KL} If $\mu \ll \nu$, then for any $p \ge 1$
	\begin{align*}
		\Big\langle \mu,  \psi^{-1}_p\Big(\frac{\dif\mu}{\dif\nu}\Big) \Big\rangle \le \big(D(\mu \| \nu) + 1\big)^{1/p}.
	\end{align*}
\end{proposition}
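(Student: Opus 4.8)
The plan is to reduce the claim to Jensen's inequality together with one elementary one-variable estimate. Write $f \deq \frac{\dif\mu}{\dif\nu}$, so that $\psi^{-1}_p(f) = \big(\log(1+f)\big)^{1/p} \ge 0$ and the left-hand side is a well-defined element of $[0,+\infty]$; if $D(\mu \| \nu) = +\infty$ there is nothing to prove, so assume $D(\mu \| \nu)<\infty$, in which case $\log f \in L^1(\mu)$ and $\langle\mu,\log f\rangle = D(\mu \| \nu)$ by the definition of relative entropy. Since $p\ge 1$, the map $t\mapsto t^{1/p}$ is concave on $\Reals_+$, and $\mu$ is a probability measure, so Jensen's inequality gives
\[
\Big\langle\mu,\psi^{-1}_p(f)\Big\rangle = \Big\langle\mu,\big(\log(1+f)\big)^{1/p}\Big\rangle \le \Big(\big\langle\mu,\log(1+f)\big\rangle\Big)^{1/p}.
\]
It therefore suffices to show that $\langle\mu,\log(1+f)\rangle \le D(\mu \| \nu) + 1$.

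To that end, note that $\{f>0\}$ carries full $\mu$-measure, and on it $\log(1+f) = \log f + \log(1+1/f)$, so
\[
\langle\mu,\log(1+f)\rangle = \langle\mu,\log f\rangle + \langle\mu,\log(1+1/f)\rangle = D(\mu \| \nu) + \langle\mu,\log(1+1/f)\rangle.
\]
For the last term I would apply the elementary bound $\log(1+x)\le x$ (valid for $x\ge 0$) with $x = 1/f$ and then change measure back to $\nu$:
\[
\langle\mu,\log(1+1/f)\rangle = \int_{\{f>0\}} f\log(1+1/f)\,\dif\nu \le \int_{\{f>0\}} f\cdot\frac{1}{f}\,\dif\nu = \nu(\{f>0\}) \le 1.
\]
Combining the two displays gives $\langle\mu,\log(1+f)\rangle \le D(\mu \| \nu)+1$; substituting into the Jensen bound and recalling $\psi^{-1}_p(x) = \big(\log(x+1)\big)^{1/p}$ finishes the argument.

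I do not expect a genuine obstacle here: the only care needed is with the routine measure-theoretic bookkeeping (that the integrals are well-defined, that $\{f=0\}$ is $\mu$-null so the additive split of the logarithm is legitimate, and that $D(\mu \| \nu)<\infty$ forces $\log f\in L^1(\mu)$ so the integrals may be separated). The one non-automatic choice is to pass $\log(1+f)$ through Jensen \emph{intact} --- extracting the factor $f$ first would leave $\langle\nu,f^2\rangle$ and hence a $\chi^2$-type divergence rather than $D(\mu \| \nu)$ --- and only then to peel off the correction $\log(1+1/f)$, which is uniformly controlled by $\log(1+x)\le x$.
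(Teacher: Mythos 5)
Your proposal is correct and follows essentially the same route as the paper: Jensen's inequality for the concave map $t \mapsto t^{1/p}$, followed by the bound $\langle \mu, \log(1+\tfrac{\dif\mu}{\dif\nu})\rangle \le D(\mu\|\nu)+1$. Your splitting $\log(1+f)=\log f+\log(1+1/f)$ together with $\log(1+x)\le x$ is just a repackaging of the paper's pointwise inequality $x\log(x+1)\le x\log x+1$, so the two arguments coincide in substance.
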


The \textit{conditional divergence} between $P_{V|U},Q_{V|U} \in \cM(\cV|\cU)$ given $P_U \in \cP(\cU)$ is defined by
\begin{align*}
	D(P_{V|U} \| Q_{V|U} | P_U) \deq D(P_U \otimes P_{V|U} \| P_U \otimes Q_{V|U}).
\end{align*}
The mutual information $I(X;Y) \deq D(P_{Y|X} \| P_Y | P_X)$ and conditional mutual information $I(X;Y|Z) \deq D(P_{Y|XZ} \| P_{Y|Z} | P_{XZ})$ are special cases of the above definition, and the identities
\begin{align}
	D(P_{Y|X}\|Q_Y|P_X) &= I(X;Y) + D(P_Y\|Q_Y) \label{eq:golden_formula} \\
D(P_{Y|XZ} \| Q_{Y|Z} | P_{XZ}) &= I(X;Y | Z) + D(P_{Y|Z} \| Q_{Y|Z} | P_Z) \label{eq:cond_golden_formula}.
\end{align}
hold whenever all the quantities are finite. See, e.g., \cite{polyanskiy_wu_IT_2022} for details.

\section{The decorrelation lemma}
\label{sec:decorrelation}
	
All of our subsequent developments make use of the following \textit{decorrelation lemma}:
	
	\begin{lemma}\label{lm:decorrelation} Let $\mu,\nu$ be two probability measures on a space $\cX$ such that $\mu \ll \nu$, and let $f,g : \cX \to \Reals_+$ be two nonnegative measurable functions. Then the following inequalities hold:
		\begin{align}\label{eq:decorrelation_1}
			\langle \mu, fg \rangle \le 2^{1/p}\Big \langle \mu, f \psi^{-1}_p \Big(\frac{\dif\mu}{\dif\nu}\Big) \Big\rangle + \langle \nu, f \psi_p(g) \rangle
		\end{align}
		and
		\begin{align}\label{eq:decorrelation_2}
			\langle \mu, fg \rangle \le  2^{1/p}\|f\|_{L^2(\nu)} + 4^{1/p} \Big \langle \mu, f \psi^{-1}_p \Big(\frac{\dif\mu}{\dif\nu}\Big) \Big\rangle +  4^{1/p} \| f \|_{L^1(\mu)} \big(\log\langle \nu, \exp(g^p)\rangle \big)^{1/p}.
		\end{align}
	\end{lemma}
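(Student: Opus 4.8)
The plan is to derive both inequalities from a single scalar estimate together with the change-of-variables identity $\langle\nu,\phi\,\tfrac{\d\mu}{\d\nu}\rangle=\langle\mu,\phi\rangle$, valid for nonnegative measurable $\phi$ since $\mu\ll\nu$. For \eqref{eq:decorrelation_1} I would first prove the pointwise inequality
\begin{align*}
	ab \le 2^{1/p}\,b\,\psi_p^{-1}(b) + \psi_p(a), \qquad a,b \ge 0,
\end{align*}
by a dichotomy on the size of $a$ relative to $2^{1/p}\psi_p^{-1}(b)=\big(2\log(1+b)\big)^{1/p}$. If $a \le 2^{1/p}\psi_p^{-1}(b)$, then $ab \le 2^{1/p}b\,\psi_p^{-1}(b)$ and we are done. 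If $a > 2^{1/p}\psi_p^{-1}(b)$, then $e^{a^p} > (1+b)^2$, hence $e^{a^p/2}-1 > b$; writing $\psi_p(a)=e^{a^p}-1=\big(e^{a^p/2}-1\big)\big(e^{a^p/2}+1\big)$ and using $e^{a^p/2}+1 \ge a$ (which follows from $e^{x/2}\ge x$ for all $x\ge0$), we get $\psi_p(a)\ge b\,a = ab$. Substituting $a=g(\omega)$, $b=\tfrac{\d\mu}{\d\nu}(\omega)$, multiplying by $f(\omega)\ge0$, integrating $\d\nu$, and recognizing $\langle\nu, f\,\tfrac{\d\mu}{\d\nu}\,\psi_p^{-1}(\tfrac{\d\mu}{\d\nu})\rangle=\langle\mu, f\,\psi_p^{-1}(\tfrac{\d\mu}{\d\nu})\rangle$ yields \eqref{eq:decorrelation_1}.

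For \eqref{eq:decorrelation_2} the extra idea is to \emph{recenter} $g$ by a ``$\psi_p$-bulk level'' $\lambda$, chosen as a fixed multiple of $\big(\log\langle\nu,e^{g^p}\rangle\big)^{1/p}$, so that the overshoot $(g-\lambda)_+$ carries only a small exponential moment under $\nu$. Two auxiliary facts (both reducing to properties of $\psi_p,\psi_p^{-1}$ from Appendix~\ref{app:facts}) are used: the contractivity bound $\psi_p\big((a-\lambda)_+\big)\le e^{-\lambda^p}\psi_p(a)$, coming from $(a-\lambda)_+^p \le a^p-\lambda^p$ (i.e.\ $(1+t)^p\ge1+t^p$ for $p\ge1$); and $\psi_p^{-1}(\sqrt b)\le(\log\sqrt2)^{1/p}+\psi_p^{-1}(b)$, from $1+\sqrt b\le\sqrt2\,\sqrt{1+b}$ and subadditivity of $t\mapsto t^{1/p}$. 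Starting from $gh \le \lambda h + (g-\lambda)_+\,h$ and applying the scalar inequality above to $(g-\lambda)_+$ against $\sqrt h$, then multiplying by $\sqrt h$ so that a factor $\sqrt{\d\mu/\d\nu}$ (which has unit $L^2(\nu)$ norm) appears, one gets a pointwise bound whose integral against $f\,\d\nu$ breaks into: the term $\lambda\langle\mu,f\rangle=\lambda\|f\|_{L^1(\mu)}$, matching the last term of \eqref{eq:decorrelation_2}; a $\psi_p^{-1}$-term feeding $\langle\mu, f\,\psi_p^{-1}(\tfrac{\d\mu}{\d\nu})\rangle$ (plus a small $\|f\|_{L^1(\mu)}$ contribution absorbed into the previous term); and a residual term $\langle\nu, f\sqrt{\tfrac{\d\mu}{\d\nu}}\,\psi_p\big((g-\lambda)_+\big)\rangle$ which, by Cauchy--Schwarz, is at most $\|f\|_{L^2(\nu)}$ times an $L^2(\nu)$-norm that the contractivity bound and the calibration of $\lambda$ keep at a universal constant.

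The routine parts are the scalar inequalities, the $\psi_p$/$\psi_p^{-1}$ lemmas, and tracking the constants. The step I would treat most carefully is the last one for \eqref{eq:decorrelation_2}: Cauchy--Schwarz cannot be applied to $\psi_p(g)$ directly, because $\psi_p(g)^2$ grows like $e^{2g^p}$ and need not be $\nu$-integrable even when $\langle\nu,e^{g^p}\rangle<\infty$, so the $\|f\|_{L^2(\nu)}$ factor must be paired with a quantity that is genuinely controlled in $L^2(\nu)$. This is precisely why the recentering by $\lambda\sim\big(\log\langle\nu,e^{g^p}\rangle\big)^{1/p}$ and the $\sqrt{\d\mu/\d\nu}$ factor are essential, and it is also where the exact constants $2^{1/p}$ and $4^{1/p}$ get fixed; I would expect to need to choose the thresholds (and perhaps an auxiliary truncation of $g$) somewhat delicately to land exactly on the stated form.
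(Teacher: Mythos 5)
Your proof of \eqref{eq:decorrelation_1} is correct and is essentially the paper's own argument: the same dichotomy establishes the relaxed Young inequality $xy \le 2^{1/p}x\psi^{-1}_p(x)+\psi_p(y)$ (your case $a>2^{1/p}\psi^{-1}_p(b)$ is exactly the paper's factorization $\psi_p(a)=(e^{a^p/2}-1)(e^{a^p/2}+1)$ with $e^{a^p/2}-1>b$ and $e^{a^p/2}+1\ge a$), after which one multiplies by $f$ and integrates against $\nu$ using the change of measure.

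For \eqref{eq:decorrelation_2} your route is genuinely different from the paper's, and it breaks at exactly the step you flag. Writing $h=\frac{\dif\mu}{\dif\nu}$, your residual term is $\langle \nu, f\sqrt{h}\,\psi_p((g-\lambda)_+)\rangle$, and the Cauchy--Schwarz partner of $\|f\|_{L^2(\nu)}$ is $\|\sqrt{h}\,\psi_p((g-\lambda)_+)\|_{L^2(\nu)}=\langle\mu,\psi_p^2((g-\lambda)_+)\rangle^{1/2}$. Your contractivity bound only gives $\psi_p^2((g-\lambda)_+)\lesssim e^{-2\lambda^p}e^{2g^p}$, so keeping this norm bounded would require control of the \emph{doubled} exponential moment $\langle\mu, e^{2g^p}\rangle$ (or $\langle\nu,e^{2g^p}\rangle$ under the other pairing, which moreover produces $\|f\sqrt h\|_{L^2(\nu)}=\|f\|_{L^2(\mu)}$ rather than $\|f\|_{L^2(\nu)}$). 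The lemma assumes nothing beyond $\langle\nu,e^{g^p}\rangle$ being the quantity that appears on the right-hand side: in the relevant regime $\langle\nu,e^{g^p}\rangle<\infty$ one can have $\langle\nu,e^{2g^p}\rangle=\infty$, and $\mu$ may concentrate where $g$ is large, so no universal constant bounds your residual norm. Subtracting a fixed level $\lambda$ shrinks the exponent additively (the factor $e^{-2\lambda^p}$) but does not halve its growth rate, which is what squaring demands. The paper's proof avoids this by thresholding with a level that depends on $h$ rather than a fixed $\lambda$: on the event $E=\{h\ge(\exp(g^p/4)-1)/\langle\nu,\exp(g^p)\rangle\}$ one bounds $g\le 4^{1/p}\psi^{-1}_p\big(h\,\langle\nu,\exp(g^p)\rangle\big)$ pointwise and splits the logarithm, producing the two $4^{1/p}$ terms; on $E^c$ one replaces $h$ (not $g$) by the threshold and uses $x\psi_p(x/4^{1/p})\le 2^{1/p}\psi_p(x/2^{1/p})$ (Proposition~\ref{prop:property_psi_p}(\ref{x_and_scale})) so that only $\exp(g^p/2)$ remains, whose $L^2(\nu)$-norm is $\langle\nu,\exp(g^p)\rangle^{1/2}$ --- this halving of the exponent \emph{before} Cauchy--Schwarz is the mechanism your fixed-$\lambda$ recentering is missing, and a truncation of $g$ will not supply it unless the truncation level is tied to $h$. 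A secondary, smaller issue: the constant $(\log\sqrt2)^{1/p}\|f\|_{L^1(\mu)}$ term you generate from $\psi^{-1}_p(\sqrt h)\le(\log\sqrt2)^{1/p}+\psi^{-1}_p(h)$ cannot be absorbed into $\lambda\|f\|_{L^1(\mu)}$, since $\log\langle\nu,\exp(g^p)\rangle$ may be arbitrarily small and \eqref{eq:decorrelation_2} contains no standalone $\|f\|_{L^1(\mu)}$ term; redistributing it into the other terms is possible but changes the constants.
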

The proof makes extensive use of various properties of $\psi_p$ and $\psi^{-1}_p$. In particular, Eq.~\eqref{eq:decorrelation_1} is a relaxation of the Young-type inequality $xy \le \psi^*_p(x) + \psi_p(y)$, where $\psi^*_p(x) \deq \sup_{y \ge 0}(xy - \psi_p(y))$ is the (one-sided) Lengendre--Fenchel conjugate of $\psi_p$. (We refer the reader to \cite{esposito_generalization_2021} for another use of duality in Orlicz spaces in the context of generalization bounds.)

Every use of Lemma~\ref{lm:decorrelation} in the sequel will be an instance of the following scheme: Let $P_X \in \cP(\cX)$, $Q_Y \in \cP(\cY)$, and $P_{Y|X} \in \cM(\cY|\cX)$ be given, such that $P_{Y|X=x} \ll Q_Y$ for all $x \in \cX$. Let $(X,Y,\bar{Y})$ be a random element of $\cX \times \cY \times \cY$ with joint law $P_X \otimes P_{Y|X} \otimes Q_Y$; in particular, $\bar{Y}$ is independent of $(X,Y)$. Furthermore, let $f : \cY \to \Reals_+$ and $g : \cX \times \cY \to \Reals_+$ be given, such that $\E[\psi_p(g(X,y))] \le 1$ for all $y \in \cY$. Then, applying Lemma~\ref{lm:decorrelation} conditionally on $X=x$ with $\mu = P_{Y|X=x}$, $\nu = Q_Y$, $f$, and $g(x,\cdot)$, and then taking expectations w.r.t.\ $P_X$, we obtain
\begin{align*}
	\E[f(Y)g(X,Y)] \le 2^{1/p} \E\Bigg[f(Y)\psi^{-1}_p\Bigg(\frac{\dif P_{Y|X}}{\dif Q_Y}(Y)\Bigg)\Bigg] + \E[f(\bar{Y})].
\end{align*}
In specific cases, the quantity on the right-hand side can be further upper-bounded in terms of the information divergences $D(P_{Y|X} \| Q_Y)$ using Proposition~\ref{prop:psi_KL}.
	
\section{Some estimates for the absolute generalization error}
\label{sec:abs_gen}

We adopt the usual set-up for the analysis of (possibly randomized) learning algorithms and their generalization error. Let an instance space $\cZ$, a hypothesis space $\cW$, and a nonnegative loss function $\ell : \cW \times \cZ \to \Reals_+$ be given. A \textit{learning algorithm} is a Markov kernel $P_{W|S}$ from the product space $\cZ^n$ into $\cW$, which takes as input an $n$-tuple $S = (Z_1,\dots,Z_n)$ of i.i.d.\ random elements of $\cZ$ with unknown marginal probability law $P_Z$ and generates a random element $W$ of $\cW$. We define the \textit{empirical risk} and the \textit{expected} (or \textit{population}) \textit{risk} of each $w \in \cW$ by
\begin{align*}
	L_n(w) \deq \langle P_n, \ell(w,\cdot) \rangle = \frac{1}{n}\sum^n_{i=1} \ell(w,Z_i), \qquad L(w) \deq \langle P_Z, \ell(w,\cdot) \rangle =  \E[\ell(w,Z)]
\end{align*}
where $P_n$ is the empirical distribution of $S$, and the \textit{pointwise generalization error} by
\begin{align*}
	{\rm gen}(w,\Samp) \deq L(w) - L_n(w).
\end{align*}
It will also be convenient to introduce an auxiliary $n$-tuple $\Samp' = (Z'_1,\dots,Z'_n) \sim P^{\otimes n}_Z$, which is independent of $(\Samp,W) \sim P^{\otimes n}_Z \otimes P_{W|\Samp}$. We will use $\tilde{\Samp}$ to denote the pair $(\Samp',\Samp)$ and write $L'_n(w)$ for the empirical risk of $w$ on $S'$.

As a first illustration of our general approach, we show that it can be used to recover some existing results on the generalization error, including the bounds of Xu and Raginsky \cite{xu_information-theoretic_2016} involving the mutual information and the bounds of Steinke and Zakynthinou \cite{steinke_reasoning_2020} involving the conditional mutual infornation. We start with the following estimate on the expected value of $|{\rm gen}(W,S)|$:  

\begin{theorem}\label{thm:absgen_1} Assume the random variables $\ell(w,Z)$, $w \in \cW$, are $\sigma$-subgaussian when $Z \sim P_Z$. Let a learning algorithm $P_{W|\Samp}$ be given. Then, for any $Q_W \in \cP(\cW)$, 
	\begin{align}\label{eq:absgen_1}
		\E[|{\rm gen}(W,\Samp)|] \le \sqrt{\frac{12\sigma^2}{n}} \Bigg(\E\Bigg[\psi^{-1}_2 \Bigg(\frac{\dif P_{W|\Samp}}{\dif Q_W}\Bigg)\Bigg] + 1\Bigg),
	\end{align}
	where the expectation on both sides is w.r.t.\ $P_{\Samp} \otimes P_{W|\Samp} = P^{\otimes n}_Z \otimes P_{W|\Samp}$.
\end{theorem}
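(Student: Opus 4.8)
The strategy is to read the bound off from the general scheme stated immediately after Lemma~\ref{lm:decorrelation}. I would instantiate it with $\cX = \cZ^n$, $P_X = P^{\otimes n}_Z$, $P_{Y|X} = P_{W|\Samp}$, $\cY = \cW$, $Q_Y = Q_W$, the constant test function $f \equiv 1$ on $\cW$, and
\begin{align*}
	g(s,w) \deq \frac{|{\rm gen}(w,s)|}{c} = \frac{|L(w) - L_n(w)|}{c}
\end{align*}
for a normalizing constant $c > 0$ to be fixed below. With $f \equiv 1$ we have $\E[f(W)g(\Samp,W)] = \E[|{\rm gen}(W,\Samp)|]/c$ and $\E[f(\bar W)] = 1$, so once the admissibility hypothesis of the scheme is checked, it will yield
\begin{align*}
	\E[|{\rm gen}(W,\Samp)|] \le c\Bigg(\sqrt{2}\,\E\Bigg[\psi^{-1}_2\Bigg(\frac{\dif P_{W|\Samp}}{\dif Q_W}(W)\Bigg)\Bigg] + 1\Bigg).
\end{align*}
If $P_{W|\Samp=s} \not\ll Q_W$ on a set of positive $P_{\Samp}$-probability, the Radon--Nikodym derivative is not defined and the right-hand side of \eqref{eq:absgen_1} is $+\infty$, so there is nothing to prove; hence I may assume absolute continuity throughout.

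The only hypothesis of the scheme to verify is $\E[\psi_2(g(\Samp,w))] \le 1$ for every fixed $w \in \cW$, i.e.\ $\|{\rm gen}(w,\Samp)\|_{\psi_2} \le c$. For fixed $w$, ${\rm gen}(w,\Samp) = \frac1n\sum^n_{i=1}\big(L(w) - \ell(w,Z_i)\big)$ is an average of $n$ i.i.d.\ centered $\sigma$-subgaussian random variables, hence $(\sigma/\sqrt n)$-subgaussian, and the subgaussian-to-Orlicz comparison recalled in Section~\ref{sec:prelims} gives $\|{\rm gen}(w,\Samp)\|_{\psi_2} \le \sqrt{6}\,\sigma/\sqrt n$. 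So I would set $c \deq \sqrt{6}\,\sigma/\sqrt n$. (One small point to record here is that $\|X\|_{\psi_2} \le c$ indeed forces $\E[\psi_2(|X|/c)] \le 1$: by monotone convergence the infimum in the definition of $\|X\|_{\psi_2}$ is attained whenever it is finite.)

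Substituting $c = \sqrt{6}\,\sigma/\sqrt n$ into the displayed bound gives
\begin{align*}
	\E[|{\rm gen}(W,\Samp)|] \le \frac{\sqrt{12}\,\sigma}{\sqrt n}\,\E\Bigg[\psi^{-1}_2\Bigg(\frac{\dif P_{W|\Samp}}{\dif Q_W}\Bigg)\Bigg] + \frac{\sqrt{6}\,\sigma}{\sqrt n},
\end{align*}
and since $\sqrt{6} \le \sqrt{12}$ this is at most $\sqrt{12\sigma^2/n}\,\big(\E[\psi^{-1}_2(\dif P_{W|\Samp}/\dif Q_W)] + 1\big)$, which is exactly \eqref{eq:absgen_1}.

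I do not anticipate a real obstacle: the whole argument is a single application of the decorrelation scheme with a constant $f$, and the only delicate parts are the bookkeeping of absolute constants — making sure the factor $2^{1/2}$ from \eqref{eq:decorrelation_1} and the factor $\sqrt{6}$ from the subgaussian comparison combine into the advertised $\sqrt{12}$ — together with the minor measure-theoretic point about the Orlicz norm noted above. As an optional addendum one could further invoke Proposition~\ref{prop:psi_KL} (conditionally on $\Samp = s$, then taking expectations and using concavity of $x \mapsto (x+1)^{1/2}$) together with the chain rule \eqref{eq:golden_formula} to replace the Radon--Nikodym term by $\big(I(\Samp;W) + D(P_W \| Q_W) + 1\big)^{1/2}$, recovering a Xu--Raginsky-type mutual-information bound on taking $Q_W = P_W$; but this is not needed for the statement as written.
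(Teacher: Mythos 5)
Your proposal is correct and is essentially the paper's own proof: the paper applies Lemma~\ref{lm:decorrelation} conditionally on $\Samp$ with the constant $f(w)=\sigma\sqrt{6/n}$ and $g(w)=|{\rm gen}(w,\Samp)|/(\sigma\sqrt{6/n})$, then takes expectations over $P_\Samp$ and uses Fubini together with the $(\sigma/\sqrt{n})$-subgaussianity of ${\rm gen}(w,\Samp)$ — exactly the instantiation of the post-lemma scheme you use (with $f\equiv 1$ being only a trivial rescaling), and the constants combine the same way ($\sqrt{2}\cdot\sigma\sqrt{6/n}=\sqrt{12\sigma^2/n}$, with the second term bounded by $1$).
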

The key step in the proof is to apply the decorrelation lemma, conditionally on $S$, to $\mu = P_{W|\Samp}$, $\nu = Q_W$, $f(w) = \sigma\sqrt{6/n}$, and $g(w) = \frac{|{\rm gen}(w,\Samp)|}{\sigma\sqrt{6/n}}$. The same subgaussianity assumption was also made by Xu and Raginsky \cite{xu_information-theoretic_2016}. Minimizing the right-hand side of \eqref{eq:absgen_1} over $Q_W$, we recover their generalization bound up to a multiplicative constant and an extra $O(1/\sqrt{n})$ term (which is unavoidable since we are bounding the expected \textit{absolute} generalization error):

\begin{corollary}\label{cor:absgen_MI} Under the assumptions of Theorem~\ref{thm:absgen_1},
	\begin{align}\label{eq:absgen_MI}
		\E[|{\rm gen}(W,\Samp)|] \le \sqrt{\frac{24\sigma^2}{n}  \big(I(W;\Samp)+4\big)}.
	\end{align}
\end{corollary}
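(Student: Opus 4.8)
The plan is to start from the bound of Theorem~\ref{thm:absgen_1} and optimize over the choice of $Q_W$, then translate the resulting $\psi_2^{-1}$-expectation into a mutual-information term via Proposition~\ref{prop:psi_KL}. Concretely, the natural choice is $Q_W = P_W$, the marginal law of the output hypothesis. With this choice, the golden-formula identity~\eqref{eq:golden_formula} gives $D(P_{W|S} \| P_W \mid P_S) = I(W;S)$ (the extra term $D(P_W \| Q_W)$ vanishes). Applying Proposition~\ref{prop:psi_KL} conditionally on $S=s$ with $\mu = P_{W|S=s}$, $\nu = P_W$, and $p=2$, and then taking expectation over $S$, we get
\begin{align*}
	\E\Bigg[\psi^{-1}_2\Bigg(\frac{\dif P_{W|S}}{\dif P_W}\Bigg)\Bigg] = \E_S\Bigg[\Big\langle P_{W|S}, \psi^{-1}_2\Big(\tfrac{\dif P_{W|S}}{\dif P_W}\Big)\Big\rangle\Bigg] \le \E_S\big[(D(P_{W|S} \| P_W) + 1)^{1/2}\big] \le \big(I(W;S) + 1\big)^{1/2},
\end{align*}
where the last inequality is Jensen applied to the concave map $x \mapsto \sqrt{x}$, using $\E_S[D(P_{W|S=s}\|P_W)] = I(W;S)$.

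Substituting this into~\eqref{eq:absgen_1} yields
\begin{align*}
	\E[|{\rm gen}(W,S)|] \le \sqrt{\frac{12\sigma^2}{n}}\Big(\sqrt{I(W;S)+1} + 1\Big).
\end{align*}
The remaining task is purely elementary: bound $(\sqrt{a}+1)^2 \le 2(a+1) + 2 = 2(a+2)$ for $a = I(W;S)+1 \ge 0$... actually more carefully, $(\sqrt{I(W;S)+1}+1)^2 = I(W;S)+2 + 2\sqrt{I(W;S)+1} \le I(W;S)+2 + (I(W;S)+1) + 1 = 2I(W;S)+4 = 2(I(W;S)+2)$, using $2\sqrt{x} \le x+1$. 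Hence
\begin{align*}
	\E[|{\rm gen}(W,S)|] \le \sqrt{\frac{12\sigma^2}{n} \cdot 2(I(W;S)+2)} = \sqrt{\frac{24\sigma^2}{n}(I(W;S)+2)},
\end{align*}
which is even slightly sharper than the claimed~\eqref{eq:absgen_MI} (the stated bound with $+4$ follows a fortiori).

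I do not anticipate a genuine obstacle here: the corollary is a direct specialization of Theorem~\ref{thm:absgen_1} plus Proposition~\ref{prop:psi_KL}, and the only mild subtlety is making sure the concavity/Jensen step is applied in the right place (to the square root, conditionally averaging the divergence over $S$) and that the choice $Q_W = P_W$ is admissible, i.e.\ that $P_{W|S=s} \ll P_W$ for $P_S$-a.e.\ $s$, which holds automatically. The slack between $+2$ and $+4$ simply absorbs the crude constant in $2\sqrt{x}\le x+1$; one could alternatively keep the $(\sqrt{I(W;S)+1}+1)$ form and note $(\sqrt{a}+1)^2 \le 2a+2 \le 2(a+3)$, etc. If one instead wants to optimize over all $Q_W$ rather than fix $Q_W = P_W$, the infimum of $D(P_{W|S}\|Q_W\mid P_S) = I(W;S) + D(P_W\|Q_W)$ over $Q_W$ is attained at $Q_W = P_W$, so nothing is lost.
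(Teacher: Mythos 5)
Your proposal is correct and follows essentially the same route as the paper's proof: apply Proposition~\ref{prop:psi_KL} conditionally on $\Samp$, use Jensen's inequality to pass to the conditional divergence, and finish with an elementary inequality, with the choice $Q_W = P_W$ being exactly what the paper's infimum over $Q_W$ combined with \eqref{eq:golden_formula} yields. Your bookkeeping even gives the slightly sharper constant $I(W;\Samp)+2$, which implies the stated bound with $+4$ a fortiori, and your remark that failure of $P_{W|\Samp}\ll P_W$ only occurs when $I(W;\Samp)=\infty$ (so the bound is vacuous but still valid) handles the only technical caveat.
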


A notable shortcoming of Theorem~\ref{thm:absgen_1} and Corollary~\ref{cor:absgen_MI} is that they yield vacuous bounds whenever the mutual information $I(W; \Samp)$ is infinite, which will be the case, e.g., when the marginal probability laws $P_Z$ and $P_W$ are nonatomic (i.e., assign zero mass to singletons) and the learning algorithm is deterministic. To remove this drawback, we will use an elegant auxiliary randomization device introduced by Steinke and Zakynthinou \cite{steinke_reasoning_2020}.

Let $\eps = (\eps_1,\dots,\eps_n)$ be an $n$-tuple of i.i.d.\ Rademacher random variables, i.e., $\Pr[\eps_i = \pm 1] = 1/2$, independent of $\tilde{\Samp}$. For each $i$ let $\tilde{Z}^1_i \deq Z_i$ and $\tilde{Z}^{-1}_i \deq Z'_i$ and let $\bar{P} = \bar{P}_{\tilde{\Samp}\eps W}$ be the joint probability law of $(\tilde{\Samp},\eps,W)$, such that $\bar{P}_{\tilde{\Samp}\eps} = P_{\tilde{\Samp}} \otimes P_{\eps}$ and $\bar{P}_{W|\tilde{\Samp}\eps} \deq P_{W|\tilde{\Samp}^{\eps}}$ where $\Samp^{\eps} \deq (\tilde{Z}^{\eps_1}_1,\dots,\tilde{Z}^{\eps_n}_n)$. In other words, under $\bar{P}$, $\tilde{\Samp}$ and $\eps$ are independent and have their respective marginal distributions, while $W$ is generated by feeding the learning algorithm $P_{W|\Samp}$ with the tuple $\tilde{\Samp}^{\eps}$. Consequently, $W$ is independent of $\tilde{\Samp}^{-\eps} = (\tilde{Z}^{-\eps_1}_1,\dots,\tilde{Z}^{-\eps_n}_n)$. Then, letting $P$ be the joint law of $(\tilde{S},W)$, we have
\begin{align*}
	\E_{P}[|{\rm gen}(W,\Samp)|] &= \E_{P} \big| \E_{P}[L'_n(W)-L_n(W)|\Samp,W]\big| \\
	&\le \E_{P} |L'_n(W) - L_n(W)| \\
	&= \E_{\bar{P}} \Big| \frac{1}{n}\sum^n_{i=1} \Big(\ell(W,\tilde{Z}^{-\eps_i}_i) - \ell(W,\tilde{Z}^{\eps_i}_i)\Big)\Big| \\
	&= \E_{\bar{P}} \Big| \frac{1}{n}\sum^n_{i=1}\eps_i\Big(\ell(W,Z'_i)-\ell(W,Z_i)\Big)\Big|.
\end{align*}
Thus, all the analysis can be carried out w.r.t.\ $\bar{P}$, as in the following:

\begin{theorem}\label{thm:absgen_2} Assume there exists a function $\Delta : \cZ \times \cZ \to \Reals_+$, such that $|\ell(w,z)-\ell(w,z')| \le \Delta(z,z')$ for all $w \in \cW$ and $z,z' \in \cZ$. Then for any Markov kernel $Q_{W|\tilde{\Samp}}$ with access to $\tilde{\Samp}$ but not to $\eps$ we have
	\begin{align}\label{eq:absgen_2}
		\E_P[|{\rm gen}(W,\Samp)|] \le \frac{\sqrt{12}}{n}\E_{\bar{P}} \Bigg[\|\Delta(\tilde{\Samp})\|_{\ell^2}\Bigg(\psi^{-1}_2\Bigg(\frac{\dif \bar{P}_{W|\tilde{\Samp}\eps}}{\dif Q_{W|\tilde{\Samp}}}\Bigg) + 1\Bigg)\Bigg],
	\end{align}
	where $\|\Delta(\tilde{\samp})\|_{\ell^2} \deq \left(\sum^n_{i=1}\Delta(z_i,z'_i)^2\right)^{1/2}$.
\end{theorem}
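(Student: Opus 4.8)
The plan is to apply the decorrelation lemma conditionally, following exactly the scheme laid out after Lemma~\ref{lm:decorrelation}, but with the roles assigned as follows: the ``$X$'' variable is the pair $(\tilde{\Samp},\eps)$ together with an independent Rademacher copy used for symmetrization, the ``$Y$'' variable is the hypothesis $W$, the reference kernel is $Q_{W|\tilde{\Samp}}$ (which sees $\tilde{\Samp}$ but not $\eps$), and the reference measure $\nu$ is $Q_{W|\tilde{\Samp}=\tilde{\samp}}$. Starting from the chain of (in)equalities established just before the theorem statement, we have
\begin{align*}
	\E_P[|{\rm gen}(W,\Samp)|] \le \E_{\bar{P}}\Big| \frac{1}{n}\sum^n_{i=1}\eps_i\big(\ell(W,Z'_i)-\ell(W,Z_i)\big)\Big|.
\end{align*}
Write this right-hand side as $\E_{\bar P}[f(W)\,g(\tilde\Samp,\eps,W)]$ where I would set $f(w) \deq \frac{\sqrt{6}}{n}\|\Delta(\tilde\samp)\|_{\ell^2}$ (a quantity depending only on $\tilde\samp$, hence ``constant'' from the point of view of the conditional application over $W$) and $g(\tilde\samp,\eps,w) \deq \frac{|\sum_i \eps_i(\ell(w,z'_i)-\ell(w,z_i))|}{\sqrt{6}\,\|\Delta(\tilde\samp)\|_{\ell^2}}$. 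The crucial point is the subgaussianity verification: conditionally on $\tilde\Samp=\tilde\samp$ and on $W=w$, the sum $\frac{1}{n}\sum_i\eps_i(\ell(w,z'_i)-\ell(w,z_i))$ is an average of independent bounded symmetric terms, the $i$-th lying in $[-\Delta(z_i,z'_i),\Delta(z_i,z'_i)]$, so by Hoeffding's lemma it is subgaussian with parameter $\frac{1}{n}\|\Delta(\tilde\samp)\|_{\ell^2}$; equivalently $\|\cdot\|_{\psi_2}\le \sqrt{6}\cdot\frac{1}{n}\|\Delta(\tilde\samp)\|_{\ell^2}$ by the conversion stated in the preliminaries, which is exactly the normalization that makes $\E[\psi_2(g(\tilde\samp,\eps,w))]\le 1$ for every fixed $w$.

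Having checked the hypothesis, I would apply inequality~\eqref{eq:decorrelation_1} of Lemma~\ref{lm:decorrelation} with $p=2$, conditionally on $\tilde\Samp=\tilde\samp$, taking $\mu=\bar P_{W|\tilde\Samp\eps}$ averaged appropriately --- more precisely, one applies it with $\mu = \bar P_{W|\tilde\Samp=\tilde\samp}$ against $\nu = Q_{W|\tilde\Samp=\tilde\samp}$, which gives
\begin{align*}
	\E_{\bar P}\big[f(W)g(\tilde\Samp,\eps,W)\,\big|\,\tilde\Samp=\tilde\samp\big] \le 2^{1/2}\,\E_{\bar P}\Bigg[f(W)\,\psi^{-1}_2\Bigg(\frac{\dif \bar P_{W|\tilde\Samp=\tilde\samp}}{\dif Q_{W|\tilde\Samp=\tilde\samp}}(W)\Bigg)\,\Bigg|\,\tilde\Samp=\tilde\samp\Bigg] + \E_{\bar P}[f(\bar W)],
\end{align*}
where $\bar W$ is an independent draw from $Q_{W|\tilde\Samp=\tilde\samp}$; but since $f$ depends only on $\tilde\samp$, the last term is just $f(\tilde\samp)=\frac{\sqrt 6}{n}\|\Delta(\tilde\samp)\|_{\ell^2}$, which folds neatly into the ``$+1$'' inside the parenthesis on the right-hand side of~\eqref{eq:absgen_2}. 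Substituting $f(w)=\frac{\sqrt6}{n}\|\Delta(\tilde\samp)\|_{\ell^2}$, pulling the deterministic factor out, using $2^{1/2}\cdot\sqrt 6 = \sqrt{12}$, and then taking expectation over $\tilde\Samp\sim P_{\tilde\Samp}$ (equivalently $\bar P$) yields precisely~\eqref{eq:absgen_2}.

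The main obstacle, I expect, is bookkeeping about which $\sigma$-algebra each object is measurable with respect to, and being careful that the density $\frac{\dif\bar P_{W|\tilde\Samp\eps}}{\dif Q_{W|\tilde\Samp}}$ appearing in the statement is the right one: $\mu=\bar P_{W|\tilde\Samp=\tilde\samp,\eps=\epsilon}=P_{W|\Samp=\tilde\samp^\epsilon}$ depends on $\eps$, whereas $\nu=Q_{W|\tilde\Samp=\tilde\samp}$ does not, so the conditional application of the lemma is really at a fixed $(\tilde\samp,\epsilon)$ and one then averages over $\epsilon$ as well; the symmetry that keeps $f$ independent of $\eps$ is what makes this clean. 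A secondary technical point is justifying the very first inequality in the displayed chain (moving the absolute value inside via Jensen, conditionally on $(\Samp,W)$) and the substitution $\tilde Z^{-\eps_i}_i - \tilde Z^{\eps_i}_i \rightsquigarrow \eps_i(Z'_i - Z_i)$, but both are immediate from the definitions of $\tilde Z^{\pm1}_i$ and the independence structure of $\bar P$, and the excerpt has essentially already carried them out. No concentration inequality beyond Hoeffding's lemma is needed, and Proposition~\ref{prop:psi_KL} is not invoked here --- it would only enter if one subsequently wished to pass from the $\psi^{-1}_2$-expectation to a conditional-mutual-information bound.
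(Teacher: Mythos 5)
Your proposal is correct and follows essentially the same route as the paper: symmetrize, then apply the decorrelation lemma conditionally on $(\tilde{\Samp},\eps)$ with $\mu = \bar{P}_{W|\tilde{\Samp}\eps}$ and $\nu = Q_{W|\tilde{\Samp}}$, using that $Q_{W|\tilde{\Samp}}$ is $\eps$-free so the $\psi_2$ term averages to at most one. The only (immaterial) difference is that the paper takes $f(w)=\sigma(w,\tilde{\Samp})$ and bounds it by $\|\Delta(\tilde{\Samp})\|_{\ell^2}$ after invoking the lemma, whereas you fold that bound into the normalization of $g$ and take $f$ constant in $w$; your closing remark correctly resolves the one place where your intermediate display (conditioning only on $\tilde{\Samp}$) would not match the density $\dif \bar{P}_{W|\tilde{\Samp}\eps}/\dif Q_{W|\tilde{\Samp}}$ in the statement.
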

The same assumption on $\ell$ was also made in \cite{steinke_reasoning_2020}. Optimizing over $Q_{W|\tilde{\Samp}}$, we can recover their Theorem~5.1 (again, up to a multiplicative constant and a $O(1/\sqrt{n})$ fluctuation term):

\begin{corollary}\label{cor:absgen_CMI} Under the assumptions of Theorem~\ref{thm:absgen_2},
	\begin{align}\label{eq:absgen_CMI}
		\E_P[|{\rm gen}(W,\Samp)|] \le \sqrt{\frac{24}{n}\E[\Delta^2(Z,Z')]\big(I(W; \eps | \tilde{\Samp})+ 4\big)},
	\end{align}
	where $Z$ and $Z'$ are independent samples from $P_Z$ and where the conditional mutual information is computed w.r.t.\ $\bar{P}$.
\end{corollary}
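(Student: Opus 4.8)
The plan is to specialize Theorem~\ref{thm:absgen_2} to the choice $Q_{W|\tilde{\Samp}} = \bar{P}_{W|\tilde{\Samp}}$, the conditional law of $W$ given $\tilde{\Samp}$ under $\bar{P}$; this is a legitimate choice, since $\bar{P}_{W|\tilde{\Samp}}$ depends on $\tilde{\Samp}$ but not on $\eps$. With this choice the density appearing inside $\psi^{-1}_2$ in \eqref{eq:absgen_2} is $R \deq \frac{\dif\bar{P}_{W|\tilde{\Samp}\eps}}{\dif\bar{P}_{W|\tilde{\Samp}}}(W)$, and I would bound its Orlicz average using Proposition~\ref{prop:psi_KL} applied conditionally on $\tilde{\Samp} = \tilde{\samp}$, with $\mu = \bar{P}_{\eps W|\tilde{\Samp}=\tilde{\samp}}$ and $\nu = P_\eps \otimes \bar{P}_{W|\tilde{\Samp}=\tilde{\samp}}$. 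Because $\eps$ is independent of $\tilde{\Samp}$ under $\bar{P}$, the $\eps$-marginal of $\mu$ is $P_\eps$, so $\frac{\dif\mu}{\dif\nu}(\eps,W) = R$ and $D(\mu\|\nu)$ is exactly the disintegrated conditional mutual information $J(\tilde{\samp}) \deq I(\eps;W|\tilde{\Samp}=\tilde{\samp})$ (finite for every $\tilde{\samp}$, since $\eps$ is discrete). Proposition~\ref{prop:psi_KL} with $p=2$ then yields $\cexpect{\psi^{-1}_2(R)}{\tilde{\Samp}=\tilde{\samp}} \le \big(J(\tilde{\samp})+1\big)^{1/2} \le J(\tilde{\samp})^{1/2} + 1$.

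Conditioning the right-hand side of \eqref{eq:absgen_2} on $\tilde{\Samp}$ and pulling the $\tilde{\Samp}$-measurable factor $\|\Delta(\tilde{\Samp})\|_{\ell^2}$ out of the inner expectation, the previous estimate gives
\begin{align*}
\E_P[|{\rm gen}(W,\Samp)|] \le \frac{\sqrt{12}}{n}\,\E_{\bar{P}}\Big[\|\Delta(\tilde{\Samp})\|_{\ell^2}\big(J(\tilde{\Samp})^{1/2}+2\big)\Big].
\end{align*}
I would now apply the Cauchy--Schwarz inequality in $\tilde{\Samp}$ to split the right-hand side as $\tfrac{\sqrt{12}}{n}\big(\E_{\bar{P}}\|\Delta(\tilde{\Samp})\|_{\ell^2}^2\big)^{1/2}\big(\E_{\bar{P}}(J(\tilde{\Samp})^{1/2}+2)^2\big)^{1/2}$. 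For the first factor, the $2n$ coordinates $Z_1,\dots,Z_n,Z'_1,\dots,Z'_n$ are i.i.d.\ with law $P_Z$ under $\bar{P}$, so $\E_{\bar{P}}\|\Delta(\tilde{\Samp})\|_{\ell^2}^2 = \sum_{i=1}^n \E[\Delta(Z_i,Z'_i)^2] = n\,\E[\Delta^2(Z,Z')]$. For the second factor, expanding the square and using Jensen's inequality $\E_{\bar{P}}[J(\tilde{\Samp})^{1/2}] \le (\E_{\bar{P}}J(\tilde{\Samp}))^{1/2} = I(\eps;W|\tilde{\Samp})^{1/2}$ gives $\E_{\bar{P}}(J(\tilde{\Samp})^{1/2}+2)^2 \le I(\eps;W|\tilde{\Samp}) + 4\,I(\eps;W|\tilde{\Samp})^{1/2} + 4 = \big(I(\eps;W|\tilde{\Samp})^{1/2}+2\big)^2$.

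Combining these two bounds,
\begin{align*}
\E_P[|{\rm gen}(W,\Samp)|] \le \sqrt{\frac{12\,\E[\Delta^2(Z,Z')]}{n}}\;\big(I(\eps;W|\tilde{\Samp})^{1/2}+2\big),
\end{align*}
and a final elementary step, $(a^{1/2}+2)^2 = a + 4a^{1/2} + 4 \le 2(a+4)$ (equivalently $(a^{1/2}-2)^2 \ge 0$), applied with $a = I(\eps;W|\tilde{\Samp})$, turns this into $\sqrt{\tfrac{24}{n}\E[\Delta^2(Z,Z')]\,(I(\eps;W|\tilde{\Samp})+4)}$, which is \eqref{eq:absgen_CMI}. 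I do not anticipate a genuine obstacle; the only delicate point is the conditional application of Proposition~\ref{prop:psi_KL} --- justifying the disintegration $\bar{P}_{\eps W|\tilde{\Samp}=\tilde{\samp}} = P_\eps \otimes \bar{P}_{W|\tilde{\Samp}=\tilde{\samp},\eps}$ and the identification of $D(\mu\|\nu)$ with the disintegrated conditional mutual information --- together with keeping track of the constants so that the arithmetic lands exactly on the advertised $24$ and $4$.
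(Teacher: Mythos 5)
Your proposal is correct and follows essentially the same route as the paper: Theorem~\ref{thm:absgen_2}, then Proposition~\ref{prop:psi_KL} to convert the $\psi^{-1}_2$ term into a divergence, then Cauchy--Schwarz/Jensen and elementary algebra to reach the constants $24$ and $4$. The only differences are bookkeeping: you fix $Q_{W|\tilde{\Samp}} = \bar{P}_{W|\tilde{\Samp}}$ at the outset and identify the divergence with the disintegrated conditional mutual information directly, whereas the paper keeps $Q_{W|\tilde{\Samp}}$ general, takes the infimum at the end, and invokes \eqref{eq:cond_golden_formula}; likewise you apply Proposition~\ref{prop:psi_KL} and Cauchy--Schwarz conditionally on $\tilde{\Samp}$ alone rather than on $(\tilde{\Samp},\eps)$, which changes nothing of substance.
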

The main advantage of using conditional mutual information is that it never exceeds $n\log 2$ (of course, the bound is only useful if $I(W; \eps |\tilde{\Samp}) = o(n)$).

\section{Estimates using couplings}
\label{sec:gen_coupling}

We now turn to the analysis of $\E[{\rm gen}(W,\Samp)]$ using couplings. The starting point is the following observation: With $(\Samp',\Samp,W)$ be constructed as before, consider the quantities
\begin{align*}
	\tilde{L}_n(w) \deq L'_n(w) - L_n(w) \equiv \frac{1}{n}\sum^n_{i=1} \big(\ell(w,Z'_i) - \ell(w,Z_i)\big).
\end{align*}
Then, using the fact that $\langle P_{\tilde{\Samp}} \otimes Q_W, \tilde{L}_n \rangle = 0$ for any $Q_W \in \cP(\cW)$, we have
\begin{align}
	\E[{\rm gen}(W,\Samp)] &= \langle P_{\tilde{\Samp}} \otimes P_{W|\Samp}, \tilde{L}_n \rangle - \langle P_{\tilde{\Samp}} \otimes Q_W, \tilde{L}_n \rangle \nonumber\\
	&= \int_{\cZ \times \cZ} P_{\tilde{\Samp}}(\dif\tilde{\samp}) \big(\langle P_{W|\Samp=\samp}, \tilde{L}_n \rangle - \langle Q_W, \tilde{L}_n\rangle\big). \label{eq:coupling_decomp}
\end{align}
This suggests the idea of introducing, for each $\samp \in \cZ^n$, a coupling of $P_{W|\Samp=\samp}$ and $Q_W$, i.e., a probability law $P_{UV|\Samp=\samp}$ for a random element $(U,V)$ of $\cW \times \cW$ with marginals $P_U = P_{W|\Samp=\samp}$ and $P_V = Q_W$. We then have the following:

\begin{theorem}\label{thm:gen_coupling} For $u,v \in \cW$ and $\tilde{\samp} = (\samp,\samp') \in \cZ^n \times \cZ^n$, define
	\begin{align}\label{eq:coupling_sigma}
		\sigma^2(u,v,\tilde{\samp}) \deq \sum^n_{i=1}\left(\big(\ell(u,z'_i)-\ell(v,z'_i)\big)-\big(\ell(u,z_i)-\ell(v,z_i)\big)\right)^2.
	\end{align}
Then, for any $Q_W \in \cP(\cW)$, any family of couplings $P_{UV|\Samp=\samp} \in \Pi(P_{W|\Samp=\samp},Q_W)$ depending measurably on $\samp \in \cZ^n$, and any $\mu_{UV} \in \cP(\cW \times \cW)$,
\begin{align}\label{eq:gen_coupling}
	\E[{\rm gen}(W,\Samp)] \le \frac{\sqrt{24}}{n} \E\Bigg[\sigma(U,V,\tilde{\Samp}) \psi^{-1}_2\Bigg(\frac{\dif P_{UV|\Samp}}{\dif\mu_{UV}}\Bigg) + \sqrt{\E[\sigma^2(\bar{U},\bar{V},\tilde{\Samp})|\tilde{\Samp}]}\Bigg], 
\end{align}
where the expectation on the right-hand side is w.r.t.\ the joint law of $(U,V,\bar{U},\bar{V},\tilde{\Samp})$, under which $(\Samp,U,V)$ are distributed according to $P_{\Samp} \otimes P_{UV|\Samp}$, $(\bar{U},\bar{V})$ are distributed according to $\mu_{UV}$ independently of $(U,V,\Samp)$, and $\Samp'$ is distributed according to $P_{\Samp}$ independently of everything else.
\end{theorem}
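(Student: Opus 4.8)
The plan is to combine the decomposition \eqref{eq:coupling_decomp} with the ``scheme'' for applying Lemma~\ref{lm:decorrelation} laid out at the end of Section~\ref{sec:decorrelation}, taking the index space to be $\cW \times \cW$ rather than $\cW$, and the base process to be the increment process of $\tilde L_n$ under the coupling. Concretely, for fixed $\samp \in \cZ^n$ write $\langle P_{W|\Samp=\samp},\tilde L_n\rangle - \langle Q_W,\tilde L_n\rangle = \langle P_{UV|\Samp=\samp}, h_{\tilde\samp}\rangle$ where $h_{\tilde\samp}(u,v) \deq \tilde L_n(u) - \tilde L_n(v) = \frac1n\sum_{i=1}^n\big[(\ell(u,z'_i)-\ell(v,z'_i)) - (\ell(u,z_i)-\ell(v,z_i))\big]$; this identity uses precisely that $P_{UV|\Samp=\samp}$ is a coupling of $P_{W|\Samp=\samp}$ and $Q_W$. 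Integrating over $\tilde\samp$ and noting that $\Samp'$ only enters through $h$ (it is independent of $W$), we get $\E[{\rm gen}(W,\Samp)] = \E_{P_{\tilde\Samp}\otimes P_{UV|\Samp}}[h_{\tilde\Samp}(U,V)]$.

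Next I would bound $|h_{\tilde\samp}(u,v)|$ using Cauchy--Schwarz: $|h_{\tilde\samp}(u,v)| \le \frac1n\sigma(u,v,\tilde\samp)\cdot\sqrt n = \frac{1}{\sqrt n}\sigma(u,v,\tilde\samp)$, with $\sigma$ as defined in \eqref{eq:coupling_sigma}. The point of this step is that, conditionally on $\tilde\Samp = \tilde\samp$ and on $(U,V) = (u,v)$, the ``randomness'' we want to exploit is really in the coupling measure; but to invoke the decorrelation scheme we instead treat $h_{\tilde\Samp}(u,v)$ as a centered (in $(u,v)$ under $\mu_{UV}$, after the telescoping with the product measure) stochastic process indexed by $\tilde\samp$ and apply Lemma~\ref{lm:decorrelation} conditionally on $\tilde\Samp$ with $\mu = P_{UV|\Samp}$ (note $P_{UV|\Samp}$ is determined by $\Samp$, hence a function of $\tilde\Samp$), $\nu = \mu_{UV}$, the ``$f$'' slot absorbing the scaling factor and the $\sigma$-term, and the ``$g$'' slot carrying the normalized $h$. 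Specifically, take $f(u,v) = \frac{\sqrt 6}{\sqrt n}\sigma(u,v,\tilde\samp)$ — wait, $f$ must not depend on $\tilde\samp$ if we want the clean scheme, so more carefully I would apply \eqref{eq:decorrelation_1} pointwise in $\tilde\samp$ with $f_{\tilde\samp}(u,v) = \frac{c}{\sqrt n}\sigma(u,v,\tilde\samp)$ and $g_{\tilde\samp}(u,v) = \frac{\sqrt n\,|h_{\tilde\samp}(u,v)|}{c\,\sigma(u,v,\tilde\samp)}$ for the right constant $c$ (chosen so that, using the Cauchy--Schwarz bound, $g_{\tilde\samp} \le 1/c$ deterministically, making $\psi_2(g_{\tilde\samp}) \le \psi_2(1/c)$ and hence $f_{\tilde\samp}\psi_2(g_{\tilde\samp})$ controllable by $f_{\tilde\samp}$ times a constant). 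Choosing $c = \sqrt 6$ gives $g \le 1/\sqrt 6$, and since $\psi_2(1/\sqrt 6) = \e^{1/6}-1 \le 1/6 \cdot$ (something) — actually one just needs $\psi_2(1/\sqrt 6)$ small enough; the factor $\sqrt{24}$ in the statement suggests the bookkeeping works out to $f \psi_2(g) \le \frac{1}{\sqrt n}\sigma$ roughly.

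Then I would take expectations over $\tilde\Samp$ and over the coupling. The first term of \eqref{eq:decorrelation_1} becomes $2^{1/2}\,\E\big[f(U,V,\tilde\Samp)\psi^{-1}_2(\tfrac{\dif P_{UV|\Samp}}{\dif\mu_{UV}}(U,V))\big]$, which after substituting $f = \tfrac{\sqrt6}{\sqrt n}\sigma$ gives the $\tfrac{\sqrt{24}}{n}\cdot\sqrt n/\sqrt n$... more precisely $\tfrac{\sqrt{2}\cdot\sqrt6}{\sqrt n} = \tfrac{\sqrt{12}}{\sqrt n}$, and a further factor of $\sqrt2$ from somewhere (perhaps from bounding $\psi_2(1/\sqrt6)\le\tfrac12$ so the second term also contributes a $\sigma$-type term) yields the stated $\tfrac{\sqrt{24}}{n}$ — I would reconcile the constants at the end. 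The second term $\langle\nu, f\psi_2(g)\rangle$, under $\nu = \mu_{UV}$, becomes a constant times $\E_{\mu_{UV}}[\sigma(\bar U,\bar V,\tilde\Samp)]$; by Jensen this is at most $\sqrt{\E_{\mu_{UV}}[\sigma^2(\bar U,\bar V,\tilde\Samp)\mid\tilde\Samp]}$, matching the second term in \eqref{eq:gen_coupling}. Assembling and taking the outer expectation over $\tilde\Samp$ (with $\Samp'$ independent of everything, as in the theorem statement) finishes the proof.

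The main obstacle I anticipate is the bookkeeping around \emph{which variables the functions $f$ and $g$ are allowed to depend on}: the decorrelation scheme as stated wants $f$ independent of the conditioning variable, but here the natural $f$ (involving $\sigma$) depends on $\tilde\samp$. The clean way around this is to not use the packaged scheme but to apply Lemma~\ref{lm:decorrelation} directly, conditionally on $\tilde\Samp=\tilde\samp$, with $\tilde\samp$-dependent $f,g$ — which is legitimate since the lemma is a pointwise inequality for each pair $(\mu,\nu)$ — and then integrate. A secondary subtlety is verifying the hypothesis $g_{\tilde\samp} \le 1/\sqrt6$ genuinely holds via Cauchy--Schwarz even when $\sigma(u,v,\tilde\samp) = 0$ (handle that case separately, where $h_{\tilde\samp}(u,v)=0$ too), and then chasing the constants so that the two terms coming out of \eqref{eq:decorrelation_1} combine into the single prefactor $\sqrt{24}/n$; this is routine but needs care with the $2^{1/p}$ factors at $p=2$.
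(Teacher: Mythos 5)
Your setup is fine and matches the paper's: the identity $\E[{\rm gen}(W,\Samp)] = \E\big[\tfrac1n\,\delta(U,V,\tilde\Samp)\big]$ with $\delta(u,v,\tilde\samp)=\sum_i\big[(\ell(u,z'_i)-\ell(v,z'_i))-(\ell(u,z_i)-\ell(v,z_i))\big]$ follows exactly as you say from the coupling property and \eqref{eq:coupling_decomp}, and applying Lemma~\ref{lm:decorrelation} conditionally on $\tilde\Samp$ with $\tilde\samp$-dependent $f,g$ is indeed legitimate. The genuine gap is in how you control $\delta$: you bound it \emph{deterministically} by Cauchy--Schwarz, $|\delta(u,v,\tilde\samp)|\le\sqrt n\,\sigma(u,v,\tilde\samp)$, and absorb the resulting $\sqrt n$ into $f=\tfrac{\sqrt6}{\sqrt n}\sigma$. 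Chasing your own constants, the first term then comes out as $\tfrac{\sqrt{12}}{\sqrt n}\,\E[\sigma\,\psi^{-1}_2(\cdot)]$, whereas the theorem asserts $\tfrac{\sqrt{24}}{n}\,\E[\sigma\,\psi^{-1}_2(\cdot)]$. The discrepancy is a factor of order $\sqrt n$, not the ``further factor of $\sqrt2$'' you hope to find: since $\sigma(u,v,\tilde\samp)$ is itself typically of order $\sqrt n$ (it is an unnormalized sum of $n$ squares), your bound is $O(1)$ while the theorem's is $O(1/\sqrt n)$. No bookkeeping of $2^{1/p}$ factors can close this; the deterministic Cauchy--Schwarz step is simply too lossy.

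What the paper does instead, and what your proposal is missing, is a probabilistic (symmetrization) argument showing that $\delta(u,v,\tilde\Samp)$ is typically of size $\sigma(u,v,\tilde\Samp)$, not $\sqrt n\,\sigma$: for fixed $(u,v)$ the summands $\delta(u,v,Z_i,Z'_i)$ are i.i.d.\ and symmetric (because $Z_i,Z'_i$ are i.i.d.), so one may insert Rademacher signs $\eps_i$ without changing the law; conditionally on $\tilde\Samp$, the randomized sum is $\sigma(u,v,\tilde\Samp)$-subgaussian, whence $\E\big[\psi_2\big(|\delta(u,v,\tilde\Samp)|/(\sqrt6\,\sigma(u,v,\tilde\Samp))\big)\big]\le1$ for every fixed $(u,v)$. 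This lets the lemma be applied with $f=\sigma(\cdot,\tilde\Samp)$ and $g=\zeta=\tfrac{|\delta|}{\sqrt6\,\sigma}$ (no $\sqrt n$ in the normalization), so the overall prefactor $\tfrac{\sqrt6}{n}$ from $\E[{\rm gen}]\le\tfrac1n\E|\delta|=\tfrac{\sqrt6}{n}\E[\sigma\zeta]$ survives intact and yields $\tfrac{\sqrt{24}}{n}$. Note also that because the bound $\E[\psi_2(\zeta)]\le1$ is in expectation rather than pointwise, the second term $\langle\rho,f\psi_2(g/\sqrt2)\rangle$ is handled via Cauchy--Schwarz together with $\psi_2^2(x/\sqrt2)\le\psi_2(x)$ (Proposition~\ref{prop:property_psi_p}(\ref{squared_psi_p})), which is what produces the $\sqrt{\E[\sigma^2(\bar U,\bar V,\tilde\Samp)\,|\,\tilde\Samp]}$ term; your pointwise bound $g\le1/\sqrt6$ sidesteps this but only because of the lossy step above. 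So the proposal as written proves a statement weaker than the theorem by a factor of order $\sqrt n$; the missing idea is symmetrization plus conditional subgaussianity.
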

The proof makes essential use of symmetrization using an auxiliary $n$-tuple $\eps$ of i.i.d.\ Rademacher random variables, which allows us to apply Lemma~\ref{lm:decorrelation} conditionally on $\tilde{\Samp}$.

The coupling-based formulation looks rather complicated compared to the setting of Section~\ref{sec:abs_gen}. However, being able to choose not just the ``prior'' $Q_W$, but also the couplings $P_{UV|\Samp}$ of $P_{W|\Samp}$ and $Q_W$ and the reference measure $\mu_{UV}$, allows us to overcome some of the shortcomings of the set-up of Section~\ref{sec:abs_gen}. Consider, for example, the case when the learning algorithm ignores the data, i.e., $P_{W|\Samp} = P_W$. Then we can choose $Q_W = P_W$, $P_{UV|\Samp}(\dif u, \dif v) = P_W(\dif u) \otimes \delta_u(\dif v)$, where $\delta_u$ is the Dirac measure concentrated on the point $u$, and $\mu_{UV} = P_{UV}$ (since the latter does not depend on $\Samp$). With these choices, $U = V$ and $\bar{U} = \bar{V}$ almost surely, so the right-hand side of \eqref{eq:gen_coupling} is identically zero. By contrast, the bounds of Theorems~\ref{thm:absgen_1} and \ref{thm:absgen_2} always include an additional $O(1/\sqrt{n})$ term even when $W$ and $\tilde{\Samp}$ are independent.

Moreover, Theorem~\ref{thm:gen_coupling} can be used to recover the bounds of Theorems~\ref{thm:absgen_1} and \ref{thm:absgen_2} up to multiplicative constants. For example, to recover Theorem~\ref{thm:absgen_1}, we apply Theorem~\ref{thm:gen_coupling} with $P_{UV|\Samp} = P_{W|\Samp} \otimes Q_W$, $\mu_{UV} = Q_W \otimes Q_W$, and with an estimate on $\sigma(U,V,\tilde{\Samp})$ based on the subgaussianity of $\ell(w,Z)$. 

For a more manageable bound that will be useful later, let us define the following for $u,v \in \cW$:
	\begin{align*}
		d_{\Samp,\ell}(u,v) &\deq \bigg( \frac{1}{n}\sum^n_{i=1} \big(\ell(u,Z_i)-\ell(v,Z_i)\big)^2\bigg)^{1/2} \equiv \| \ell(u,\cdot)-\ell(v,\cdot)\|_{L^2(P_n)}\\
		d_\ell(u,v) &\deq \bigg(\E\big[\big(\ell(u,Z)-\ell(v,Z)\big)^2\big]\bigg)^{1/2} \equiv \| \ell(u,\cdot) - \ell(v,\cdot) \|_{L^2(P_Z)},
	\end{align*}
\begin{corollary}\label{cor:gen_coupling} Under the assumptions of Theorem~\ref{thm:gen_coupling},
	\begin{align*}
		\E[{\rm gen}(W,\Samp)] \le \sqrt{\frac{48}{n}}\E\bigg[\big(d_\ell(U,V) + d_{\Samp,\ell}(U,V)\big)\psi^{-1}_2\bigg(\frac{\dif P_{UV|\Samp}}{\dif \mu_{UV}}\bigg) + d_\ell(\bar{U},\bar{V})\Bigg].
	\end{align*}
\end{corollary}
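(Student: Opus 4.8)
The plan is to read Corollary~\ref{cor:gen_coupling} as an immediate specialization of Theorem~\ref{thm:gen_coupling}, obtained by replacing the two sample-dependent quantities $\sigma(U,V,\tilde\Samp)$ and $\sqrt{\E[\sigma^2(\bar U,\bar V,\tilde\Samp)\mid\tilde\Samp]}$ in \eqref{eq:gen_coupling} with cruder expressions built from $d_\ell$ and $d_{\Samp,\ell}$, using only the triangle inequality in $\Reals^n$, Jensen's inequality, and the independence structure fixed in Theorem~\ref{thm:gen_coupling} (under that joint law, $(\Samp,U,V)$, $(\bar U,\bar V)$, and $\Samp'$ are mutually independent, $(\bar U,\bar V)\sim\mu_{UV}$, and $\Samp,\Samp'$ are i.i.d.). First I would record the pointwise estimate: writing $a_i=\ell(u,z'_i)-\ell(v,z'_i)$, $b_i=\ell(u,z_i)-\ell(v,z_i)$, Minkowski's inequality in $\Reals^n$ gives $\sigma(U,V,\tilde\Samp)=\|(a_i-b_i)_{i\le n}\|_{\ell^2}\le\|(a_i)\|_{\ell^2}+\|(b_i)\|_{\ell^2}=\sqrt n\big(d_{\Samp',\ell}(U,V)+d_{\Samp,\ell}(U,V)\big)$ almost surely, where $d_{\Samp',\ell}$ is defined exactly like $d_{\Samp,\ell}$ but with the auxiliary sample $\Samp'$. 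Substituting this into the first term of \eqref{eq:gen_coupling} and using that $\psi^{-1}_2(\dif P_{UV|\Samp}/\dif\mu_{UV})$ is a nonnegative function of $(\Samp,U,V)$ alone while $\Samp'$ is independent of $(\Samp,U,V)$, I would condition on $(\Samp,U,V)$ and apply Jensen's inequality (concavity of $\sqrt{\cdot}$) together with $\E[d_{\Samp',\ell}(u,v)^2]=d_\ell(u,v)^2$ to replace $d_{\Samp',\ell}(U,V)$ by $d_\ell(U,V)$; the first expectation in \eqref{eq:gen_coupling} is then at most $\sqrt n\,\E\big[(d_\ell(U,V)+d_{\Samp,\ell}(U,V))\,\psi^{-1}_2(\dif P_{UV|\Samp}/\dif\mu_{UV})\big]$.

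For the second term I would pull the square root through the outer expectation by Jensen, $\E\big[\sqrt{\E[\sigma^2(\bar U,\bar V,\tilde\Samp)\mid\tilde\Samp]}\big]\le\big(\E[\sigma^2(\bar U,\bar V,\tilde\Samp)]\big)^{1/2}$, and then compute the right-hand side by conditioning on $(\bar U,\bar V)=(u,v)$. Since $(\bar U,\bar V)\indep\tilde\Samp$ and the $2n$ variables $Z_i,Z'_i$ are i.i.d., each of the $n$ summands equals $\E\big[\big((\ell(u,Z'_1)-\ell(v,Z'_1))-(\ell(u,Z_1)-\ell(v,Z_1))\big)^2\big]=2\var\big(\ell(u,Z)-\ell(v,Z)\big)\le 2\,d_\ell(u,v)^2$, the cross term dropping out because $\ell(u,Z_1)-\ell(v,Z_1)$ and its primed copy share the same mean. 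Hence $\E[\sigma^2(\bar U,\bar V,\tilde\Samp)]\le 2n\,\E[d_\ell(\bar U,\bar V)^2]$ and the second term is at most $\sqrt{2n}\,\|d_\ell(\bar U,\bar V)\|_{L^2(\mu_{UV})}$. Inserting both bounds into \eqref{eq:gen_coupling}, whose prefactor is $\sqrt{24}/n$, and using $\sqrt{24}\cdot\sqrt n/n\le\sqrt{48/n}$ and $\sqrt{24}\cdot\sqrt{2n}/n=\sqrt{48/n}$ gives the stated inequality --- with the caveat that the natural argument delivers the last term as the $L^2(\mu_{UV})$-norm $\|d_\ell(\bar U,\bar V)\|_{L^2(\mu_{UV})}$, which coincides with $\E[d_\ell(\bar U,\bar V)]$ when $\mu_{UV}$ is a point mass (both, in particular, vanish in the data-independent example of Section~\ref{sec:gen_coupling}).

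I do not anticipate a genuine obstacle; the one point that rewards care is the second term. Bounding $\sigma$ there by the Minkowski estimate above before squaring would cost an extra factor --- yielding $2\sqrt n$ in place of $\sqrt{2n}$ --- and would also leave a mixture of $d_{\Samp,\ell}$ and $d_{\Samp',\ell}$; getting the clean constant $\sqrt{48}$ and the appearance of $d_\ell$ alone instead relies on the mean-cancellation identity $\E[(A-B)^2]=2\var A$ for i.i.d.\ $A,B$. The rest is simply a matter of tracking which variables are independent of which and the direction of each Jensen step.
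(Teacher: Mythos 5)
Your handling of the first term is essentially the paper's: the paper bounds $\sigma^2(u,v,\tilde{\Samp})\le 2\sum_{i}\big[(\ell(u,Z'_i)-\ell(v,Z'_i))^2+(\ell(u,Z_i)-\ell(v,Z_i))^2\big]$ and applies Jensen conditionally on $(U,V,\Samp)$ to get $\E[\sigma(U,V,\tilde{\Samp})\mid U,V,\Samp]\le\sqrt{2n}\big(d_\ell(U,V)+d_{\Samp,\ell}(U,V)\big)$; your Minkowski-plus-Jensen route gives the same (in fact slightly sharper) estimate, and either way the term sits under the $\sqrt{48/n}$ prefactor.

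The one real issue is the last term, and you have flagged it yourself: reading the second term of \eqref{eq:gen_coupling} literally, with the inner conditioning on $\tilde{\Samp}$, and pulling the square root through the outer expectation by Jensen can only give $\sqrt{48/n}\,\big(\E[d_\ell^2(\bar U,\bar V)]\big)^{1/2}$, which is \emph{weaker} than the stated $\sqrt{48/n}\,\E[d_\ell(\bar U,\bar V)]$, so the corollary as printed is not reached by your argument. The paper proceeds with the opposite conditioning: it proves the pointwise bound $\sqrt{\E[\sigma^2(\bar U,\bar V,\tilde{\Samp})\mid \bar U,\bar V]}\lesssim \sqrt{n}\, d_\ell(\bar U,\bar V)$ and then integrates over $(\bar U,\bar V)\sim\mu_{UV}$, which produces the expectation of $d_\ell(\bar U,\bar V)$ rather than its $L^2(\mu_{UV})$ norm. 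This is legitimate because the proof of Theorem~\ref{thm:gen_coupling} in fact establishes the second term in that form: there Cauchy--Schwarz is applied over $\tilde{\Samp}$ at fixed $(u,v)$, so what comes out is $\E_{(\bar U,\bar V)\sim\mu_{UV}}\big[\sqrt{\E[\sigma^2(\bar U,\bar V,\tilde{\Samp})\mid \bar U,\bar V]}\,\big]$; the conditioning on $\tilde{\Samp}$ written in the statement of \eqref{eq:gen_coupling} does not match its own proof, and since the two iterated quantities are not comparable in general, the $L^1$ form cannot be recovered from the statement as you read it --- one has to use the theorem in the form its proof delivers. If you do that, your own mean-cancellation computation $\E[\sigma^2(u,v,\tilde{\Samp})]\le 2n\,d_\ell^2(u,v)$ applies pointwise in $(u,v)$ and yields exactly $\sqrt{48/n}\,\E[d_\ell(\bar U,\bar V)]$; indeed your $\sqrt{2n}$ constant is what makes $\sqrt{48}$ work there, whereas the paper's cruder $(a-b)^2\le 2a^2+2b^2$ estimate gives $2\sqrt{n}\,d_\ell(\bar U,\bar V)$, i.e.\ a $\sqrt{96/n}$ prefactor for that term.
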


\section{Refined estimates via chaining in the space of measures}
\label{sec:chaining}

We now combine the use of couplings as in Section~\ref{sec:gen_coupling} with a chaining argument. The basic idea is as follows: Instead of coupling $P_{W|\Samp}$ with $Q_W$ directly, we interpolate between them using a (possibly infinite) sequence of Markov kernels $P^0_{W|\Samp},P^1_{W|\Samp},\dots,P^K_{W|\Samp}$, such that $P^0_{W|\Samp} = Q_W$ and $P^K_{W|\Samp} = P_{W|\Samp}$ (or $\lim_{k \to \infty} P^k_{W|\Samp} = P_{W|\Samp}$ in an appropriate sense, e.g., weakly for each $\Samp$,  if the sequence is infinite). Given any such sequence, we telescope the terms in \eqref{eq:coupling_decomp} as follows:
\begin{align*}	\E[{\rm gen}(W,\Samp)] =\int_{\cZ \times \cZ}P_{\tilde{\Samp}}(\dif\tilde{\samp}) \sum^{K}_{k=1} \Big( \langle P^k_{W|\Samp=\samp}, \tilde{L}_n \rangle - \langle P^{k-1}_{W|\Samp=\samp}, \tilde{L}_n \rangle \Big).
\end{align*}
For each $k$, we can now choose a family of random couplings $P_{W_kW_{k-1}|\Samp} \in \Pi(P^k_{W|\Samp},P^{k-1}_{W|\Samp})$ and a deterministic probability measure $\rho_{W_kW_{k-1}} \in \cP(\cW \times \cW)$. The following is an immediate consequence of applying Corollary~\ref{cor:gen_coupling} to each summand:

\begin{theorem}\label{thm:chain_couple} Let $P_{W|\Samp}$, $Q_W$, $P_{W_kW_{k-1}|\Samp}$, and $\rho_{W_kW_{k-1}}$ be given as above. Then
	\begin{align*}
	\!\!\!\!\!\!&	\E[{\rm gen}(W,\Samp)] \nonumber \\
	\!\!\!\!\!\!& \le \sqrt{\frac{48}{n}}\sum^K_{k=1} \E\Bigg[ \big(d_\ell(W_k,W_{k-1})+d_{\Samp,\ell}(W_k,W_{k-1})\big)\psi^{-1}_2\bigg(\frac{\dif P_{W_kW_{k-1}|\Samp}}{\dif \rho_{W_kW_{k-1}}}\bigg) + d_\ell(\bar{W}_k,\bar{W}_{k-1})\Bigg],
	\end{align*}
	where in the $k$th term on the right-hand side $(\Samp,W_k,W_{k-1})$ are jointly distributed according to $P_{\Samp} \otimes P_{W_kW_{k-1}|\Samp}$ and $(\bar{W}_k,\bar{W}_{k-1})$ are jointly distributed according to $\rho_{W_kW_{k-1}}$.
\end{theorem}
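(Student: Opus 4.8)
The plan is to obtain Theorem~\ref{thm:chain_couple} by telescoping the decomposition \eqref{eq:coupling_decomp} along the interpolating sequence and then applying Corollary~\ref{cor:gen_coupling} to each increment. The display immediately preceding the theorem already exhibits $\E[{\rm gen}(W,\Samp)]$ as $\int P_{\tilde\Samp}(\dif\tilde\samp)\sum_{k=1}^K\big(\langle P^k_{W|\Samp=\samp},\tilde{L}_n\rangle - \langle P^{k-1}_{W|\Samp=\samp},\tilde{L}_n\rangle\big)$, the point being that this telescoped sum collapses to $\langle P^K_{W|\Samp=\samp},\tilde{L}_n\rangle - \langle P^0_{W|\Samp=\samp},\tilde{L}_n\rangle$, whose integral against $P_{\tilde\Samp}$ equals $\E[{\rm gen}(W,\Samp)] - 0$ because $P^K_{W|\Samp} = P_{W|\Samp}$ while $P^0_{W|\Samp} = Q_W$ is a fixed measure with $\langle P_{\tilde\Samp}\otimes Q_W,\tilde{L}_n\rangle = 0$. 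Hence it suffices to bound each integrated increment $\int P_{\tilde\Samp}(\dif\tilde\samp)\big(\langle P^k_{W|\Samp=\samp},\tilde{L}_n\rangle - \langle P^{k-1}_{W|\Samp=\samp},\tilde{L}_n\rangle\big)$ separately and add the bounds.

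First I would observe that the $k$th integrated increment is exactly the quantity controlled by Theorem~\ref{thm:gen_coupling}, hence by Corollary~\ref{cor:gen_coupling}, under the identifications $P_{W|\Samp}\leftrightarrow P^k_{W|\Samp}$, $Q_W\leftrightarrow P^{k-1}_{W|\Samp}$, $P_{UV|\Samp}\leftrightarrow P_{W_kW_{k-1}|\Samp}$, and $\mu_{UV}\leftrightarrow\rho_{W_kW_{k-1}}$: the coupling $P_{W_kW_{k-1}|\Samp=\samp}\in\Pi(P^k_{W|\Samp=\samp},P^{k-1}_{W|\Samp=\samp})$ and the reference measure $\rho_{W_kW_{k-1}}$ are precisely the objects the corollary asks for. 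Applying Corollary~\ref{cor:gen_coupling} to the $k$th increment produces $\sqrt{48/n}\,\E\big[\big(d_\ell(W_k,W_{k-1}) + d_{\Samp,\ell}(W_k,W_{k-1})\big)\psi^{-1}_2\big(\dif P_{W_kW_{k-1}|\Samp}/\dif\rho_{W_kW_{k-1}}\big) + d_\ell(\bar{W}_k,\bar{W}_{k-1})\big]$ with the stated joint laws; summing over $k = 1,\dots,K$ gives the asserted inequality.

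The one step that is more than pure bookkeeping is that Corollary~\ref{cor:gen_coupling} (and Theorem~\ref{thm:gen_coupling}) are phrased for a \emph{fixed} prior $Q_W\in\cP(\cW)$, whereas here the prior at stage $k$ is the genuine Markov kernel $P^{k-1}_{W|\Samp}$, which depends on the sample. To close this gap I would revisit the proof of Theorem~\ref{thm:gen_coupling}: its symmetrization and its use of the decorrelation lemma are carried out \emph{conditionally on $\tilde\Samp$}, and conditionally on $\tilde\Samp = \tilde\samp$ the kernel $P^{k-1}_{W|\Samp=\samp}$ is just a fixed probability measure, namely the (conditional) second marginal of the coupling $P_{W_kW_{k-1}|\Samp=\samp}$, which is all the argument ever uses. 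The fixedness of $Q_W$ enters Section~\ref{sec:gen_coupling} only through the identity $\langle P_{\tilde\Samp}\otimes Q_W,\tilde{L}_n\rangle = 0$, and in the telescoped sum this is needed solely for the endpoint $P^0_{W|\Samp} = Q_W$, not for the individual increments. So the increment bound of Corollary~\ref{cor:gen_coupling} carries over verbatim with $P^{k-1}_{W|\Samp}$ in place of $Q_W$, and no new analysis is required.

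The part I expect to require the most care is the case of an infinite interpolating sequence ($K = \infty$). There one must verify that $\langle P^k_{W|\Samp=\samp},\tilde{L}_n\rangle \to \langle P_{W|\Samp=\samp},\tilde{L}_n\rangle$ under whatever convergence mode is assumed for $(P^k_{W|\Samp})$ (weak convergence suffices once $\ell$ satisfies a mild boundedness/integrability condition making $\tilde{L}_n(\cdot)$ an admissible test function), that the resulting series of increment bounds converges, and then pass the limit through the integral against $P_{\tilde\Samp}$ by monotone or dominated convergence. This tail/convergence handling, rather than the termwise estimate, is the only genuinely delicate ingredient; given Corollary~\ref{cor:gen_coupling} and the conditional-on-$\tilde\Samp$ observation above, the finite-$K$ statement is immediate.
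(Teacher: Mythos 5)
Your proposal is correct and follows the paper's own route: the paper also obtains this theorem by telescoping the decomposition \eqref{eq:coupling_decomp} along the interpolating kernels and applying Corollary~\ref{cor:gen_coupling} (i.e., the conditional-on-$\tilde{\Samp}$ argument of Theorem~\ref{thm:gen_coupling}) to each increment, which is exactly what you do. Your extra remarks --- that the sample-dependence of the stage-$k$ ``prior'' $P^{k-1}_{W|\Samp}$ is harmless because the fixedness of $Q_W$ is only used for the zero-mean identity at the $k=0$ endpoint, and that the infinite-$K$ case needs a limiting/convergence argument --- are valid refinements of a step the paper treats as immediate.
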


Apart from Theorem~\ref{thm:absgen_1}, we have been imposing only minimal assumptions on $\ell$ and then using symmetrization to construct various subgaussian random variables conditionally on $W$ and $\tilde{\Samp}$. For the next series of results, we will assume something more, namely that $(\cW,d)$ is a metric space and that the following holds for the centered loss $\bar{\ell}(w,z) \deq \ell(w,z) - \E[\ell(w,Z)]$:
\begin{align}\label{eq:subgaussian_lbar}
\Bigg\|\sum_{i=1}^{n}(\bar{\ell}(u,Z_i)-\bar{\ell}(v,Z_i))\Bigg\|_{\psi_2} \le  \sqrt{n}d(u,v),\quad \forall u,v \in \cW.
\end{align}
In other words, the centered empirical process $\frac{1}{\sqrt{n}}\sum^n_{i=1}\bar{\ell}(w,Z_i)$ indexed by the elements of $(\cW,d)$ is a subgaussian process \cite{van_der_vaart_weak_1996,Gine_Nickl_book,talagrand_upper_2014}. 

\begin{theorem} \label{thm:gen_orlicz} Assume \eqref{eq:subgaussian_lbar}. Then
\begin{align}\label{eq:gen_orlicz}
\E[{\rm gen}(W,\Samp)]  \le \sqrt{\frac{2}{n}}\sum^K_{k=1}\E\Bigg[ d(W_k,W_{k-1})\psi^{-1}_2\bigg(\frac{\dif P_{W_kW_{k-1}|\Samp}}{\dif \rho_{W_kW_{k-1}}}\bigg) + d(\bar{W}_k,\bar{W}_{k-1})\Bigg]
\end{align}
\end{theorem}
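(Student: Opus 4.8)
The plan is to redo the coupling-plus-chaining argument of Theorems~\ref{thm:gen_coupling}--\ref{thm:chain_couple} under the stronger hypothesis~\eqref{eq:subgaussian_lbar}. The improvement over Corollary~\ref{cor:gen_coupling} (the clean metric $d$ and the constant $\sqrt 2$ in place of $d_\ell+d_{\Samp,\ell}$ and $\sqrt{48}$) comes from the fact that \eqref{eq:subgaussian_lbar} lets us feed the decorrelation scheme following Lemma~\ref{lm:decorrelation} \emph{directly}, with no symmetrization and no auxiliary sample $\Samp'$.

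First I would set up the telescoping decomposition around ${\rm gen}(\cdot,\Samp)$ itself. Since $\E_{\Samp}[L_n(w)]=L(w)$, we have $\langle P_{\Samp}\otimes Q_W,{\rm gen}\rangle=0$, so \eqref{eq:coupling_decomp} holds with $\tilde L_n$ replaced by ${\rm gen}(\cdot,\Samp)$; inserting the interpolating chain $P^0_{W|\Samp}=Q_W,\dots,P^K_{W|\Samp}=P_{W|\Samp}$ together with the couplings $P_{W_kW_{k-1}|\Samp}$ and telescoping gives
\begin{align*}
\E[{\rm gen}(W,\Samp)] = \sum_{k=1}^K \E\big[{\rm gen}(W_k,\Samp) - {\rm gen}(W_{k-1},\Samp)\big],
\end{align*}
the $k$-th expectation being over $(\Samp,W_k,W_{k-1})\sim P_{\Samp}\otimes P_{W_kW_{k-1}|\Samp}$ together with $(\bar W_k,\bar W_{k-1})\sim\rho_{W_kW_{k-1}}$ drawn independently of $(\Samp,W_k,W_{k-1})$. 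The one algebraic identity needed is ${\rm gen}(w,\Samp)=L(w)-L_n(w)=-\frac1n\sum_{i=1}^n\bar{\ell}(w,Z_i)$, which gives ${\rm gen}(W_k,\Samp)-{\rm gen}(W_{k-1},\Samp)=-\frac1n B_k$ with $B_k \deq \sum_{i=1}^n(\bar{\ell}(W_k,Z_i)-\bar{\ell}(W_{k-1},Z_i))$, so it suffices to bound $\E|B_k|$ for each $k$.

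Next I would apply the decorrelation scheme with $p=2$, taking $X=\Samp$ (with law $P_Z^{\otimes n}$), $Y=(W_k,W_{k-1})$, $P_{Y|X}=P_{W_kW_{k-1}|\Samp}$, $Q_Y=\rho_{W_kW_{k-1}}$, and
\begin{align*}
f(w,w') \deq \sqrt n\,d(w,w'),\qquad g(\samp,(w,w')) \deq \frac{\big|\sum_{i=1}^n(\bar{\ell}(w,z_i)-\bar{\ell}(w',z_i))\big|}{\sqrt n\,d(w,w')}
\end{align*}
with the convention $0/0\deq 0$ (harmless: $d$ being a metric, $d(w,w')=0$ forces $w=w'$, hence $B_k\equiv 0$ there, so $f(Y)g(X,Y)=|B_k|$ identically). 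Unwinding the definition of the $\psi_2$-norm, \eqref{eq:subgaussian_lbar} states precisely that $\E_{\Samp\sim P_Z^{\otimes n}}[\psi_2(g(\Samp,(w,w')))]\le 1$ for every $(w,w')$ --- exactly the integrability hypothesis the scheme requires. The scheme therefore yields
\begin{align*}
\E|B_k| \le \sqrt2\;\E\Bigg[\sqrt n\,d(W_k,W_{k-1})\,\psi_2^{-1}\Bigg(\frac{\dif P_{W_kW_{k-1}|\Samp}}{\dif\rho_{W_kW_{k-1}}}\Bigg)\Bigg] + \E\big[\sqrt n\,d(\bar W_k,\bar W_{k-1})\big].
\end{align*}
Dividing by $n$, using $n^{-1/2}\le\sqrt{2/n}$ on the last term, and summing over $k=1,\dots,K$ reproduces \eqref{eq:gen_orlicz}.

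The remaining steps ($\langle P_{\Samp}\otimes Q_W,{\rm gen}\rangle=0$, the coupling/telescoping bookkeeping, and the degenerate case in which $P_{W_kW_{k-1}|\Samp=\samp}\not\ll\rho_{W_kW_{k-1}}$ on a set of positive $P_{\Samp}$-measure, where the corresponding right-hand-side term is $+\infty$ and nothing is to be proved) are routine. I expect the one delicate point to be an infinite chain $K=\infty$: passing the telescoping sum to the limit requires $\langle P^k_{W|\Samp=\samp},{\rm gen}(\cdot,\samp)\rangle\to\langle P_{W|\Samp=\samp},{\rm gen}(\cdot,\samp)\rangle$, which in general needs a uniform-integrability condition on the convergence $P^k_{W|\Samp}\to P_{W|\Samp}$ beyond bare weak convergence; I would impose this as a standing assumption (as the surrounding discussion does informally) rather than establish it.
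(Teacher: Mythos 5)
Your proposal is correct and matches the paper's own argument: the paper likewise telescopes $\E[{\rm gen}(W,\Samp)]$ through the chain without symmetrization and applies Lemma~\ref{lm:decorrelation} (via the scheme following it) conditionally on $\Samp$ with $f(u,v)\propto d(u,v)$, $g(u,v)=\big|\sum_{i=1}^n(\bar{\ell}(u,Z_i)-\bar{\ell}(v,Z_i))\big|/(\sqrt{n}\,d(u,v))$, $\mu=P_{W_kW_{k-1}|\Samp}$, and $\nu=\rho_{W_kW_{k-1}}$, with \eqref{eq:subgaussian_lbar} supplying the $\psi_2$-integrability hypothesis. Your added remarks on the $0/0$ convention, absolute continuity, and the $K=\infty$ limit are sensible housekeeping that the paper leaves implicit.
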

As a byproduct, we recover the stochastic chaining bounds of Zhou et al.~\cite{zhou_stochastic_2022} (which, in turn, subsume the bounds of Asadi et al.~\cite{asadi_chaining_2018}):

\begin{corollary}\label{cor:stoch_chaining} Let $P_Z$ and $P_{W|\Samp}$ be given, and let $P_W$ be the marginal law of $W$. Let $\big(P_{W_k|\Samp}\big)_{k \ge 0}$ be a sequence of Markov kernels satisfying the following conditions: (i) $P_{W_0|\Samp} = P_W$; (ii) $P_{W_k|\Samp} \xrightarrow{k \to \infty} P_{W|\Samp}$; (iii) for every $k \ge 1$, $\Samp - W - W_{k} - W_{k-1}$ is a Markov chain. Then
	\begin{align}
		\E[{\rm gen}(W,\Samp)] &\le \sqrt{\frac{2}{n}}\sum^\infty_{k=1} \E\Big[d(W_k,W_{k-1})\Big(\sqrt{D(P_{\Samp|W_k} \| P_{\Samp})} + 1 \Big)\Big] \label{eq:stoch_chain_1}\\
		&\le \sqrt{\frac{2}{n}} \sum^\infty_{k=1} \sqrt{\E[d^2(W_k,W_{k-1})]} (\sqrt{I(W_k; \Samp)} + 2). \label{eq:stoch_chain_2}
	\end{align}
\end{corollary}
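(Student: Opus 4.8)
The plan is to instantiate Theorem~\ref{thm:gen_orlicz} with the specific chain $P^k_{W|\Samp} \deq P_{W_k|\Samp}$ furnished by the hypotheses, and then to bound the resulting expression term by term using Proposition~\ref{prop:psi_KL}. So first I would set up the couplings: for each $k \ge 1$, the Markov chain condition $\Samp - W - W_k - W_{k-1}$ means that conditionally on $\Samp$, the pair $(W_k, W_{k-1})$ has a well-defined joint law $P_{W_k W_{k-1}|\Samp}$, which I take as the coupling in Theorem~\ref{thm:gen_orlicz}. For the deterministic reference measure I would choose $\rho_{W_k W_{k-1}} \deq P_{W_k W_{k-1}}$, the \emph{marginal} joint law of $(W_k, W_{k-1})$ (averaging out $\Samp$); this is legitimate since it does not depend on $\Samp$. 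With these choices the Radon--Nikodym derivative $\frac{\dif P_{W_k W_{k-1}|\Samp}}{\dif \rho_{W_k W_{k-1}}}$ appearing in \eqref{eq:gen_orlicz} is exactly $\frac{\dif P_{W_k W_{k-1}|\Samp}}{\dif P_{W_k W_{k-1}}}$.

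The second step is to simplify this derivative using the Markov structure. Because $W_{k-1}$ is conditionally independent of $\Samp$ given $W_k$ (a consequence of the Markov chain (iii), reading it as $\Samp - W_k - W_{k-1}$ after contracting $W$), the conditional law $P_{W_{k-1}|W_k \Samp}$ equals $P_{W_{k-1}|W_k}$, so
\begin{align*}
\frac{\dif P_{W_k W_{k-1}|\Samp}}{\dif P_{W_k W_{k-1}}}(w_k, w_{k-1}) = \frac{\dif P_{W_k|\Samp}}{\dif P_{W_k}}(w_k).
\end{align*}
Then I would apply $\psi^{-1}_2$ and take the conditional expectation given $\Samp$ (using that $d(W_k, W_{k-1})$ is already handled by Cauchy--Schwarz, see below); invoking Proposition~\ref{prop:psi_KL} with $p = 2$, $\mu = P_{W_k|\Samp}$, $\nu = P_{W_k}$ conditionally on $\Samp$ gives
\begin{align*}
\cexpect{\psi^{-1}_2\Big(\tfrac{\dif P_{W_k|\Samp}}{\dif P_{W_k}}(W_k)\Big)}{\Samp} \le \big(D(P_{W_k|\Samp} \| P_{W_k} | \Samp = \Samp) + 1\big)^{1/2}.
\end{align*}
Here I would need to massage the expression so the randomness of $d(W_k, W_{k-1})$ and of the derivative are disentangled: since the derivative depends only on $(W_k, \Samp)$, I would first integrate out $W_{k-1}$ against $d(W_k, W_{k-1})$, or alternatively bound $d(W_k, W_{k-1})$ via Cauchy--Schwarz as $\sqrt{\E[d^2(W_k,W_{k-1})]}$ only at the very end; the cleaner route for \eqref{eq:stoch_chain_1} is to keep $d(W_k,W_{k-1})$ inside and note that it depends on $(W_k, W_{k-1})$ while the divergence factor can be pulled out via the chain $W_{k-1} - W_k - \Samp$ conditioning argument. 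To turn the conditional divergence into the stated $D(P_{\Samp|W_k}\|P_{\Samp})$, I would use the symmetry of mutual information: $\E_\Samp[D(P_{W_k|\Samp}\|P_{W_k}|\Samp)] = I(W_k;\Samp) = \E_{W_k}[D(P_{\Samp|W_k}\|P_\Samp)]$, which is exactly what is needed to obtain the pointwise-in-$W_k$ form in \eqref{eq:stoch_chain_1}.

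For the last inequality \eqref{eq:stoch_chain_2}, I would apply Cauchy--Schwarz to the product $d(W_k, W_{k-1}) \cdot (\sqrt{D(P_{\Samp|W_k}\|P_\Samp)} + 1)$ inside the expectation, giving
\begin{align*}
\E\big[d(W_k,W_{k-1})(\sqrt{D(P_{\Samp|W_k}\|P_\Samp)}+1)\big] \le \sqrt{\E[d^2(W_k,W_{k-1})]}\,\sqrt{\E\big[(\sqrt{D(P_{\Samp|W_k}\|P_\Samp)}+1)^2\big]},
\end{align*}
and then use $(\sqrt{a}+1)^2 \le 2a + 2$ together with $\E[D(P_{\Samp|W_k}\|P_\Samp)] = I(W_k;\Samp)$ to get $\sqrt{2 I(W_k;\Samp) + 2} \le \sqrt{2}(\sqrt{I(W_k;\Samp)} + 1)$, so that $\sqrt{2}\cdot\sqrt{2}(\sqrt{I(W_k;\Samp)}+1) = 2(\sqrt{I(W_k;\Samp)}+1)$ matches the claimed constant after accounting for the $\sqrt{2/n}$ prefactor and the $\bar W$-term. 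I would also need to check that the $d(\bar W_k, \bar W_{k-1})$ term from \eqref{eq:gen_orlicz} is absorbed correctly: since $(\bar W_k, \bar W_{k-1}) \sim \rho_{W_k W_{k-1}} = P_{W_k W_{k-1}}$, we have $\E[d(\bar W_k, \bar W_{k-1})] = \E[d(W_k, W_{k-1})] \le \sqrt{\E[d^2(W_k,W_{k-1})]}$, which is why the ``$+1$'' appears inside and then becomes ``$+2$'' after combining with the ``$+1$'' from Proposition~\ref{prop:psi_KL}. The main obstacle I anticipate is the bookkeeping in the second step — correctly justifying the reduction of the joint Radon--Nikodym derivative to the single-coordinate one via the Markov property, and making sure the factor $d(W_k,W_{k-1})$ (which genuinely depends on both endpoints) is handled without breaking the clean divergence bound; this requires carefully ordering the conditional expectations (condition on $W_k$, use $W_{k-1} \indep \Samp \mid W_k$) rather than a naive application of Cauchy--Schwarz too early. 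The convergence/telescoping for the infinite sequence ($K \to \infty$) is a routine limiting argument given condition (ii), so I would treat it only briefly.
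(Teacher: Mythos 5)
Your setup coincides with the paper's: the same couplings $P_{W_kW_{k-1}|\Samp}$ induced by the Markov chain, the same reference measure $\rho_{W_kW_{k-1}}=P_{W_kW_{k-1}}$, and Theorem~\ref{thm:gen_orlicz} as the engine. The gap is in the concrete mechanism you give for \eqref{eq:stoch_chain_1}. You apply Proposition~\ref{prop:psi_KL} conditionally on $\Samp$ with $\mu=P_{W_k|\Samp}$, $\nu=P_{W_k}$, which produces $D(P_{W_k|\Samp}\|P_{W_k})$ --- a function of $\Samp$ --- and then propose to convert it into the $W_k$-measurable quantity $D(P_{\Samp|W_k}\|P_\Samp)$ via the symmetry $\E_\Samp[D(P_{W_k|\Samp}\|P_{W_k})]=I(W_k;\Samp)=\E_{W_k}[D(P_{\Samp|W_k}\|P_\Samp)]$. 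That identity only equates \emph{unweighted} expectations; inside $\E[d(W_k,W_{k-1})(\cdot)]$ the weight $d(W_k,W_{k-1})$ is correlated with both $\Samp$ and $W_k$, so the two weighted expectations need not agree, and with your conditioning on $\Samp$ the weight cannot even be pulled out of the conditional expectation. The paper's proof sidesteps this by a pointwise Bayes-rule identity before any expectation is taken: $\frac{\dif P_{W_kW_{k-1}|\Samp}}{\dif P_{W_kW_{k-1}}}=\frac{\dif P_{\Samp|W_kW_{k-1}}}{\dif P_{\Samp}}=\frac{\dif P_{\Samp|W_k}}{\dif P_{\Samp}}$ (the last equality from $\Samp\indep W_{k-1}\mid W_k$), followed by Proposition~\ref{prop:psi_KL} applied conditionally on $(W_k,W_{k-1})$ with $\mu=P_{\Samp|W_k}$, $\nu=P_\Samp$; then $d(W_k,W_{k-1})$ is measurable with respect to the conditioning and factors out, which is exactly what yields the pointwise-in-$W_k$ form of \eqref{eq:stoch_chain_1}. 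You do gesture at ``condition on $W_k$'' in your closing remarks, but the steps you actually wrote condition the other way and lean on an inapplicable symmetry argument, so \eqref{eq:stoch_chain_1} is not established as stated.

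A secondary issue is the constant in \eqref{eq:stoch_chain_2}: bounding $\E\big[(\sqrt{D}+1)^2\big]$ via $(\sqrt{a}+1)^2\le 2a+2$ gives $\sqrt{\E[d^2(W_k,W_{k-1})]}\,\sqrt{2}\,(\sqrt{I(W_k;\Samp)}+1)$, i.e.\ effectively $2\sqrt{I}+2$ after absorbing the extra $\sqrt{2}$, which does not imply the stated $\sqrt{I}+2$ for large $I$. The fix is to split before applying Cauchy--Schwarz: $\E[d\sqrt{D}]\le\sqrt{\E[d^2]}\sqrt{\E[D]}=\sqrt{\E[d^2]}\sqrt{I(W_k;\Samp)}$ and $\E[d]\le\sqrt{\E[d^2]}$ (Jensen), which is what the paper means by ``Cauchy--Schwarz and Jensen'' and recovers the claimed constants. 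Your handling of the $\bar W$-term ($\rho_{W_kW_{k-1}}=P_{W_kW_{k-1}}$, so its expectation equals $\E[d(W_k,W_{k-1})]$) and the limiting argument in $K$ are fine.
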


Finally, we give an estimate based on $2$-Wasserstein distances (cf.~Section~\ref{sec:prelims} for definitions and notation). Let ${\sf W}_2(\cdot,\cdot)$ be the $2$-Wasserstein distance on $\cP_2(\cW)$ induced by the metric $d$ on $\cW$. A (constant-speed) \textit{geodesic} connecting two probability measures $P,Q \in \cP_2(\cW)$ is a continuous path $[0,1] \ni t \mapsto \rho_t \in \cP_2(\cW)$, such that $\rho_0 = P$, $\rho_1 = Q$, and $\sW_2(\rho_s,\rho_t) = (t-s)\sW_2(P,Q)$ for all $0 \le s \le t \le 1$ \cite{ambrosio,villani}. Then we have the following corollary of Theorem~\ref{thm:gen_orlicz}:

\begin{corollary}\label{cor:weighted_w2}  Let $P_Z$ and $P_{W|\Samp}$ be given, and let $P_W$ be the marginal law of $W$. With respect to 2-Wasserstein distance, let $(P_{W_k|\Samp})_{0 \le k \le K}$ be some points on the constant-speed geodesic $(\rho_t)_{t \in [0,1]}$ with endpoints $\rho_0 = P_{W_0|\Samp}=P_{W|\Samp}$ and $\rho_1 = P_{W_K|\Samp}=P_W$ (where $K$ may be infinite), i.e., there exist some $t_0 = 0 < t_1 < \dots < t_k < \dots < t_K = 1$, such that $P_{W_k|\Samp} = \rho_{t_k}$ for $k = 0,1,\dots$. For each $k$ let $P_{W_kW_{k-1}|\Samp}$ be the optimal coupling between the neighboring points $P_{W_{k-1}|\Samp}$ and $P_{W_k|\Samp}$, i.e., the one that achieves $\sW_2(P_{W_{k-1}|\Samp},P_{W_k|\Samp})$. Then
	\begin{align}
		&\E[{\rm gen}(W,\Samp)] \nonumber\\
		&\le \sqrt{\frac{2}{n}} \Bigg( 2\,\E [{\sf W}_2(P_{W|S}, P_W)] + \sum^K_{k=1} \E\Big[ {\sf W}_2(P_{W_k|\Samp}, P_{W_{k-1}|\Samp}) \sqrt{D(P_{W_kW_{k-1}|\Samp} || P_{W_kW_{k-1}})}\,\Big] \Bigg) \label{eq:weighted_w2}.
	\end{align}
\end{corollary}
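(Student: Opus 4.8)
The plan is to obtain \eqref{eq:weighted_w2} as a special case of Theorem~\ref{thm:gen_orlicz} for one carefully chosen interpolation, and then to estimate the two types of terms it produces. Concretely, I would invoke Theorem~\ref{thm:gen_orlicz} with $Q_W = P_W$ and with the geodesic points as the interpolating kernels, $P_{W_k\mid\Samp} = \rho_{t_k}$ (reading the chain from its prior end $P_{W_K\mid\Samp} = P_W$ down to $P_{W_0\mid\Samp} = P_{W\mid\Samp}$, which is legitimate because the right-hand side of \eqref{eq:gen_orlicz} is unchanged if the chain is traversed in reverse, $d$ being symmetric); for each $k$ I take $P_{W_kW_{k-1}\mid\Samp}$ to be the \emph{optimal} $\sW_2$-coupling of the neighbouring geodesic points $P_{W_{k-1}\mid\Samp}$ and $P_{W_k\mid\Samp}$, and I take the deterministic reference measure $\rho_{W_kW_{k-1}} \deq P_{W_kW_{k-1}} = \int_{\cZ^n} P_{W_kW_{k-1}\mid\Samp=\samp}\,P_{\Samp}(\dif\samp)$. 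Since $P_{W_kW_{k-1}\mid\Samp=\samp} \ll P_{W_kW_{k-1}}$ for $P_{\Samp}$-a.e.\ $\samp$, these choices are admissible, and it then remains to bound, for each $k$, the ``divergence term'' $\E[\,d(W_k,W_{k-1})\,\psi^{-1}_2(\Lambda_k)\,]$, where $\Lambda_k \deq \dif P_{W_kW_{k-1}\mid\Samp}/\dif P_{W_kW_{k-1}}$, and the ``error term'' $\E[\,d(\bar W_k,\bar W_{k-1})\,]$.

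For the error term, $(\bar W_k,\bar W_{k-1})$ has law $P_{W_kW_{k-1}}$, so Jensen's inequality and the optimality of the couplings give
\begin{align*}
\E[\,d(\bar W_k,\bar W_{k-1})\,] = \E\big[\langle P_{W_kW_{k-1}\mid\Samp},d\rangle\big] \le \E\Big[\big(\langle P_{W_kW_{k-1}\mid\Samp},d^2\rangle\big)^{1/2}\Big] = \E\big[\sW_2(P_{W_k\mid\Samp},P_{W_{k-1}\mid\Samp})\big].
\end{align*}
Because $t \mapsto \rho_t$ has constant speed, $\sW_2(P_{W_k\mid\Samp},P_{W_{k-1}\mid\Samp}) = (t_k-t_{k-1})\,\sW_2(P_{W\mid\Samp},P_W)$, so summing over $k$ telescopes the error contribution to $\sum_k(t_k-t_{k-1})\,\E[\sW_2(P_{W\mid\Samp},P_W)] = \E[\sW_2(P_{W\mid\Samp},P_W)]$.

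For the divergence term I would condition on $\Samp$ and apply Cauchy--Schwarz with respect to $P_{W_kW_{k-1}\mid\Samp}$ to the pair $d$ and $\psi^{-1}_2(\Lambda_k)$:
\begin{align*}
\E\big[\,d(W_k,W_{k-1})\,\psi^{-1}_2(\Lambda_k)\,\big|\,\Samp\,\big] \le \big(\langle P_{W_kW_{k-1}\mid\Samp},d^2\rangle\big)^{1/2}\Big(\big\langle P_{W_kW_{k-1}\mid\Samp},\big(\psi^{-1}_2(\Lambda_k)\big)^2\big\rangle\Big)^{1/2}.
\end{align*}
The first factor equals $\sW_2(P_{W_k\mid\Samp},P_{W_{k-1}\mid\Samp})$, as before. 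For the second, since $\psi^{-1}_2(x)^2 = \log(x+1)$ and $\log(a+1) = \log a + \log(1+a^{-1})$, Jensen's inequality gives, for any $\mu \ll \nu$, $\langle\mu,(\psi^{-1}_2(\dif\mu/\dif\nu))^2\rangle = D(\mu\|\nu) + \langle\mu,\log(1+\dif\nu/\dif\mu)\rangle \le D(\mu\|\nu)+\log 2$ --- essentially the estimate used to prove Proposition~\ref{prop:psi_KL}. Hence the $k$-th divergence term is at most $\E\big[\sW_2(P_{W_k\mid\Samp},P_{W_{k-1}\mid\Samp})\sqrt{D(P_{W_kW_{k-1}\mid\Samp}\|P_{W_kW_{k-1}})+\log 2}\,\big]$, and $\sqrt{D+\log 2} \le \sqrt{D}+1$ splits this into exactly the divergence term of \eqref{eq:weighted_w2} plus $\E[\sW_2(P_{W_k\mid\Samp},P_{W_{k-1}\mid\Samp})]$, which again telescopes by constant speed to a second copy of $\E[\sW_2(P_{W\mid\Samp},P_W)]$. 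Summing the two contributions produces the factor $2$ in front of $\E[\sW_2(P_{W\mid\Samp},P_W)]$ in \eqref{eq:weighted_w2}.

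I expect the obstacles to be technical rather than conceptual. First, one needs to know that a constant-speed $\sW_2$-geodesic from $P_{W\mid\Samp=\samp}$ to $P_W$ exists and can be chosen to depend measurably on $\samp$, and similarly that the optimal couplings $P_{W_kW_{k-1}\mid\Samp}$ admit a measurable selection; both are standard in Wasserstein spaces over Polish geodesic spaces (together with measurable-selection theorems) but should be recorded. Second --- and this is the single point where the argument is slightly lossier than the Markov-chain route behind Corollary~\ref{cor:stoch_chaining} --- since $d(W_k,W_{k-1})$ and the density $\Lambda_k$ are in general correlated, one cannot apply Proposition~\ref{prop:psi_KL} directly to the $d$-weighted expectation and must instead pass through Cauchy--Schwarz, paying the harmless additive $\log 2$ that is then absorbed into $\sqrt{D}+1$.
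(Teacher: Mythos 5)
Your proposal is correct and follows essentially the same route as the paper: apply Theorem~\ref{thm:gen_orlicz} with $\rho_{W_kW_{k-1}} = P_{W_kW_{k-1}}$ and the optimal geodesic couplings, bound the reference-measure term by $\E[\sW_2(P_{W_k|\Samp},P_{W_{k-1}|\Samp})]$ and telescope along the constant-speed geodesic, and handle the divergence term by conditional Cauchy--Schwarz plus (essentially) Proposition~\ref{prop:psi_KL}, with $\sqrt{D+c}\le\sqrt{D}+1$ producing the factor $2$. Your only deviations --- noting the harmless reversal of the chain's indexing and using the slightly tighter intermediate bound $D+\log 2$ in place of $D+1$ --- are cosmetic.
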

\noindent Observe that the first term on the right-hand side of \eqref{eq:weighted_w2} is the expected $2$-Wasserstein distance between the posterior $P_{W|\Samp}$ and the prior $P_W$, while the second term is a sum of ``divergence weighted'' Wasserstein distances. Also note that the form of the second term is in the spirit of the Dudley entropy integral \cite{van_der_vaart_weak_1996,Gine_Nickl_book,talagrand_upper_2014}, where the Wasserstein distance corresponds to the radius of the covering ball and the square root of the divergence corresponds to square root of the metric entropy. We should also point out that the result in Corollary~\ref{cor:weighted_w2} does not require Lipschitz continuity of the loss function $\ell(w,z)$ w.r.t.\ the hypothesis $w \in \cW$, except in a weaker stochastic sense as in \eqref{eq:subgaussian_lbar}, in contrast to some existing works that obtain generalization bounds using Wasserstein distances \cite{wang_wasserstein_2019,galves_wasserstein_2021}.

\section{Tail estimates}
\label{sec:tails}

Next, we turn to high-probability tail estimates on ${\rm gen}(W,\Samp)$. We start with the followoing simple observation: Assume $\ell(w,Z)$ is $\sigma$-subgaussian for all $w \in \cW$ when $Z \sim P_Z$. Then, for any $Q_W \in \cP(\cW)$ such that $P_{W|\Samp=\samp} \ll Q_W$ for all $\samp \in \cZ^n$, we have
\begin{align*}
	\E\Bigg[\exp\Bigg(\frac{{\rm gen}^2(W,\Samp)}{6\sigma^2/n} - \log\Big(1+\frac{\dif P_{W|\Samp}}{\dif Q_W}(W)\Big)\Bigg)\Bigg] \le \E\Bigg[\exp\Bigg(\frac{{\rm gen}^2(\bar{W},\Samp)}{6\sigma^2/n}\Bigg)\Bigg] \le 1
\end{align*}
with $\bar{W} \sim Q_W$ independent of $(\Samp,W)$. Thus, by Markov's inequality, for any $0 < \delta < 1$,
\begin{align*}
	\Pr\Bigg[|{\rm gen}(W,\Samp)| > \sigma \sqrt{\frac{6}{n}} \Bigg(\psi^{-1}_2\Big(\frac{\dif P_{W|\Samp}}{\dif Q_W}(W)\Big) + \sqrt{\log \frac{1}{\delta}}\Bigg)\Bigg] \le \delta. 
\end{align*}
In other words, $|{\rm gen}(W,\Samp)| \lesssim \frac{\sigma}{\sqrt{n}}\psi^{-1}_2\big(\frac{\dif P_{W|\Samp}}{\dif Q_W}\big)$ with high $P_{\Samp W}$-probability. Similar observations are made by Hellstr\"om and Durisi \cite{hellstrom_generalization_2020} with $Q_W = P_W$, giving high-probability bounds of the form  $|{\rm gen}(W,\Samp)| \lesssim \sqrt{\frac{\sigma^2 D(P_{W|\Samp} \| P_W)}{n}}$. Generalization bounds in terms of the divergence  $D(P_{W|\Samp} \| P_W)$ are also common in the PAC-Bayes literature \cite{mcallester_pac-bayesian_1999,tong_zhang_information-theoretic_2006}. Moreover, using the inequality \eqref{eq:decorrelation_2} in Lemma~\ref{lm:decorrelation}, we can give high $P_{\Samp}$-probability bounds on the conditional expectation
\begin{align*}
	\langle P_{W|\Samp}, |{\rm gen}(W,\Samp)| \rangle = \langle P_{W|\Samp}, |L(W) - L_n(W)| \rangle.
\end{align*}

\begin{theorem}\label{thm:PAC_Bayes_1} Assume $\ell(w,Z)$ is $\sigma$-subgaussian for all $w$ when $Z \sim P_Z$. Then, for any $Q_W \in \cP(\cW)$, the following holds with $P_{\Samp}$-probability of at least $1-\delta$:
	\begin{align*}
		\langle P_{W|\Samp}, |{\rm gen}(W,\Samp)| \rangle \le \sqrt{\frac{24\sigma^2}{n}}\Big(\Big\langle P_{W|\Samp}, \psi^{-1}_2\Big(\frac{\dif P_{W|\Samp}}{\dif Q_W}\Big) \Big \rangle + 1 + \sqrt{\log \frac{2}{\delta}}\Big).
	\end{align*}
\end{theorem}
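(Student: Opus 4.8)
The plan is to invoke the second inequality \eqref{eq:decorrelation_2} of Lemma~\ref{lm:decorrelation} conditionally on the sample, with $p=2$, and then convert its error term into a high-probability statement via Markov's inequality. Fix a realization $\Samp=\samp$ with $P_{W|\Samp=\samp}\ll Q_W$ (if this absolute continuity fails on a set of positive $P_\Samp$-measure, the right-hand side is $+\infty$ and there is nothing to prove, exactly as in the ``simple observation'' preceding the theorem). I would apply \eqref{eq:decorrelation_2} with $\mu=P_{W|\Samp=\samp}$, $\nu=Q_W$, the constant function $f\equiv\sigma\sqrt{6/n}$, and $g(w)\deq|{\rm gen}(w,\samp)|/(\sigma\sqrt{6/n})$, so that $fg=|{\rm gen}(\cdot,\samp)|$ and $g^2=n\,{\rm gen}^2(\cdot,\samp)/(6\sigma^2)$. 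Since $\mu,\nu$ are probability measures and $f$ is constant, $\|f\|_{L^2(\nu)}=\|f\|_{L^1(\mu)}=\sigma\sqrt{6/n}$, and \eqref{eq:decorrelation_2} (using $2^{1/2}\sigma\sqrt{6/n}=\sqrt{12\sigma^2/n}$ and $4^{1/2}\sigma\sqrt{6/n}=\sqrt{24\sigma^2/n}$) reduces to
\begin{align*}
	\langle P_{W|\Samp=\samp}, |{\rm gen}(\cdot,\samp)|\rangle &\le \sqrt{\tfrac{12\sigma^2}{n}} + \sqrt{\tfrac{24\sigma^2}{n}}\,\Big\langle P_{W|\Samp=\samp},\psi^{-1}_2\Big(\tfrac{\dif P_{W|\Samp=\samp}}{\dif Q_W}\Big)\Big\rangle \\
	&\quad + \sqrt{\tfrac{24\sigma^2}{n}}\,\Big(\log\big\langle Q_W,\exp\big(\tfrac{n\,{\rm gen}^2(\cdot,\samp)}{6\sigma^2}\big)\big\rangle\Big)^{1/2}.
\end{align*}
Since $\sqrt{12\sigma^2/n}\le\sqrt{24\sigma^2/n}$, the first term is absorbed into the additive ``$+1$'' on the right-hand side of the theorem, so it remains only to control the third term with $P_\Samp$-probability at least $1-\delta$.

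Writing $F(\samp)\deq\langle Q_W,\exp(n\,{\rm gen}^2(\cdot,\samp)/(6\sigma^2))\rangle$, it suffices to show $\Pr_\Samp[F(\Samp)>2/\delta]\le\delta$, because on the complementary event $(\log F(\Samp))^{1/2}\le\sqrt{\log(2/\delta)}$. I would obtain this from Markov's inequality together with Tonelli's theorem:
\begin{align*}
	\Pr_\Samp\big[F(\Samp)>\tfrac2\delta\big]\le \tfrac\delta2\,\E_\Samp[F(\Samp)] = \tfrac\delta2\int_\cW \E_\Samp\Big[\exp\Big(\tfrac{n\,{\rm gen}^2(w,\Samp)}{6\sigma^2}\Big)\Big]\,Q_W(\dif w).
\end{align*}
For each fixed $w$, ${\rm gen}(w,\Samp)=-\tfrac1n\sum_{i=1}^n(\ell(w,Z_i)-L(w))$ is $1/n$ times a sum of $n$ i.i.d.\ centered $\sigma$-subgaussian variables, hence $(\sigma/\sqrt n)$-subgaussian; by the subgaussian/$\psi_2$-norm relation recalled in Section~\ref{sec:prelims} ($\|X\|_{\psi_2}\le\sqrt6\,\tau$ for $\tau$-subgaussian $X$) we get $\|{\rm gen}(w,\Samp)\|_{\psi_2}\le\sigma\sqrt{6/n}$, i.e.\ $\E_\Samp[\exp(n\,{\rm gen}^2(w,\Samp)/(6\sigma^2))]\le2$. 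Hence the displayed probability is at most $\tfrac\delta2\cdot2=\delta$, and combining this with the conditional estimate of the previous paragraph completes the proof.

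The one conceptual point worth flagging is why \eqref{eq:decorrelation_2}, rather than \eqref{eq:decorrelation_1}, is the right tool here: its error term is $\|f\|_{L^1(\mu)}\big(\log\langle\nu,\exp(g^p)\rangle\big)^{1/p}$, and because $\nu=Q_W$ does not depend on $\Samp$, the quantity $\langle Q_W,\exp(g^2)\rangle$ has bounded $\Samp$-expectation, so Markov immediately yields the tail bound — whereas \eqref{eq:decorrelation_1}'s error term would entangle $\nu$ with the process in a way less convenient to control in probability. Beyond that, there is no real obstacle: the only care needed is the constant bookkeeping in the first display (checking that the factors $2^{1/2},4^{1/2}$ coming out of \eqref{eq:decorrelation_2} collapse into the single prefactor $\sqrt{24\sigma^2/n}$ plus the additive $1$) and the measurability of $\samp\mapsto F(\samp)$ that legitimizes Tonelli's theorem, both routine.
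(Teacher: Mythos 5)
Your proposal is correct and follows essentially the same route as the paper's own proof: apply inequality \eqref{eq:decorrelation_2} conditionally on $\Samp$ with $\mu = P_{W|\Samp}$, $\nu = Q_W$, constant $f \equiv \sigma\sqrt{6/n}$ and $g = |{\rm gen}(\cdot,\Samp)|/(\sigma\sqrt{6/n})$, then bound the $\log\langle Q_W,\exp(g^2)\rangle$ term in $P_\Samp$-probability via Markov's inequality and the fact that $\E_\Samp[\exp(n\,{\rm gen}^2(w,\Samp)/(6\sigma^2))]\le 2$ for each fixed $w$. Your constant bookkeeping (absorbing $\sqrt{12\sigma^2/n}$ into the additive $1$ times $\sqrt{24\sigma^2/n}$) matches the paper exactly.
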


Another type of result that appears frequently in the literature on PAC-Bayes methods pertains to so-called \textit{transductive bounds}, i.e., inequalities for the difference between
\begin{align*}
	\langle P'_n \otimes P_{W|\Samp}, \ell \rangle - \langle P'_n \otimes Q_W, \ell \rangle \equiv \frac{1}{n}\sum^n_{i=1} \E[\ell(W,Z'_i) - \ell(\bar{W},Z'_i)|\tilde{\Samp}],
\end{align*}
and
\begin{align*}
	\langle P_n \otimes P_{W|\Samp}, \ell \rangle - \langle P_n \otimes Q_W, \ell \rangle \equiv \frac{1}{n}\sum^n_{i=1} \E[\ell(W,Z_i) - \ell(\bar{W},Z_i)|\tilde{\Samp}],
\end{align*}
where $Q_W$ is some fixed ``prior'' and where $\bar{W} \sim Q_W$ is independent of $(\Samp',\Samp,W)$. Using our techniques, we can give the following general transductive bound:

\begin{theorem}\label{thm:transductive_PAC_Bayes} Let $P_{W|\Samp}$ and $Q_W$ be given and take any $(P_{W_kW_{k-1}|\Samp})^K_{k=1}$ and $(\rho_{W_kW_{k-1}})^K_{k=1}$  as in Theorem~\ref{thm:chain_couple}. Also, let $\bd{p} = (p_1,p_2,\dots)$ be a strictly positive probability distribution on $\Naturals$. Then the following holds with $P_{\tilde{\Samp}}$-probability at least $1-\delta$:
	\begin{align*}
	&	\Big(\langle P'_n \otimes P_{W|\Samp}, \ell \rangle - \langle P'_n \otimes Q_W, \ell \rangle \Big) -  \Big( \langle P_n \otimes P_{W|\Samp}, \ell \rangle - \langle P_n \otimes Q_W, \ell \rangle \Big)  \nonumber \\
	&\le  \sqrt{\frac{96}{n}} \sum^{K}_{k=1} \Bigg(\sqrt{\langle \rho_{W_kW_{k-1}}, d^2_{\tilde{\Samp},\ell} \rangle} + \Big\langle P_{W_kW_{k-1}|\Samp}, d_{\tilde{\Samp},\ell} \psi^{-1}_2 \Big(\frac{\dif P_{W_kW_{k-1}|\Samp}}{\dif \rho_{W_kW_{k-1}}} \Big)\Big\rangle \nonumber\\ 
	& \qquad \qquad \qquad \qquad + \big \langle P_{W_kW_{k-1}|\Samp}, d_{\tilde{\Samp},\ell} \big\rangle\sqrt{\log \frac{2}{p_k\delta}}\Bigg), 
	\end{align*}
	where 
\begin{align*}
	d^2_{\tilde{\Samp},\ell}(u,v) \deq \frac{1}{2n} \sum^n_{i=1}\Big(\big(\ell(u,Z_i)-\ell(v,Z_i)\big)^2 + \big(\ell(u,Z'_i)-\ell(v,Z'_i)\big)^2\Big).
\end{align*}
\end{theorem}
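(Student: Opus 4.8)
The plan is to telescope along the chain, apply the tail-adapted decorrelation inequality~\eqref{eq:decorrelation_2} to each increment conditionally on $\tilde\Samp$, and turn the resulting Orlicz-exponential terms into the union-bound factors $\sqrt{\log(2/(p_k\delta))}$ by Markov's inequality. First I would rewrite the left-hand side: writing $\tilde L_n(w)\deq L'_n(w)-L_n(w)$, one checks directly that the displayed difference of differences equals $\langle P_{W|\Samp},\tilde L_n\rangle-\langle Q_W,\tilde L_n\rangle$, a function of $\tilde\Samp$ alone. Since $P^0_{W|\Samp}=Q_W$, $P^K_{W|\Samp}=P_{W|\Samp}$, and each $P_{W_kW_{k-1}|\Samp=\samp}\in\Pi(P^k_{W|\Samp=\samp},P^{k-1}_{W|\Samp=\samp})$, telescoping gives
\[
\langle P_{W|\Samp},\tilde L_n\rangle-\langle Q_W,\tilde L_n\rangle=\sum_{k=1}^{K}\big\langle P_{W_kW_{k-1}|\Samp},\ \tilde L_n(W_k)-\tilde L_n(W_{k-1})\big\rangle,
\]
with the evident limiting argument if $K=\infty$. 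For $u,v\in\cW$ put $\eta_i(u,v)\deq\big(\ell(u,Z'_i)-\ell(v,Z'_i)\big)-\big(\ell(u,Z_i)-\ell(v,Z_i)\big)$, so that $\tilde L_n(u)-\tilde L_n(v)=\frac1n\sum_i\eta_i(u,v)$ and $\sigma^2(u,v,\tilde\Samp)=\sum_i\eta_i(u,v)^2$ is precisely the quantity in~\eqref{eq:coupling_sigma}.

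For each $k$, I would apply~\eqref{eq:decorrelation_2} with $p=2$, conditionally on $\tilde\Samp$, to $\mu=P_{W_kW_{k-1}|\Samp}$, $\nu=\rho_{W_kW_{k-1}}$, the \emph{self-normalizing} scale $f(u,v)=\frac{\sqrt6}{n}\,\sigma(u,v,\tilde\Samp)$, and $g(u,v)=\big|\sum_i\eta_i(u,v)\big|\big/\big(\sqrt6\,(\sum_i\eta_i(u,v)^2)^{1/2}\big)$ (with $g\deq0$ where $\sigma=0$), chosen exactly so that $fg=|\tilde L_n(W_k)-\tilde L_n(W_{k-1})|$ pointwise. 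Bounding $\langle\mu,\tilde L_n(W_k)-\tilde L_n(W_{k-1})\rangle\le\langle\mu,fg\rangle$ and invoking~\eqref{eq:decorrelation_2} then produces, for every realization of $\tilde\Samp$ and every $k$, the three terms $\sqrt2\,\|f\|_{L^2(\rho_{W_kW_{k-1}})}$, $2\big\langle P_{W_kW_{k-1}|\Samp},\,f\,\psi^{-1}_2(\tfrac{\dif P_{W_kW_{k-1}|\Samp}}{\dif\rho_{W_kW_{k-1}}})\big\rangle$, and $2\,\|f\|_{L^1(P_{W_kW_{k-1}|\Samp})}\big(\log\langle\rho_{W_kW_{k-1}},\exp(g^2)\rangle\big)^{1/2}$.

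The crux is to control $\langle\rho_{W_kW_{k-1}},\exp(g^2)\rangle$ in expectation over $\tilde\Samp$. Since $\rho_{W_kW_{k-1}}$ is deterministic, Fubini reduces this to bounding $\E_{\tilde\Samp}[\exp(g(u,v)^2)]$ for each fixed $(u,v)$. There the variables $\eta_i(u,v)$, $1\le i\le n$, are independent and symmetric (because $Z_i\stackrel{d}{=}Z'_i$ and the pairs $(Z_i,Z'_i)$ are i.i.d.); conditioning on $(|\eta_i(u,v)|)_i$, the sum $\sum_i\eta_i(u,v)\stackrel{d}{=}\sum_i\eps_i|\eta_i(u,v)|$ is $\big(\sum_i|\eta_i(u,v)|^2\big)^{1/2}$-subgaussian, so by the conversion $\|X\|_{\psi_2}\le\sqrt6\,\sigma$ for $\sigma$-subgaussian $X$ we get $\|g(u,v)\|_{\psi_2}\le1$, hence $\E_{\tilde\Samp}[\exp(g(u,v)^2)]\le2$ and $\E_{\tilde\Samp}[\langle\rho_{W_kW_{k-1}},\exp(g^2)\rangle]\le2$. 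Markov's inequality then gives $\Pr_{\tilde\Samp}\big[\langle\rho_{W_kW_{k-1}},\exp(g^2)\rangle>2/(p_k\delta)\big]\le p_k\delta$, and a union bound over $k$ (using $\sum_kp_k\le1$) shows that with $P_{\tilde\Samp}$-probability at least $1-\delta$ one has $\log\langle\rho_{W_kW_{k-1}},\exp(g^2)\rangle\le\log(2/(p_k\delta))$ simultaneously for all $k$. On this event I would combine the three terms, sum over $k$, and finish with the pointwise comparison $\sigma(u,v,\tilde\Samp)\le 2\sqrt n\,d_{\tilde\Samp,\ell}(u,v)$ (from $(a-b)^2\le2a^2+2b^2$ with $a=\ell(u,Z_i)-\ell(v,Z_i)$, $b=\ell(u,Z'_i)-\ell(v,Z'_i)$), so that $f\le\frac{2\sqrt6}{\sqrt n}\,d_{\tilde\Samp,\ell}$; the constants $\sqrt2\cdot\tfrac{2\sqrt6}{\sqrt n}=\sqrt{48/n}$ and $2\cdot\tfrac{2\sqrt6}{\sqrt n}=\sqrt{96/n}$ are both at most $\sqrt{96/n}$, yielding the stated inequality.

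The main obstacle is the expectation bound $\E_{\tilde\Samp}[\exp(g^2)]\le2$ under no hypothesis on $\ell$ beyond measurability. This is exactly why $f$ is taken proportional to the self-normalizing quantity $\sigma(u,v,\tilde\Samp)$ rather than to $d_{\tilde\Samp,\ell}(u,v)$: with this choice $g$ is a self-normalized sum of independent symmetric increments whose squared exponential moment is universally bounded, whereas with the deterministic scale $d_{\tilde\Samp,\ell}$ one only gets a crude constant bound on $\exp(g^2)$ and loses both the $1/\sqrt n$ and the $\log(1/\delta)$ dependence. Everything else — the telescoping, the symmetrization identity, Markov together with the union bound, and the comparison $\sigma\le2\sqrt n\,d_{\tilde\Samp,\ell}$ — is routine.
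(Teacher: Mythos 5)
Your proposal is correct and follows essentially the same route as the paper: telescoping through the couplings, applying inequality~\eqref{eq:decorrelation_2} conditionally on $\tilde{\Samp}$ with the self-normalized choice $f\propto\sigma(\cdot,\tilde{\Samp})$, $g=\zeta$, verifying $\E[\exp(\zeta^2)]\le 2$ by the same Rademacher symmetrization used for Theorem~\ref{thm:gen_coupling}, then Markov plus a union bound over $k$ with weights $p_k$, and finally $\sigma\le 2\sqrt{n}\,d_{\tilde{\Samp},\ell}$. The constant accounting matches the stated $\sqrt{96/n}$ bound, so no gaps.
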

This result subsumes some existing transductive PAC-Bayes estimates, such as Theorem~2 of Audibert and Bousquet \cite{audibert_combining_2007}. Let us briefly explain how we can recover this result from Theorem~\ref{thm:transductive_PAC_Bayes}. Assume that $\cW$ is countable and let $(\cA_k)$ be an increasing sequence of finite partitions of $\cW$ with $\cA_0 = \{\cW\}$. For each $k$ and each $w \in \cW$, let $A_k(w)$ be the unique set in $\cA_k$ containing $w$. Choose a representative point in each $A \in \cA_k$ and let $\cW_k$ denote the set of all such representatives, with $\cW_0 = \{w_0\}$. Take $P_{W_\infty|\Samp} = P_{W|\Samp}$ and $P_{W_0} = Q_W = \delta_{w_0}$. Now, for each $k \ge 0$, we take $P_{W_k|\Samp}$ as the \textit{projection} of $P_{W|\Samp}$ onto $\cW_k$, i.e., the finite mixture
\begin{align*}
	P_{W_k|\Samp} \deq \sum_{w \in \cW_k} P_{W|\Samp}(A_k(w)) \delta_w.
\end{align*}
Moreover, given some ``prior'' $\pi \in \cP(\cW)$, we can construct a sequence $(\pi_k)^\infty_{k=0}$ of probability measures with $\pi_\infty = \pi$ and $\pi_0 = \delta_{w_0}$, such that $\pi_k$ is a projection of $\pi$ onto $\cW_k$. Now observe that, for each $k$, $\Samp - W_\infty - W_{k} - W_{k-1}$ is a Markov chain. Indeed, if we know $P_{W_k|\Samp}$, then we can construct $P_{W_\ell|\Samp}$ for any  $\ell < k$ without knowledge of $\Samp$. With these ingredients in place, let us choose $P_{W_kW_{k-1}|\Samp} = P_{W_{k-1}|W_k} \otimes P_{W_k|\Samp}$ and $\rho_{W_kW_{k-1}} = \pi_k \otimes P_{W_{k-1}|W_k}$. Then, using Cauchy--Schwarz and Jensen, we conclude that the following holds with $P_{\tilde{\Samp}}$-probability at least $1-\delta$:
	\begin{align*}
	&	\Big(\langle P'_n \otimes P_{W|\Samp}, \ell \rangle - \langle P'_n \otimes \delta_{w_0}, \ell \rangle \Big) -  \Big( \langle P_n \otimes P_{W|\Samp}, \ell \rangle - \langle P_n \otimes \delta_{w_0}, \ell \rangle \Big)  \nonumber \\
	&\le  \sqrt{\frac{96}{n}} \sum^{\infty}_{k=1} \Bigg(\sqrt{\langle \pi_k \otimes P_{W_{k-1}|W_k}, d^2_{\tilde{\Samp},\ell} \rangle} \nonumber\\
	& \qquad \qquad 
	+ \sqrt{2\langle P_{W_k|\Samp} \otimes P_{W_{k-1}|W_k}, d_{\tilde{\Samp},\ell}^2\rangle \Big(D(P_{W_k|\Samp} \| \pi_k) + \log \frac{2e}{p_k\delta}\Big)} \Bigg).
	\end{align*}
This recovers \cite[Thm.~2]{audibert_combining_2007} up to an extra term that scales like $\frac{1}{\sqrt{n}}\sum_k \sqrt{\langle \pi_k \otimes P_{W_{k-1}|W_k}, d^2_{\tilde{\Samp},\ell} \rangle}$.

\section{The Fernique--Talagrand bound}
\label{sec:suprema}

As a bonus, we show that a combination of decorrelation and chaining in the space of measures can be used to recover the upper bounds of Fernique \cite{fernique_regularite_1975} and Talagrand \cite{talagrand_regularity_1987} on the expected supremum of a stochastic process in terms of majorizing measures (see Eq.~\eqref{eq:Fernique} below and also \cite{bednorz_theorem_2006,bednorz_majorizing_2012}).

For simplicity, let $(T,d)$ be a finite metric space with ${\rm diam}(T) = \sup \{ d(u,v) : u,v \in T\} < \infty$. Let $B(t,r)$ denote the ball of radius $r \ge 0$ centered at $t \in T$, i.e., $B(t,r) \deq \{ u \in T: d(u,t) \le r \}$. Let $(X_t)_{t \in T}$ be a centered stochastic process defined on some probability space $(\Omega,\cA,\bP)$ and satisfying 
\begin{align}\label{eq:psi_increments}
	\E\Bigg[\psi_p\Bigg(\frac{|X_u - X_v|}{d(u,v)}\Bigg)\Bigg] \le 1, \qquad \forall u, v \in T
\end{align}
for some $p \ge 1$. Then we can obtain the following result using chaining in the space of measures and decorrelation estimates:
\begin{theorem}\label{thm:Fernique} Let $\tau$ be a random element of $T$, i.e., a measurable map $\tau : \Omega \to T$ with marginal probability law $\nu$. Then for any $\mu \in \cP(T)$ we have
	\begin{align*}
		\E[X_\tau]  \lesssim {\rm diam}(T) + \int_T \int^{{\rm diam}(T)}_0 \left(\log \frac{1}{\mu(B(t,\eps))}\right)^{1/p} \dif\eps\, \nu(\dif t).
	\end{align*}
\end{theorem}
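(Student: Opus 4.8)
The plan is to run the ``chaining in the space of measures'' scheme of Sections~\ref{sec:decorrelation}--\ref{sec:chaining} on the increment process $(X_u - X_v)_{u,v \in T}$, interpolating between the law $\nu$ of $\tau$ and the majorizing measure $\mu$ by a geometric sequence of \emph{ball-averaged} kernels built from $\mu$, and then to bound the resulting telescoping sum by the Fernique integral. First I would dispose of two degeneracies: if $\Delta := \diam(T) = 0$ there is nothing to prove, and if $\mu(B(t,\eps)) = 0$ for some $t$ in the (finite) support of $\nu$ and some $\eps > 0$ then the right-hand side is $+\infty$; so I may assume $\Delta > 0$ and $\mu(\{w\}) > 0$ for every $w \in T$. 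Put $\eps_k := 2^{-k}\Delta$ and choose $K$ with $\eps_K < \min_{u \neq v} d(u,v)$, so that $B(t,\eps_K) = \{t\}$ for all $t$, and define Markov kernels $\pi_k(\cdot \mid t) := \mu(\cdot \cap B(t,\eps_k))/\mu(B(t,\eps_k))$ for $0 \le k \le K$, for which $\pi_0(\cdot \mid t) = \mu$ and $\pi_K(\cdot \mid t) = \delta_t$. Writing $W_k \sim \pi_k(\cdot \mid \tau)$ and using that $X$ is centered, $\E[X_{W_0}] = \int_T \E[X_w]\,\mu(\d w) = 0$ and $W_K = \tau$ a.s., so telescoping gives $\E[X_\tau] = \sum_{k=1}^K\big(\E[X_{W_k}] - \E[X_{W_{k-1}}]\big)$, and it suffices to estimate each increment.

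For fixed $k$ I would couple $W_k$ and $W_{k-1}$ \emph{independently} given $\tau$ and apply the decorrelation scheme following Lemma~\ref{lm:decorrelation} (with $\bP$ in the role of $P_X$, the pair $Y=(W_k,W_{k-1})$ in the role of $Y$, and the process coordinate $\omega$ in the role of $X$), taking $f(u,v) = d(u,v)$ and $g(\omega,u,v) = |X_u(\omega)-X_v(\omega)|/d(u,v)$ (set $g=0$ on the diagonal); by \eqref{eq:psi_increments}, $\E_\omega[\psi_p(g(\omega,u,v))] \le 1$ for every $(u,v)$. The essential choice is the reference measure: take
\begin{align*}
	\rho_k(\d w,\d w') \ :=\ \mu(\d w)\, \frac{\1\{d(w,w') \le 3\eps_k\}}{\mu(B(w,3\eps_k))}\,\mu(\d w'),
\end{align*}
i.e.\ the law of $(\bar W_k,\bar W_{k-1})$ with $\bar W_k \sim \mu$ and $\bar W_{k-1}$ drawn from $\mu$ conditioned on $B(\bar W_k,3\eps_k)$; one checks that $\rho_k$ is a probability measure concentrated within distance $3\eps_k$ of the diagonal, and that it dominates $P_{W_kW_{k-1}\mid\tau=t}$ because $u\in B(t,\eps_k),\,v\in B(t,\eps_{k-1})$ force $d(u,v)\le 3\eps_k$. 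The scheme then yields $\E[X_{W_k}] - \E[X_{W_{k-1}}] \le \E|X_{W_k}-X_{W_{k-1}}| \le 2^{1/p}\E[d(W_k,W_{k-1})\,\psi^{-1}_p(\tfrac{\d P_{W_kW_{k-1}\mid\tau}}{\d\rho_k})] + \E[d(\bar W_k,\bar W_{k-1})]$. On the support of $P_{W_kW_{k-1}\mid\tau=t}$ one has $d(W_k,W_{k-1}) \le \eps_k + \eps_{k-1} = 3\eps_k$ and, since $u\in B(t,\eps_k)$ gives $B(u,3\eps_k)\subseteq B(t,4\eps_k)$ and $\mu(B(t,\eps_{k-1}))\ge\mu(B(t,\eps_k))$,
\begin{align*}
	\frac{\d P_{W_kW_{k-1}\mid\tau=t}}{\d\rho_k}(u,v) \ =\ \frac{\mu(B(u,3\eps_k))}{\mu(B(t,\eps_k))\,\mu(B(t,\eps_{k-1}))} \ \le\ \frac{1}{\mu(B(t,\eps_k))^2},
\end{align*}
so, $\psi^{-1}_p$ being increasing and $(a+b)^{1/p}\le a^{1/p}+b^{1/p}$, $\psi^{-1}_p(\cdot)\le (\log 2)^{1/p} + 2^{1/p}\big(\log\tfrac1{\mu(B(\tau,\eps_k))}\big)^{1/p}$, while $\E[d(\bar W_k,\bar W_{k-1})]\le 3\eps_k$ by construction of $\rho_k$.

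It remains to sum over $k$. The contributions $3\eps_k(\log 2)^{1/p}$ and $3\eps_k$ sum to $O(\Delta) = O(\diam(T))$ since $\sum_k\eps_k\le\Delta$. For the main term, monotonicity of $\eps\mapsto(\log\tfrac1{\mu(B(t,\eps))})^{1/p}$ gives $\eps_k\big(\log\tfrac1{\mu(B(t,\eps_k))}\big)^{1/p}\le 2\int_{\eps_{k+1}}^{\eps_k}\big(\log\tfrac1{\mu(B(t,\eps))}\big)^{1/p}\d\eps$, hence $\sum_{k=1}^K\eps_k\big(\log\tfrac1{\mu(B(t,\eps_k))}\big)^{1/p}\le 2\int_0^{\Delta}\big(\log\tfrac1{\mu(B(t,\eps))}\big)^{1/p}\d\eps$; taking expectation over $\tau\sim\nu$ and absorbing the $p$-dependent constants into $\lesssim$ yields exactly the claimed inequality.

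I expect the delicate point to be precisely the choice of $\rho_k$. The obvious candidates --- $\mu\otimes\mu$, the true marginal law of $(W_k,W_{k-1})$, or a $\nu$-mixture of the conditionals $\pi_k(\cdot\mid\cdot)\otimes\pi_{k-1}(\cdot\mid\cdot)$ --- either make the error term $\E[d(\bar W_k,\bar W_{k-1})]$ fail to shrink with $k$ (so its sum is $\asymp K\Delta$ rather than $O(\Delta)$), or inject into the density $\d P_{W_kW_{k-1}\mid\tau}/\d\rho_k$ factors depending on $\nu$ or on the atom sizes of $\mu$ that cannot be folded into $\big(\log\tfrac1{\mu(B(\tau,\eps_k))}\big)^{1/p}$. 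The ``local product'' $\rho_k$ above is tailored so that it lives near the diagonal (controlling the error term) \emph{and} has density bounded by $\mu(B(t,\eps_k))^{-2}$ against each conditional (controlling the $\psi^{-1}_p$ term). A lesser nuisance is the opening reduction to $\mu(B(t,\eps))>0$, which I would settle first so that all the Radon--Nikodym derivatives make sense.
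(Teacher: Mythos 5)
Your proof is correct, and at its core it is the paper's own argument: the same interpolating kernels $Q_k(\cdot\mid\omega)=\mu(\cdot\cap B(\tau(\omega),\eps_k))/\mu(B(\tau(\omega),\eps_k))$, the same telescoping from $\mu$ to $\delta_\tau$, the decorrelation lemma applied to the increment process with $f$ proportional to $d(u,v)$, the same density bound $1/\mu(B(\tau,\eps_k))^2$ followed by $\psi_p^{-1}(x^2)\le 2^{1/p}\psi_p^{-1}(x)$, and the same sum-to-integral conversion. The one genuine divergence is the reference measure, and here your closing remark is off: the paper does take the ``obvious'' candidate $\nu=\mu\otimes\mu$, and it works, because the paper applies Lemma~\ref{lm:decorrelation} in the form \eqref{eq:decorrelation_1} with $f(u,v)=\1_{B_k(\tau(\omega))}(u)\1_{B_{k-1}(\tau(\omega))}(v)\,d(u,v)$, so the scale $\eps_k$ and the locality sit inside $f$ and the error term $\langle\mu\otimes\mu, f\,\psi_p(g)\rangle\le \eps_{k-1}\int\!\!\int\psi_p(g)\,\dif\mu\,\dif\mu$ is $O(\eps_k)$ per level after taking $\E_\omega$ via \eqref{eq:psi_increments}. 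Your ``local product'' $\rho_k$ supported within $3\eps_k$ of the diagonal achieves the same effect by instead putting the locality into the reference measure, which is needed only because you committed to the post-Lemma~\ref{lm:decorrelation} scheme with $f=d$ and error term $\E_{\rho_k}[d(\bar W_k,\bar W_{k-1})]$; both routes give the same constants up to absolute factors, and your density computation, the bound $d(W_k,W_{k-1})\le 3\eps_k$, and the summation are all sound. One cosmetic point: your reduction only yields $\mu(\{w\})>0$ for $w$ in the support of $\nu$, not for every $w\in T$, but that weaker statement is all your argument actually uses (the kernels are only evaluated at $t=\tau(\omega)$, and $\rho_k$ is well defined $\mu$-a.e.\ regardless).
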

Applying Theorem~\ref{thm:Fernique} to $\tau^*$ defined in \eqref{eq:taustar} and then minimizing over $\mu$, we recover a Fernique--Talagrand type bound on the expected supremum of $X_t$:
\begin{align}\label{eq:Fernique}
	\E\Big[\sup_{t \in T}X_t\Big] &= \E[X_{\tau^*}] \lesssim {\rm diam}(T) + \inf_{\mu \in \cP(T)}\sup_{t \in T}\int^{{\rm diam}(T)}_0 \left(\log \frac{1}{\mu(B(t,\eps))}\right)^{1/p} \dif\eps.
\end{align}

\section{Conclusion and future work}

In this paper, we have presented a unified framework for information-theoretic generalization bounds based on a combination of two key ideas (decorrelation and chaining in the space of measures). However, our method has certain limitations, which we plan to address in future work. For example, it would be desirable to cover the case of processes satisfying Bernstein-type (mixed $\psi_1$ and $\psi_2$) increment conditions. It would also be of interest to see whether there are any connections to the convex-analytic approach of Lugosi and Neu \cite{lugosi_neu_convex}. Finally, since our method seamlessly interpolates between Fernique--Talagrand type bounds and information-theoretic bounds, we plan to use it to further develop  the ideas of Hodgkinson et al.~\cite{hodgkinson_generalization_2022}, who were the first to combine these two complementary approaches to analyze the generalization capabilities of iterative learning algorithms.

\section*{Acknowledgments}

This work was supported by the Illinois Institute for Data Science and
Dynamical Systems (iDS${}^2$), an NSF HDR TRIPODS institute, under award
CCF--1934986. The authors would like to thank Matus Telgarsky for some valuable suggestions and the anonymous reviewers for pointing out some relevant work that was not cited in the original submission.

\newpage

\bibliography{chaining_measures.bbl}

\newpage

	\appendix
	
	\section{Some elementary facts}
	\label{app:facts}
	\setcounter{equation}{0}
	\renewcommand\theequation{A.\arabic{equation}}
	\renewcommand\theproposition{A.\arabic{proposition}}

We first list some useful inequalities for \(\psi_p\) and \(\psi^{-1}_p\).
Note that the estimates may not be the sharpest, but they suffice
for our needs.
\begin{proposition}\label{prop:property_psi_p}
For \(p\ge 1\) and $x \ge 0$, let \(\psi_p(x) = \exp(x^p)-1\) and let \(\psi_p^{-1}(x) = (\log(x+1))^{1/p}\) be its inverse.
Then we have the following:
\begin{enumerate}[(i)]
\item \label{squared_psi_p} 
	    \(\psi_p^2(x/2^{1/p}) \le \psi_p(x)\).
\item \label{x_and_scale}  \(x\psi_p(x/4^{1/p})\le 2^{1/p}\psi_p(x/2^{1/p})\).
\item \label{psi_p_power_q} for \(x\ge 0\) and $q \ge 1$, \(\psi_p^{-1}(x^q) \le q^{1/p}\psi_p^{-1}(x)\).
\item \label{psi_p_log} For \(x \ge 1\), \(\psi_p^{-1}(x)\le (\log(x))^{1/p}+1\).

\end{enumerate}

\end{proposition}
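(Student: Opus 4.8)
\textbf{Proof plan for Proposition~\ref{prop:property_psi_p}.}

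The plan is to verify each of the four inequalities by elementary manipulations of the exponential and logarithm, working throughout with $x \ge 0$. All four reduce to scalar calculus once the right substitution is made.

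For (\ref{squared_psi_p}), I would write both sides explicitly. We have $\psi_p(x/2^{1/p}) = e^{x^p/2} - 1$, so $\psi_p^2(x/2^{1/p}) = (e^{x^p/2}-1)^2 = e^{x^p} - 2e^{x^p/2} + 1$, while $\psi_p(x) = e^{x^p} - 1$. Thus the claim $\psi_p^2(x/2^{1/p}) \le \psi_p(x)$ is equivalent to $e^{x^p} - 2e^{x^p/2} + 1 \le e^{x^p} - 1$, i.e.\ to $2 \le 2e^{x^p/2}$, which holds since $x^p \ge 0$. For (\ref{x_and_scale}), substitute $y = x^p \ge 0$; the inequality $x\psi_p(x/4^{1/p}) \le 2^{1/p}\psi_p(x/2^{1/p})$ becomes $y^{1/p}(e^{y/4}-1) \le 2^{1/p}(e^{y/2}-1)$. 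I would handle this by noting $e^{y/2} - 1 = (e^{y/4}-1)(e^{y/4}+1) \ge (e^{y/4}-1)e^{y/4}$, so it suffices to show $y^{1/p} \le 2^{1/p} e^{y/4}$, i.e.\ $y \le 2 e^{py/4}$; since $p \ge 1$, it is enough that $y \le 2e^{y/4}$, which follows from $e^{y/4} \ge y/4 \ge y/4$ giving $2e^{y/4} \ge y/2$... this is not quite enough, so instead use $e^{y/4} \ge 1 + y/4$ together with a direct check: $2e^{y/4} - y \ge 2(1+y/4) - y = 2 - y/2$, which is nonnegative only for $y \le 4$, so for $y > 4$ I would instead use the stronger bound $e^{y/4} \ge (y/4)^2/2$ (from the cubic-or-quadratic term of the series) to get $2e^{y/4} \ge y^2/16 \ge y$ when $y \ge 16$, and check the remaining bounded range $4 \le y \le 16$ by monotonicity/crude numerics. (The constants $2^{1/p}$, $4^{1/p}$ are generous enough that this case analysis closes comfortably.)

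For (\ref{psi_p_power_q}), the inequality $\psi_p^{-1}(x^q) \le q^{1/p}\psi_p^{-1}(x)$ reads $(\log(x^q+1))^{1/p} \le q^{1/p}(\log(x+1))^{1/p}$, which after raising to the $p$-th power is $\log(x^q+1) \le q\log(x+1) = \log((x+1)^q)$, i.e.\ $x^q + 1 \le (x+1)^q$. This is the standard superadditivity fact $a^q + b^q \le (a+b)^q$ for $a,b \ge 0$ and $q \ge 1$ applied with $a = x$, $b = 1$; I would cite or quickly prove it (e.g.\ by homogeneity, reduce to $t^q + 1 \le (t+1)^q$ for $t \ge 0$, then note both sides agree at $t=0$ and compare derivatives). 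For (\ref{psi_p_log}), with $x \ge 1$ we want $(\log(x+1))^{1/p} \le (\log x)^{1/p} + 1$. Since $p \ge 1$ and $\log(x+1) \le \log x + \log 2$ is false in general for small $x$ but $\log(x+1) = \log x + \log(1 + 1/x) \le \log x + 1/x \le \log x + 1$ for $x \ge 1$, we get $(\log(x+1))^{1/p} \le (\log x + 1)^{1/p} \le (\log x)^{1/p} + 1$, where the last step uses subadditivity of $t \mapsto t^{1/p}$ on $\Reals_+$ (the reverse of the inequality in part (\ref{psi_p_power_q})), namely $(a+b)^{1/p} \le a^{1/p} + b^{1/p}$, with $a = \log x \ge 0$ and $b = 1$.

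The only mildly delicate point is part (\ref{x_and_scale}): unlike the other three, it does not collapse to a one-line inequality, and the polynomial-times-$(e^{y/4}-1)$ factor forces either a short case split on the size of $y = x^p$ or an appeal to a slightly sharper lower bound on $e^{y/4}$ than the first-order one. I expect that to be the main (very minor) obstacle; the other three parts are immediate reductions to $e^{t} \ge 1$, the superadditivity of $t \mapsto t^q$, and its companion subadditivity of $t \mapsto t^{1/p}$.
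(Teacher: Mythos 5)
Your arguments for parts (\ref{squared_psi_p}), (\ref{psi_p_power_q}), and (\ref{psi_p_log}) are correct and are essentially identical to the paper's (same factorization $e^{x^p}-1=(e^{x^p/2}-1)(e^{x^p/2}+1)$ for (\ref{squared_psi_p}), same reduction to $1+x^q\le(1+x)^q$ for (\ref{psi_p_power_q}), and the same ``$\log(x+1)\le\log x+1$ for $x\ge1$ then subadditivity of $t\mapsto t^{1/p}$'' for (\ref{psi_p_log}), differing only in how the intermediate bound $\log(x+1)\le\log x+1$ is produced). For part (\ref{x_and_scale}) you correctly flag it as the delicate one, and your route does close, but you make it harder than the paper does: after factoring $e^{y/2}-1=(e^{y/4}-1)(e^{y/4}+1)$ you immediately drop the ``$+1$'' and reduce to the scalar inequality $y\le 2e^{y/4}$, which then forces a case split on $y$. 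The paper keeps the ``$+1$'': it first notes $y\le 2(e^{y/4}+1)$ for $y\ge1$ (trivially clear because $2(e^{y/4}+1)$ already equals $4$ at $y=0$ and the exponential dominates), then converts this to $x\le 2^{1/p}(e^{x^p/4}+1)$ via subadditivity of $t\mapsto t^{1/p}$, and multiplies by $e^{x^p/4}-1$, so the full product $(e^{x^p/4}+1)(e^{x^p/4}-1)=e^{x^p/2}-1$ appears on the right and the argument is a one-liner. If you prefer to keep your reduction to $y\le2e^{y/4}$, the cleanest finish is a single minimization rather than your $[4,16]$ numeric check: $h(y)\deq 2e^{y/4}-y$ has $h'(y)=\tfrac12 e^{y/4}-1$, which vanishes only at $y=4\log2$, where $h(4\log2)=4-4\log2>0$, so $h\ge0$ everywhere on $[0,\infty)$.
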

\begin{proof}
	
\noindent

\begin{enumerate}[(i)]
	\item  For any \(x \ge 0\),
 \begin{align*}
   \psi_p(x)&=\exp(x^p)-1=(\exp(x^p/2)-1)(\exp(x^p/2)+1) \ge (\exp(x^p/2)-1)^2\\
            &=\psi_p^2(x/2^{1/p}).
 \end{align*}
 \item We only need to consider the case $x \ge 1$ since otherwise the inequality is obvious. Since $y \le 2(\exp(y/4)+1)$ for all $y \ge 1$, we have
\begin{align*}
	x \le 2^{1/p}(\exp(x^p/4)+1)^{1/p} \le 2^{1/p}(\exp(x^p/4p) + 1) \le 2^{1/p}(\exp(x^p/4)+1).
\end{align*}
Then
\begin{align*}
	x\psi_p(x/4^{1/p}) &= x(\exp(x^p/4)-1) \\
	&\le 2^{1/p}(\exp(x^p/4)+1)(\exp(x^p/4)-1) \\
	&= 2^{1/p}(\exp(x^p/2)-1) \\
	&= 2^{1/p} \psi_p(x/2^{1/p}).
\end{align*}
\item Since $x \ge 0$ and $q \ge 1$,
\begin{align*}
\psi_p^{-1}(x^q)=(\log(1+x^q))^{1/p} \le (\log(1+x)^q)^{1/p}=q^{1/p}\psi_p^{-1}(x).
\end{align*}

\item  When \(x \ge 1\),
  \begin{align*}
\e x \ge x +1 \implies \log x +1 \ge \log(x+1) \implies \log^{1/p}(x)+1 \ge \psi_p^{-1}(x).
  \end{align*}

\end{enumerate}

\end{proof}
The following simple result is for converting between sums and integrals:

\begin{proposition}\label{prop:sum_2_int}
  For any \(r \ge 2\), \(K\in\Naturals\), and a continuous nonincreasing \(f: (0,+\infty) \to (0,+\infty)\), we have
  \begin{align}\label{sum_2_int}
    \sum_{k=1}^K r^{-k} f(r^{-k}) \le r \int_{0}^1 f(\eps) \dif \eps
    \le r^2\sum_{k=0}^\infty r^{-k} f(r^{-k})
  \end{align}

\end{proposition}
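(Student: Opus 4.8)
The plan is to run a standard integral-comparison argument exploiting the monotonicity of $f$ on base-$r$ intervals. For $k \ge 0$ set $I_k \deq [r^{-(k+1)}, r^{-k}]$, so that $(0,1] = \bigcup_{k \ge 0} I_k$ up to endpoints and each $I_k$ has length $r^{-(k+1)}(r-1)$. Everything reduces to comparing $\int_{I_k} f$ with the value of $f$ at an endpoint of $I_k$, in one direction for each of the two inequalities, and then summing a geometric series.

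For the left-hand inequality I would first observe that $I_1, \dots, I_K$ are pairwise disjoint (apart from endpoints) and contained in $(0,1]$. On $I_k$ we have $f(\eps) \ge f(r^{-k})$, since $f$ is nonincreasing and $r^{-k}$ is the right endpoint of $I_k$; hence $\int_{I_k} f(\eps)\,\dif\eps \ge r^{-(k+1)}(r-1) f(r^{-k}) = \tfrac{r-1}{r}\, r^{-k} f(r^{-k})$. Summing over $k = 1,\dots,K$ and using $\sum_{k=1}^K \int_{I_k} f \le \int_0^1 f$ gives $\sum_{k=1}^K r^{-k} f(r^{-k}) \le \tfrac{r}{r-1}\int_0^1 f(\eps)\,\dif\eps$. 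Since $r \ge 2$ forces $\tfrac{r}{r-1} \le 2 \le r$, the first inequality follows.

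For the right-hand inequality I would use the reverse endpoint bound: on $I_k$, $f(\eps) \le f(r^{-(k+1)})$ because $r^{-(k+1)}$ is the left endpoint, so $\int_{I_k} f \le r^{-(k+1)}(r-1) f(r^{-(k+1)})$. Summing over all $k \ge 0$ and reindexing $m = k+1$ yields $\int_0^1 f(\eps)\,\dif\eps \le (r-1)\sum_{m=1}^\infty r^{-m} f(r^{-m}) \le (r-1)\sum_{m=0}^\infty r^{-m} f(r^{-m})$. Multiplying through by $r$ and bounding $r(r-1) \le r^2$ gives $r\int_0^1 f \le r^2 \sum_{m=0}^\infty r^{-m} f(r^{-m})$. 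If $\int_0^1 f = +\infty$ the left inequality is trivial (its left side is a finite sum) and the displayed chain shows the right-most series diverges as well, so the second inequality still holds in the extended reals.

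There is no serious obstacle here; the only points requiring care are the index bookkeeping when passing between sums indexed from $0$ and from $1$, keeping straight which endpoint of $I_k$ controls $f$ in each direction, and noting that both stray factors $\tfrac{r}{r-1}$ and $r-1$ are dominated by $r$ precisely because $r \ge 2$. Continuity of $f$ is used only to make the Riemann integrals well defined; monotonicity alone would in fact suffice.
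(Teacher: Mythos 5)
Your proof is correct and follows essentially the same route as the paper's: decompose $(0,1]$ into the dyadic-type intervals $[r^{-(k+1)},r^{-k}]$, use monotonicity of $f$ to compare the integral over each interval with the value at an endpoint, and absorb the stray factors $\tfrac{r}{r-1}$ and $r-1$ into $r$ using $r\ge 2$. The only differences are cosmetic bookkeeping (which endpoint and index shift you use in each direction), so there is nothing to add.
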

\begin{proof} Using the monotonicity of $f$, we have
\begin{align*}
&\sum_{k=1}^K r^{-k} f(r^{-k})
  \le \sum_{k=1}^{K} r^{-k} (r-1) f(r^{-k})
  \le r\sum_{k=1}^{K}\int_{r^{-k-1}}^{r^{-k}} f(\eps) \dif \eps\\
  &\le r \int_{0}^{1} f(\eps) \diff \eps
  \le r \sum_{k=0}^{\infty} \int_{r^{-k}}^{r^{-k+1}} f(\epsilon) \dif \eps
  \le r^2 \sum_{k=0}^{\infty} r^{-k} f(r^{-k}).
\end{align*}

\end{proof}
	
	\section{Omitted proofs}
	\label{app:proofs}
	\setcounter{equation}{0}
	\renewcommand\theequation{B.\arabic{equation}}
	
	\subsection{Proofs for Section~\ref{sec:prelims}}
	
	\paragraph{Proof of Proposition~\ref{prop:psi_KL}.} It follows from the inequality $x \log (x+1) \le x\log x +  1$, $x \ge 0$, that
	\begin{align*}
 \frac{\dif\mu}{\dif \nu}\log\Big(\frac{\dif\mu}{\dif\nu} + 1\Big) \le \frac{\dif \mu}{\dif \nu} \log \frac{\dif \mu}{\dif \nu} + 1.
	\end{align*}
	Using this and Jensen's inequality, we get
	\begin{align*}
		\Big\langle \mu,  \psi^{-1}_p\Big(\frac{\dif\mu}{\dif\nu}\Big) \Big\rangle  &= \Big\langle \mu,  \Big(\log \Big(\frac{\dif\mu}{\dif\nu} + 1\Big)\Big)^{1/p} \Big\rangle \\
		&\le \Big( \Big\langle \mu, \log \Big(\frac{\dif\mu}{\dif \nu} + 1\Big)\Big\rangle\Big)^{1/p} \\
		&= \Big( \Big\langle \nu, \frac{\dif\mu}{\dif \nu}\log \Big(\frac{\dif\mu}{\dif \nu} + 1\Big)\Big\rangle\Big)^{1/p} \\
		&\le \Big( \Big\langle \nu, \frac{\dif\mu}{\dif\nu}\log\frac{\dif\mu}{\dif\nu}\Big\rangle + 1\Big)^{1/p} \\
		&= \big(D(\mu \| \nu) + 1\big)^{1/p}.
	\end{align*}
	
	\subsection{Proofs for Section~\ref{sec:decorrelation}}
	
	\paragraph{Proof of Lemma~\ref{lm:decorrelation}.}
	
	To prove \eqref{eq:decorrelation_1}, we start with the Young-type inequality
	\begin{align*}
		xy \le \psi^*_p(x) + \psi_p(y), \qquad x,y \ge 0
	\end{align*}
	where
	\begin{align*}
		\psi^*_p(x) \deq \sup_{y \ge 0} \big(xy - \psi_p(y)\big)
	\end{align*}
	is the (one-sided) Legendre--Fenchel conjugate of $\psi_p$. While a closed-form expression for $\psi^*_p$ is not available, we claim that we can bound it from above as $\psi^*_p(x) \le 2^{1/p} x\psi^{-1}_p(x)$, resulting in
	\begin{align}\label{eq:weak_Young}
		xy \le 2^{1/p}x\psi^{-1}_p(x) + \psi_p(y).
	\end{align}
	To establish the claim, we write
	\begin{align*}
		\sup_{y \ge 0} \big(xy - \psi_p(y)\big) &= \sup_{y \ge 0} \big(xy - (e^{y^p/2}-1)(e^{y^p/2}+1)\big) 
	\end{align*}
	and consider two cases:
	\begin{itemize}
		\item if $y \le 2^{1/p}\psi^{-1}_p(x)$, then
		\begin{align*}
			xy - (e^{y^p/2}-1)(e^{y^p/2}+1) \le 2^{1/p} x \psi^{-1}_p(x).
		\end{align*} 
		\item if $y > 2^{1/p}\psi^{-1}_p(x)$, then 
		\begin{align*}
			xy - (e^{y^p/2}-1)(e^{y^p/2}+1) &\le (e^{y^p/2}-1)(y-(e^{y^p/2}+1)) \le 0.
		\end{align*}
	\end{itemize}
Applying \eqref{eq:weak_Young} with $x = \frac{\dif\mu}{\dif\nu}$ and $y = g$ gives
\begin{align*}
	g \frac{\dif\mu}{\dif \nu} \le 2^{1/p} \frac{\dif\mu}{\dif \nu}\psi^{-1}_p\Big(\frac{\dif\mu}{\dif \nu}\Big) + \psi_p(g),
\end{align*}
so that
\begin{align*}
	\langle \mu, fg \rangle &= \Big\langle \nu, f g \frac{\dif \mu}{\dif \nu} \Big\rangle \\
	&\le \Big\langle \nu, \Big( 2^{1/p} f\frac{\dif\mu}{\dif \nu}\psi^{-1}_p\Big(\frac{\dif\mu}{\dif \nu}\Big) + f\psi_p(g)\Big)\Big\rangle \\
	&= 2^{1/p} \Big\langle \mu, f\psi^{-1}_p\Big(\frac{\dif\mu}{\dif \nu}\Big)\Big\rangle + \langle \nu, f \psi_p(g) \rangle.
\end{align*}
To prove \eqref{eq:decorrelation_2}, define the event
\begin{align*}
	E \deq \Bigg\{\frac{\dif\mu}{\dif\nu} \ge \frac{\exp(g^p/4)-1}{\langle \nu, \exp(g^p) \rangle} \Bigg\}.
\end{align*}
Then, since $\langle\nu, \exp(g^p)\rangle \ge 1$,
\begin{align*}
	\int_E fg \dif\mu &\le 4^{1/p} \int f \Big(\log\Big(\frac{\dif\mu}{\dif \nu} \langle \nu, \exp(g^p)\rangle + 1\Big) \Big)^{1/p} \dif \mu \\
	&\le 4^{1/p}\int f \Big(\log\Big(\frac{\dif\mu}{\dif \nu}  + 1\Big) + \log \langle \nu, \exp(g^p)\rangle \Big)^{1/p} \dif \mu \\
	&\le 4^{1/p} \int f \Big(\log \Big(\frac{\dif\mu}{\dif\nu} + 1\Big)\Big)^{1/p} \dif \mu + 4^{1/p}  \int f\dif \mu \cdot \Big(\log \langle \nu, \exp(g^p)\rangle \Big)^{1/p} \\
	&= 4^{1/p} \Big\langle \mu, f \psi^{-1}_p\Big(\frac{\dif\mu}{\dif\nu}\Big)\Big\rangle  + 4^{1/p} \| f \|_{L^1(\mu)} \big(\log \langle \nu, \exp(g^p)\rangle\big)^{1/p}.
\end{align*}
On the other hand,
\begin{align*}
	\int_{E^c} fg \dif\mu 
	&\le \int fg \frac{\exp(g^p/4)-1}{\langle \nu, \exp(g^p) \rangle} \dif \nu \\
	&\le 2^{1/p} \int f \frac{\exp(g^p/2)}{\langle \nu, \exp(g^p)\rangle} \dif \nu \\
	&\le 2^{1/p} \| f \|_{L^2(\nu)},
\end{align*}
where the first inequality is by the definition of $E$, the second inequality follows from Proposition~\ref{prop:property_psi_p}(\ref{x_and_scale}), and the third inequality is by Cauchy--Schwarz. Putting everything together, we get \eqref{eq:decorrelation_2}.

	\subsection{Proofs for Section~\ref{sec:abs_gen}}
	
\paragraph{Proof of Theorem~\ref{thm:absgen_1}.} It follows from the independence of $Z_1,\dots,Z_n$ that ${\rm gen}(w,\Samp)$ is $(\sigma/\sqrt{n})$-subgaussian, so
	\begin{align}\label{eq:psi_2_gen}
		\E\Bigg[\psi_2\Bigg(\frac{|{\rm gen}(w,\Samp)|}{\sigma \sqrt{6/n}}\Bigg)\Bigg] \le 1, \qquad \forall w \in \cW.
	\end{align}
Using Lemma~\ref{lm:decorrelation} with $\mu = P_{W|\Samp}$, $\nu = Q_W$, $f(w) = \sigma\sqrt{6/n}$, and $g(w) = \frac{|{\rm gen}(w,\Samp)|}{\sigma\sqrt{6/n}}$, we have
\begin{align*}
&\langle P_{W|\Samp}, |{\rm gen}(\cdot,\Samp)| \rangle 
 \le \sqrt{\frac{12\sigma^2}{n}} \Bigg(\Bigg \langle P_{W|\Samp}, \psi^{-1}_2 \Bigg(\frac{\dif P_{W|\Samp}}{\dif Q_{W}}  \Bigg) \Bigg\rangle + \Bigg\langle Q_W,  \psi_2\Bigg(\frac{|{\rm gen}(\cdot,\Samp|}{\sigma\sqrt{6/n}}\Bigg) \Bigg\rangle \Bigg).
\end{align*}
Taking expectations of both sides w.r.t.\ $P_{\Samp}$ and using Fubini's theorem and  \eqref{eq:psi_2_gen}, we get \eqref{eq:absgen_1}.
	
\paragraph{Proof of Corollary~\ref{cor:absgen_MI}.} Applying Proposition~\ref{prop:psi_KL} conditionally on $\Samp$ gives 
		\begin{align*}
			\Big\langle P_{W|\Samp}, \psi^{-1}_2\Big(\frac{\dif P_{W|\Samp}}{\dif Q_W}\Big)\Big\rangle \le \sqrt{D(P_{W|\Samp}\|Q_W) + 1},
		\end{align*}
		where the divergence $D(P_{W|\Samp} \| Q_W)$, being a function of $\Samp$, is a random variable. Substituting this into \eqref{eq:absgen_1} and using Jensen's inequality, the definition of conditional divergence, and $\sqrt{a+b} \le \sqrt{a} + \sqrt{b} \le \sqrt{2(a+b)}$, we get
		\begin{align*}
			\E[|{\rm gen}(W,\Samp)|] \le \sqrt{\frac{24\sigma^2}{n} \left(D(P_{W|\Samp}\|Q_W|P_{\Samp}) + 4\right)}.
		\end{align*}
		Taking the infimum of both sides w.r.t.\ $Q_W$ and using \eqref{eq:golden_formula}, we get \eqref{eq:absgen_MI}.
		
\paragraph{Proof of Theorem~\ref{thm:absgen_2}.} For each fixed $(w,\tilde{\samp})$, the random variable $\delta(w,\tilde{\samp},\eps) \deq |\sum^n_{i=1}\eps_i(\ell(w,z'_i)-\ell(w,z_i))|$ is $\sigma(w,\tilde{\samp})$-subgaussian, where $\sigma(w,\tilde{\samp}) \deq \big(\sum^n_{i=1}(\ell(w,z'_i)-\ell(w,z_i))^2)^{1/2}$. Thus,
	\begin{align}\label{eq:psi2_zeta}
		\E_{\eps}[\zeta(w,\tilde{\samp},\eps)] \deq \E_{\eps}\Bigg[\psi_2\Bigg(\frac{\delta(w,\tilde{\samp},\eps)}{\sqrt{6}\sigma(w,\tilde{\samp})}\Bigg)\Bigg] \le 1, \qquad \forall (w,\tilde{\samp}).
	\end{align}
Applying Lemma~\ref{lm:decorrelation} conditionally on $(\tilde{\Samp},\eps)$ with $\mu = \bar{P}_{W|\tilde{\Samp}\eps}$, $\nu = Q_{W|\tilde{\Samp}}$, $f(w) = \sigma(w,\tilde{\Samp})$, $g(w) = \zeta(w,\tilde{\Samp},\eps)$, we obtain
\begin{align*}
	& \big\langle \bar{P}_{W|\tilde{\Samp}\eps}, \sigma(\cdot,\tilde{\Samp})\zeta(\cdot,\tilde{\Samp},\eps) \big\rangle \\
&  \le \sqrt{2}\Bigg\langle \bar{P}_{W|\tilde{\Samp}\eps}, \sigma(\cdot,\tilde{\Samp}) \psi^{-1}_2 \Bigg( \frac{\dif\bar{P}_{W|\tilde{\Samp},\eps}}{\dif Q_{W|\tilde{\Samp}}}\Bigg) \Bigg\rangle + \Big\langle Q_{W|\tilde{\Samp}}, \sigma(\cdot,\tilde{\Samp}) \psi_2(\zeta(\cdot,\tilde{\Samp},\eps)) \Big\rangle \\
& \le \sqrt{2} \|\Delta(\tilde{\Samp})\|_{\ell^2}  \left( \Bigg\langle \bar{P}_{W|\tilde{\Samp}\eps},  \psi^{-1}_2 \Bigg( \frac{\dif\bar{P}_{W|\tilde{\Samp},\eps}}{\dif Q_{W|\tilde{\Samp}}}\Bigg) \Bigg\rangle + \Big\langle Q_{W|\tilde{\Samp}},  \psi_2(\zeta(\cdot,\tilde{\Samp},\eps)) \Big\rangle  \right).
\end{align*}
Taking expectations of both sides w.r.t.\ $\tilde{\Samp}$ and $\eps$, then using Fubini's theorem, \eqref{eq:psi2_zeta}, and the inequality $\E_P[|{\rm gen}(W,\Samp)|] \le \frac{1}{n}\E_{\bar{P}}[\delta(W,\tilde{\Samp},\eps)]$, we obtain \eqref{eq:absgen_2}.
	
\paragraph{Proof of Corollary~\ref{cor:absgen_CMI}.} For any $Q_{W|\tilde{\Samp}}$, using Proposition~\ref{prop:psi_KL}, Cauchy--Schwarz, and the independence of $(Z'_i,Z_i)$, we have
		\begin{align*}
			&\E_{\bar{P}} \Bigg[\|\Delta(\tilde{\Samp})\|_{\ell^2} \psi^{-1}_2\Bigg(\frac{\dif \bar{P}_{W|\tilde{\Samp},\eps}}{\dif Q_{W|\tilde{\Samp}}}\Bigg)\Bigg]  \\
		& \le \sqrt{\E_{\bar{P}} [\|\Delta(\tilde{\Samp})\|^2_{\ell^2}] \big(D(\bar{P}_{W|\tilde{\Samp}\eps}\|Q_{W|\tilde{\Samp}}|\bar{P}_{\tilde{\Samp}\eps}) + 1\big)} \\
			& = \sqrt{n \E[\Delta(Z,Z')^2] \big( D(\bar{P}_{W|\tilde{\Samp}\eps}\|Q_{W|\tilde{\Samp}}|\bar{P}_{\tilde{\Samp}\eps}) + 1\big)}.
		\end{align*}
	Substituting this estimate into \eqref{eq:absgen_2}, taking the infimum of both sides w.r.t.\ $Q_{W|\tilde{\Samp}}$, and using \eqref{eq:cond_golden_formula}, we get \eqref{eq:absgen_CMI}.
	
	\subsection{Proofs for Section~\ref{sec:gen_coupling}}
	
\paragraph{Proof of Theorem~\ref{thm:gen_coupling}.} Let
		\begin{align*}
			\delta(u,v,z,z') &\deq \big(\ell(u,z')-\ell(v,z')\big) - \big(\ell(u,z)-\ell(v,z)\big), \\
			\delta(u,v,\tilde{\samp}) &\deq \sum^n_{i=1} \delta(u,v,z_i,z'_i), \\
			\zeta(u,v,\tilde{\samp}) &\deq \frac{|\delta(u,v,\tilde{\samp})|}{\sqrt{6}\sigma(u,v,\tilde{\samp})}.
		\end{align*}
	For each fixed $(u,v) \in \cW^2$, $\delta(u,v,Z_i,Z'_i)$, $1 \le i \le n$, are i.i.d.\ symmetric random variables.  Therefore, introducing a tuple $\eps = (\eps_1,\dots,\eps_n)$ of i.i.d.\ Rademacher random variables independent of everything else and using the fact that the joint distributions of $\big(\delta(u,v,Z_i,Z'_i)\big)^n_{i=1}$ and $\big(\eps_i \delta(u,v,Z_i,Z'_i)\big)^n_{i=1}$ are the same, we see that
		\begin{align*}
			\E[\psi_2(\zeta(u,v,\tilde{\Samp}))] &= \E_{\tilde{\Samp}} \E_{\eps} \Bigg[\psi_2\bigg( \frac{|\sum^n_{i=1}\eps_i \delta(u,v,Z_i,Z_i')|}{\sqrt{6}\sigma(u,v,\tilde{\Samp})}\bigg)\Bigg] \le 1,
		\end{align*}
		where the inequality follows from the fact that, conditionally on $\Samp$ and $\Samp'$,  the random variables $\sum^n_{i=1}\eps_i \delta(u,v,Z_i,Z'_i)$ are $\sigma(u,v,\tilde{\Samp})$-subgaussian.
	
		Now, given $Q_W \in \cP(\cW)$ and a family of couplings $P_{UV|\Samp=\samp} \in \Pi(P_{W|\Samp=\samp},Q_W)$, it follows from the above definitions and from \eqref{eq:coupling_decomp} that
		\begin{align}\label{eq:psi_2_coupling}
			\E[{\rm gen}(W,\Samp)] &\le \frac{1}{n}\E[|\delta(U,V,\tilde{\Samp})|] = \frac{\sqrt{6}}{n}\E[\sigma(U,V,\tilde{\Samp})\zeta(U,V,\tilde{\Samp})].
		\end{align}
	Picking any $\rho_{UV} \in \cP(\cW \times \cW)$ such that $P_{UV|\Samp=\samp} \ll \rho_{UV}$ for all $\samp \in \cZ^n$ and applying Lemma~\ref{lm:decorrelation}, we get
	\begin{align*}
	& \langle P_{UV|\Samp}, \sigma(\cdot,\tilde{\Samp}) \zeta(\cdot,\tilde{\Samp})\rangle \nonumber\\
	& \qquad  \le 2 \bigg \langle P_{UV|\Samp}, \sigma(\cdot,\tilde{\Samp})\psi^{-1}_2\bigg(\frac{\dif P_{UV|\Samp}}{\dif\rho_{UV}}\bigg) \bigg \rangle + \sqrt{2}\bigg \langle \rho_{UV},\sigma(\cdot,\tilde{\Samp})\psi_2\bigg( \frac{\zeta(\cdot,\tilde{\Samp})}{\sqrt{2}}\bigg)\bigg\rangle.
	\end{align*}
	Using the inequality $\psi^2_2(x/\sqrt{2}) \le \psi_2(x)$ (see Proposition~\ref{prop:property_psi_p}(\ref{squared_psi_p})), Cauchy--Schwarz, and \eqref{eq:psi_2_coupling}, we have
	\begin{align*}
		\E\bigg[ \sigma(u,v,\tilde{\Samp})\psi_2\bigg( \frac{\zeta(u,v,\tilde{\Samp})}{\sqrt{2}}\bigg) \bigg] \le \sqrt{\E[\sigma^2(u,v,\tilde{\Samp})]}, \qquad \forall (u,v) \in \cW \times \cW.
	\end{align*}
	Putting everything together and taking expectations w.r.t.\ $\Samp$ and $\Samp'$, we obtain \eqref{eq:gen_coupling}.

\paragraph{Proof of Corollary~\ref{cor:gen_coupling}.} For $\sigma$ defined in Theorem~\ref{thm:gen_coupling}, we have
		\begin{align*}
			\sigma^2(u,v,\tilde{\Samp}) \le 2 \sum^n_{i=1}\Big(\big(\ell(u,Z'_i)-\ell(v,Z'_i)\big)^2 + \big(\ell(u,Z_i)-\ell(v,Z_i)\big)^2\Big).
		\end{align*}
	Taking conditional expectations given $U,V,\Samp$ and using Jensen's inequality gives
	\begin{align*}
		\E[\sigma(U,V,\tilde{\Samp})|U,V,\Samp] &\le \sqrt{\E[\sigma^2(U,V,\tilde{\Samp})|U,V,\Samp]} \\
		&\le \sqrt{2n} \big(d_\ell(U,V) + d_{\Samp,\ell}(U,V)\big).
	\end{align*}
	An analogous argument gives
	\begin{align*}
		\sqrt{\E[\sigma^2(\bar{U},\bar{V},\tilde{\Samp})|\bar{U},\bar{V}]} \le 2\sqrt{n} d_\ell(\bar{U},\bar{V}).
	\end{align*}
	Substituting these estimates into \eqref{eq:gen_coupling} gives the desired result.
	
	\subsection{Proofs for Section~\ref{sec:chaining}}
	
	\paragraph{Proof of Theorem~\ref{thm:gen_orlicz}.}  Using the definition of $\bar{\ell}$, we have
	\begin{align*}
		\E[{\rm gen}(W,\Samp)] &= \frac{1}{n} \sum^K_{k=1} \E\Bigg[\sum^n_{i=1} \big(\bar{\ell}(W_k,Z_i) - \bar{\ell}(W_{k-1},Z_i)\big)\Bigg].
	\end{align*}
Applyng Lemma~\ref{lm:decorrelation} conditionally on $\Samp$ with $f(u,v) = d(u,v)$, $g(u,v) = \frac{|\sum^n_{i=1}(\bar{\ell}(u,,Z_i)-\bar{\ell}(v,Z_i))|}{\sqrt{n}d(u,v)}$, $\mu = P_{W_kW_{k-1}|\Samp}$ and $\nu = \rho_{W_kW_{k-1}}$, taking expectations w.r.t.\ $P_{\Samp}$, and using \eqref{eq:subgaussian_lbar} gives the desired result.

\paragraph{Proof of Corollary~\ref{cor:stoch_chaining}.} For each $k \ge 1$, let $\rho_{W_kW_{k-1}} = P_{W_kW_{k-1}}$. Then
	\begin{align*}
		\frac{\dif P_{W_kW_{k-1}|\Samp}}{\dif P_{W_kW_{k-1}}} = \frac{\dif P_{W_kW_{k-1}\Samp}}{\dif\, (P_{W_kW_{k-1}}\otimes P_{\Samp})} = \frac{\dif P_{\Samp|W_kW_{k-1}}}{\dif P_\Samp} = \frac{\dif P_{\Samp|W_k}}{\dif P_\Samp},
	\end{align*}
	where we have made use of Bayes' rule and the fact that $\Samp \indep W_{k-1} | W_k$. Using this in \eqref{eq:gen_orlicz} together with Proposition~\ref{prop:psi_KL} gives \eqref{eq:stoch_chain_1}. An application of Cauchy--Schwarz and Jensen gives \eqref{eq:stoch_chain_2}.

\paragraph{Proof of Corollary~\ref{cor:weighted_w2}.} For each $k \ge 1$, let $\rho_{W_k W_{k-1}}=P_{W_kW_{k-1}}$. Notice that, by disintegration and the choice of couplings,
	\begin{align*}
	\E[d(\bar{W}_{k}, \bar{W}_{k-1})] = \E\Big[\int d(u,v) P_{W_k W_{k-1}|\Samp}(\dif u, \dif v)\Big] \le \E[{\sf W}_2(P_{W_k|\Samp}, P_{W_{k-1}|\Samp})],
	\end{align*}
	where we have used the fact that $\sW_2(\cdot,\cdot)$ dominates $\sW_1(\cdot,\cdot)$ \cite{villani}. Since $P_{W_k|\Samp}$ are points on the geodesic connecting $P_{W|\Samp}$ and $P_W$, we have
\begin{align*}
	\sum^K_{k=1} {\sW}_2(P_{W_k|\Samp},P_{W_{k-1}|\Samp}) = \sum^K_{k=1}(t_k - t_{k-1})\sW_2(P_{W|\Samp},P_W) = \sW_2(P_{W|\Samp},P_W).
\end{align*}
	Using this together with Cauchy-Schwarz and Proposition~\ref{prop:psi_KL}, we obtain \eqref{eq:weighted_w2}.

	\subsection{Proofs for Section~\ref{sec:tails}}
	
	\paragraph{Proof of Theorem~\ref{thm:PAC_Bayes_1}.} Applying \eqref{eq:decorrelation_2} conditionally on $\Samp$ with $f(w) = \sigma\sqrt{6/n}$, $g(w) = \frac{|{\rm gen}(w,\Samp)|}{\sigma\sqrt{6/n}}$, $\mu = P_{W|\Samp}$, and $\nu = Q_W$, we have
	\begin{align*}
	&	\langle P_{W|\Samp},|{\rm gen}(W,\Samp)|\rangle \\
	& \le \sqrt{\frac{24\sigma^2}{n}} \Bigg(1 + \Bigg\langle P_{W|\Samp}, \psi^{-1}_2\Big(\frac{\dif P_{W|\Samp}}{\dif Q_W}\Big) \Bigg\rangle +\Bigg(\log \Bigg\langle Q_W, \exp\Big(\frac{{\rm gen}^2(\cdot,\Samp)}{6\sigma^2/n}\Big)\Bigg\rangle\Bigg)^{1/2}\Bigg).
	\end{align*}
Since $\ell(w,\Samp)$ is $(\sigma/\sqrt{n})$-subgaussian for all $w$,  Markov's inequality gives, for any $0 < \delta < 1$,
	\begin{align*}
		\Pr\Bigg[\Big\langle Q_W, \exp\Big(\frac{{\rm gen}^2(\cdot,\Samp)}{6\sigma^2/n}\Big)\Big\rangle > \frac{2}{\delta} \Bigg] \le \frac{\delta}{2}\Big\langle P_{\Samp} \otimes Q_W, \exp\Big(\frac{{\rm gen}^2(\cdot,\cdot)}{6\sigma^2/n}\Big)\Big\rangle \le \delta,
	\end{align*}
	which concludes the proof.
	
\paragraph{Proof of Theorem~\ref{thm:transductive_PAC_Bayes}.} The argument is almost identical to the proof of Theorem~\ref{thm:gen_coupling}, with the difference that \eqref{eq:decorrelation_2} is used for decorrelation.
	
		To lighten the notation, let $\pi^{\Samp}_k \deq P_{W_kW_{k-1}|\Samp}$ and $\rho_k \deq \rho_{W_kW_{k-1}}$. Use the same definitions of $\delta,\sigma,\zeta$ as in the proof of Theorem~\ref{thm:gen_coupling}. Then, applying \eqref{eq:decorrelation_2} with $f(\cdot) = \sigma(\cdot,\tilde{\Samp})$, $g(\cdot) = \zeta(\cdot,\tilde{\Samp})$, $\mu = \pi^{\Samp}_k$, $\nu = \rho_k$, we have
		\begin{align*}
			\langle \pi^{\Samp}_k, \sigma(\cdot,\tilde{\Samp})\zeta(\cdot,\tilde{\Samp}) \rangle &\le \sqrt{2} \| \sigma(\cdot,\tilde{\Samp}) \|_{L^2(\rho_k)} + 2 \Big\langle \pi^{\Samp}_k, \sigma(\cdot,\tilde{\Samp})\psi^{-1}_2 \Big(\frac{\dif \pi^{\Samp}_k}{\dif \rho_k}\Big) \Big\rangle \nonumber \\
			& \qquad + 2 \| \sigma(\cdot,\tilde{\Samp}) \|_{L^1(\pi^{\Samp}_k)} \sqrt{\log \langle \rho_k, \exp\big(\zeta^2(\cdot,\tilde{\Samp})\big)\rangle}.
		\end{align*}
	By Markov's inequality and the union bound, for any $0 < \delta < 1$,
		\begin{align*}
			\Pr\Big[ \exists k \text{ s.t. } \Big\langle \rho_k, \exp\big(\zeta^2(\cdot,\tilde{\Samp})\big)\Big\rangle > \frac{2}{p_k \delta} \Big] \le \sum_k \frac{p_k \delta}{2} \Big\langle P_{\tilde{\Samp}} \otimes \rho_k, \exp\big(\zeta^2(\cdot,\cdot)\big) \Big\rangle \le \delta.
		\end{align*}
		Using this together with the estimate $\sigma(\cdot,\tilde{\Samp}) \le 2\sqrt{n}d_{\tilde{\Samp},\ell}(\cdot)$ yields the result in the statement.
	
	\subsection{Proofs for Section~\ref{sec:suprema}}
	
\paragraph{Proof of Theorem~\ref{thm:Fernique}.} Without loss of generality, we assume ${\rm diam}(T) = 1$. Let $Q$ be the Markov kernel from $\Omega$ to $T$ defined by $Q(\cdot|\omega) = \delta_{\tau(\omega)}(\cdot)$; in particular, $\nu(\cdot) = \int_\Omega \bP(\dif\omega) Q(\cdot|\omega)$. 
	
		Fix some $r \ge 2$. For each $k \ge 0$ and each $t \in T$, let $B_k(t) \deq B(t,r^{-k})$. Since $T$ is finite, there exists some $K \in \Naturals$, such that $B_K(t) = \{t\}$ for all $t \in T$. Let $\mu \in \cP(T)$ be given. Define the following sequence of Markov kernels from $\Omega$ to $T$:
		\begin{align*}
			Q_k(\cdot|\omega) \deq \frac{\mu(\cdot \cap B_k(\tau(\omega)))}{\mu(B_k(\tau(\omega)))}, \qquad k = 0, \dots, K.
		\end{align*}
		Observe that $Q_0 = \mu$ and $Q_K = Q$. Then, since
		\begin{align*}
			\langle \bP \otimes \mu, X \rangle = \int_{\Omega \times T} \bP(\dif\omega) \mu(\dif t) X_t(\omega) = \int_T \mu(\dif t) \E[X_t] = 0, 
		\end{align*}
	we can write
		\begin{align*}
			\E[X_\tau] &= \langle \bP \otimes Q, X \rangle - \langle \bP \otimes \mu, X \rangle \\
			&= \langle \bP \otimes Q_K, X \rangle - \langle \bP \otimes Q_0, X \rangle \\
			&= \sum^K_{k=1} \langle \bP \otimes Q_k - \bP \otimes Q_{k-1}, X \rangle  \\
			&= \sum^K_{k=1} \int_\Omega \left( \int_T X_t(\omega) Q_k(\dif t|\omega) - \int_T X_t(\omega)Q_{k-1}(\dif t |\omega)\right) \bP(\dif \omega) \\
			&\le \sum^K_{k=1} \int_\Omega \int_{T \times T} |X_u(\omega) - X_v(\omega)| Q_k(\dif u|\omega) Q_{k-1}(\dif v|\omega) \bP(\dif\omega).
		\end{align*}
		Applying \eqref{eq:decorrelation_1} conditionally on $\omega$ with
		\begin{align*}
			f(u,v) &= \1_{B_k(\tau(\omega))}(u)\1_{B_{k-1}(\tau(\omega))}(v)d(u,v), \\
			g(u,v) &= \frac{|X_u(\omega) - X_v(\omega)|}{d(u,v)}, \\
			\mu(\dif u, \dif v) &= Q_k(\dif u|\omega) \otimes Q_{k-1}(\dif v|\omega), \\
			\nu(\dif u, \dif v) &= \mu(\dif u) \otimes \mu(\dif v),
		\end{align*}
		and using the fact that $Q_k(\cdot|\omega)$ is supported on $B_k(\tau(\omega))$ and $B_k(\tau(\omega)) \subseteq B_{k-1}(\tau(\omega))$, we have
		\begin{align*}
	&		\int_{T \times T} |X_u(\omega) - X_v(\omega)| Q_k(\dif u|\omega) Q_{k-1}(\dif v|\omega)  \nonumber\\
	& \le 2^{1/p} r^{-k+1} \psi^{-1}_p\left(\frac{1}{\mu(B_k(\tau(\omega)))^2}\right) + r^{-k+1}\int_{T \times T} \psi_p\left( \frac{|X_u(\omega)-X_v(\omega)|}{d(u,v)}\right)\mu(\dif u)\mu(\dif v) \\
	& \le 2^{2/p} r^{-k+1} \psi^{-1}_p\left(\frac{1}{\mu(B_k(\tau(\omega)))}\right) + r^{-k+1}\int_{T \times T} \psi_p\left( \frac{|X_u(\omega)-X_v(\omega)|}{d(u,v)}\right)\mu(\dif u)\mu(\dif v),
		\end{align*}
		where the first term in the last step is due to Proposition~\ref{prop:property_psi_p}(\ref{psi_p_power_q}). Then, using the increment condition and Proposition~\ref{prop:sum_2_int}, we have
	\begin{align*}
		\E[X_\tau] &\le 2^{2/p}\sum^{K}_{k=1} r^{-k+1} \int_T \psi^{-1}_p\left(\frac{1}{\mu(B(t,r^{-k}))}\right)\nu(\dif t)+ \sum^K_{k=1} r^{-k+1} \\
		&\le 1 + 2^{2/p} r^2 \int_T \int^1_0  \psi^{-1}_p\left(\frac{1}{\mu(B(t,\eps))}\right) \dif \eps\, \nu(\dif t).
	\end{align*}
Since $1/\mu(B(t,\eps)) \ge 1$, we can apply Proposition~\ref{prop:property_psi_p}(\ref{psi_p_log}) to obtain the inequality
\begin{align*}
		\E[X_\tau] 
		&\le 2^{2/p} r^2 \left(2  + \int_T \int^1_0  \left(\log \frac{1}{\mu(B(t,\eps))}\right)^{1/p} \dif \eps\, \nu(\dif t)\right).
\end{align*}
We can now take $r = 2$ to get the desired result when ${\rm diam}(T) = 1$; the general finite-diameter case follows by straightforward rescaling.

\end{document}